\documentclass{article}

\usepackage{microtype}
\usepackage{graphicx}
\usepackage{subcaption}
\usepackage{booktabs} % for professional tables

\usepackage{hyperref}

\usepackage[accepted]{icml2026}

\usepackage{hyperref}
\usepackage{url}

\usepackage[T1]{fontenc}
\usepackage[utf8]{inputenc}

\usepackage{amsmath}
\usepackage{amsfonts}
\usepackage{amssymb}
\usepackage{booktabs}
\usepackage{tabularx}
\usepackage{multirow}
\usepackage{threeparttable}
\usepackage{graphicx}
\usepackage{booktabs}
\usepackage{tabularx}
\usepackage{dblfloatfix}
\usepackage[normalem]{ulem}
\usepackage{mathtools}
\usepackage{minted}
\usepackage{listings}
\usepackage{xcolor}
\usepackage{breqn}
\usepackage{makecell}
\usepackage{amsmath, amsthm, amssymb}
\usepackage{pifont}% http://ctan.org/pkg/pifont
\usepackage{caption}
\usepackage{wrapfig}
\usepackage{siunitx}
\usepackage{makecell}
\usepackage{float}
\usepackage{thmtools} % gives us thm-restate
\usepackage{placeins} % in preamble
\usepackage{soul}   % for highlighting with wrapping
\usepackage[capitalize,noabbrev]{cleveref}

\theoremstyle{plain}
\newtheorem{theorem}{Theorem}[section]

\theoremstyle{definition}

\newtheorem{remark}[theorem]{Remark}

\newcommand{\cmark}{\ding{51}}%
\newcommand{\xmark}{\ding{55}}%

\newcommand{\dx}{\mathop{}\!\mathrm{d}\mathbf{x}_t}
\newcommand{\dw}{\mathop{}\!\mathrm{d}\mathbf{w}_t}
\newcommand{\dwi}{\mathop{}\!\mathrm{d}\bar{\mathbf{w}}_t}
\newcommand{\dt}{\mathop{}\!\mathrm{d}t}

\newcommand{\xt}{\mathbf{x}_t}
\newcommand{\xs}{\mathbf{x}_0}
\newcommand{\xT}{\mathbf{x}_1}
\newcommand{\y}{\mathbf{y}}
\newcommand{\f}{\mathbf{f}}
\newcommand{\mI}{\mathbb{I}}
\newcommand{\Var}{\mathrm{Var}}

\usepackage[normalem]{ulem} % for \sout

\newcommand{\delete}[1]{}

\usepackage[textsize=tiny]{todonotes}

\icmltitlerunning{Unifying Deep Stochastic Processes for Image Enhancement}

\begin{document}

\twocolumn[
  % \icmltitle{Elucidating the design space of deep stochastic processes for image enhancement}
  \icmltitle{Unifying Deep Stochastic Processes for Image Enhancement}
  % alternatywne tytuły:
  % - Unified deep stochastic processes for image enhancement
  % - All diffusion models are the same.

  % It is OKAY to include author information, even for blind submissions: the
  % style file will automatically remove it for you unless you've provided
  % the [accepted] option to the icml2026 package.

  % List of affiliations: The first argument should be a (short) identifier you
  % will use later to specify author affiliations Academic affiliations
  % should list Department, University, City, Region, Country Industry
  % affiliations should list Company, City, Region, Country

  % You can specify symbols, otherwise they are numbered in order. Ideally, you
  % should not use this facility. Affiliations will be numbered in order of
  % appearance and this is the preferred way.
  \icmlsetsymbol{equal}{*}

  \begin{icmlauthorlist}
    \icmlauthor{Wojciech Kozłowski}{pwr}
    \icmlauthor{Radosław Kuczbański}{pwr}
    \icmlauthor{Kamil Adamczewski}{pwr}
    \icmlauthor{Karol Szczypkowski}{pwr}
    \icmlauthor{Maciej Zieba}{pwr,tpx}
  \end{icmlauthorlist}

  \icmlaffiliation{pwr}{Wrocław University of Science and Technology , Wrocław, Poland}
  \icmlaffiliation{tpx}{Tooploox, Warsaw, Poland}

  \icmlcorrespondingauthor{Wojciech Kozłowski}{wojciech.kozlowski@pwr.edu.pl}

  % You may provide any keywords that you find helpful for describing your
  % paper; these are used to populate the "keywords" metadata in the PDF but
  % will not be shown in the document
  \icmlkeywords{image enhancement, diffusion models, Ornstein-Uhlenbeck processes, diffusion bridges, stochastic differential equations}

  \vskip 0.3in
]

% this must go after the closing bracket ] following \twocolumn[ ...

% This command actually creates the footnote in the first column listing the
% affiliations and the copyright notice. The command takes one argument, which
% is text to display at the start of the footnote. The \icmlEqualContribution
% command is standard text for equal contribution. Remove it (just {}) if you
% do not need this facility.

% Use ONE of the following lines. DO NOT remove the command.
% If you have no special notice, KEEP empty braces:
\printAffiliationsAndNotice{}  % no special notice (required even if empty)
% Or, if applicable, use the standard equal contribution text:
% \printAffiliationsAndNotice{\icmlEqualContribution}

\begin{abstract}
    Deep stochastic processes have recently become a central paradigm for image enhancement, with many methods explicitly conditioning the stochastic trajectory on the degraded input. However, the relationship between these conditional processes and standard diffusion models remains unclear. In this work, we introduce a unified perspective on stochastic image enhancement by classifying recent methods into three families of continuous-time processes: unconditional diffusion models, Ornstein–Uhlenbeck (OU) processes, and diffusion bridges. We show that all of these approaches arise from a common stochastic differential equation (SDE) formulation. This framework makes explicit that seemingly disparate methods differ primarily in their drift and diffusion terms, terminal distributions, and boundary conditions, while schedulers and samplers constitute orthogonal design choices. Leveraging this unification, we conduct a controlled empirical study across multiple image enhancement tasks using identical architectures and training protocols. Our results reveal no consistently dominant method; instead, we identify and disentangle the specific design choices that most strongly influence performance.
Finally, we release ItoVision, a modular PyTorch library that implements the unified framework and enables rapid prototyping and fair comparison of stochastic image enhancement methods.

%Contrary to common assumptions, we find that conditional stochastic processes offer little to no consistent improvement over standard diffusion baselines, and we provide an empirically supported explanation for this behavior.
% , and we provide explanations that attribute this behavior to the effects of endpoint conditioning, noise schedules, and sampling choices. 
\end{abstract}

\section{Introduction}

Image enhancement encompasses a broad class of inverse problems—such as super-resolution, low-light enhancement, colorization, and deraining—in which the goal is to recover a high-quality image from a degraded observation. These problems are inherently ill-posed: a single low-quality input typically corresponds to a distribution of plausible high-quality reconstructions. As a result, modern image enhancement systems must go beyond point estimation and instead model complex conditional distributions, making generative modeling a natural and increasingly dominant paradigm \cite{ledig2016photo,saharia2022palette,saharia2022image}.

Diffusion-based models have recently emerged as a strong baseline for image enhancement, offering stable training and high perceptual quality \cite{ho2020denoising,saharia2022palette,saharia2022image,rombach2022high}. Motivated by the structure of restoration problems, a growing body of work has proposed alternative stochastic processes that explicitly condition the generative trajectory on the degraded input. These methods~\cite{yue2023resshift,delbracio2023indi,li2023bbdm,liu2023i2sb,luo2023image,zhou2023denoising,yue2023goub, zhu2025unidb} aim to interpolate between low- and high-quality images, rather than evolving from unconditional noise, as illustrated in Figure \ref{fig:visual-abstract}. Intuitively, such conditional trajectories promise to reduce uncertainty, restrict the modeled space, and focus learning on recovering missing details.

However, despite their conceptual appeal, the actual benefits of these conditional stochastic processes remain poorly understood. Existing methods differ not only in their stochastic definitions, but also in schedulers, samplers, discretization strategies, and backbone architectures. As a consequence, reported improvements are often entangled with implementation choices, making it difficult to isolate the effect of the underlying process itself. 
In practice, these methods are rarely compared directly, as they are usually considered to come from different frameworks, despite their underlying mathematical structure being very similar \cite{yue2023resshift,zhou2023denoising,zhu2025unidb}. A key obstacle to fair comparison is that method definitions are typically inseparable from their original schedulers and samplers. In this work, we show that this coupling is largely artificial.

\begin{figure*}[t]
    \centering
    \includegraphics[width=0.9\textwidth]{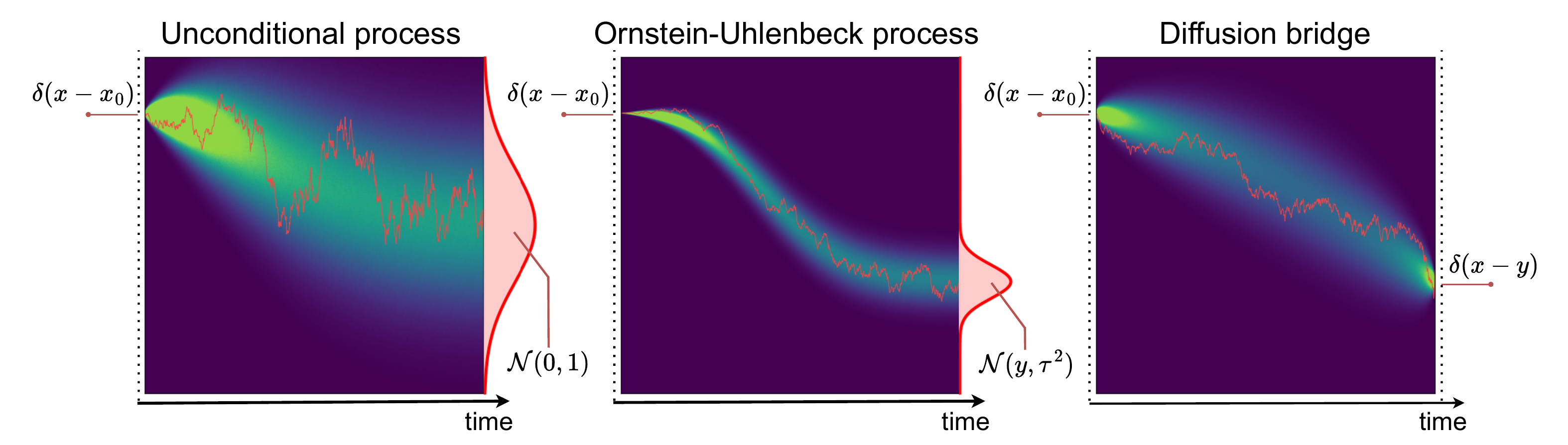}
    \caption{1D visualization of three classes of considered methods and how conditioning with y affects them. Left: Unconditional processes gradually perturb $x_0$ into Gaussian noise $\mathcal{N}(0, 1)$, independently of $y$. Middle: Ornstein–Uhlenbeck processes converge to a terminal distribution centered at $y$ with variance $\tau^2$. Right: Diffusion bridges start at $x_0$ and are conditioned to reach $y$ at the terminal time. For ease of visualization, we consider fixed starting point for each process.}
    \label{fig:visual-abstract}
\end{figure*}

In this work, we introduce a unified perspective on stochastic image enhancement by explicitly classifying recent methods into three families of continuous-time processes: unconditional diffusion models, Ornstein–Uhlenbeck (OU) processes, and diffusion bridges. By reformulating these methods in continuous time, we show that nearly all modern stochastic image enhancement approaches can be expressed as instances of a single stochastic differential equation (SDE), differing only in their drift and diffusion terms, terminal distributions, and boundary conditions. Importantly, schedulers, samplers, and discretization schemes emerge as orthogonal design choices rather than defining characteristics of the method.  
%This perspective builds on earlier unification efforts in generative modeling \cite{song2020sde,karras2022edm}, while extending them to conditional image enhancement settings.

The proposed framework generalizes several existing methods and clarifies their theoretical relationships. In particular, discrete Markov chain–based image enhancement methods \cite{yue2023resshift} can be interpreted as marginalizations of continuous-time stochastic processes, allowing evaluation at arbitrary intermediate times without altering their original behavior. Likewise, flow-based models \cite{lipman2022flow,liu2022flow} admit equivalent stochastic differential equation formulations whose reverse-time ordinary differential equations recover the original flows in distribution and, under appropriate solvers, even at the trajectory level. Finally, we show that Brownian and Schrödinger bridge–based methods \cite{li2023bbdm,liu2023i2sb,zhou2023denoising} arise naturally through Doob’s $h$-transform \cite{doob1984classical}, revealing that several approaches previously treated as distinct are, in fact, special cases or reparameterizations of a common underlying process. This perspective exposes previously overlooked equivalences and helps disentangle genuine modeling differences from implementation choices.

This unified formulation allows us to conduct a large-scale, fully controlled empirical study of stochastic image enhancement methods. Using identical backbones, losses, and training protocols across multiple tasks, we find that most methods achieve remarkably similar performance. In many cases, plain diffusion models match or even outperform more specialized OU- or bridge-based approaches, consistent with observations in other generative modeling domains where architectural and training choices dominate method-specific differences \cite{lucic2018gans,karras2022elucidating}. These results challenge the common assumption that conditioning the stochastic trajectory inherently leads to better restoration quality.

Crucially, we go beyond surface-level benchmarking and analyze why certain methods underperform. We identify low-temperature settings in OU-based methods as a key failure factor, showing that they induce strong collinearity between intermediate states and the input, which limits detail recovery \cite{delbracio2023indi,luo2023irsde}. We further demonstrate that deterministic samplers, while popular for diffusion bridges \cite{yue2023goub,zhou2023denoising}, systematically cause oversmoothing by eliminating the only source of stochasticity in these models. %Extensive ablations on samplers and discretization strategies provide actionable insights and explain observed empirical trends.

Finally, we release a unified and extensible PyTorch library that directly implements the proposed framework, enabling reproducible comparisons and modular experimentation, in the spirit of recent reproducibility-focused efforts in diffusion modeling \cite{von-platen-etal-2022-diffusers}. By standardizing both theory and practice, this work establishes a new baseline for fair and interpretable evaluation of stochastic image enhancement methods.

Our contributions are fourfold:
(1) We propose a unified continuous-time SDE framework that generalizes unconditional models, including flow-based methods, Ornstein–Uhlenbeck processes, and diffusion bridges, decoupling method definitions from schedulers, samplers, and discretization strategies.
(2) Using this framework, we conduct a large-scale, controlled empirical study across multiple image enhancement tasks with identical architectures and training protocols, establishing a new standard for fair comparison.
(3) We identify concrete performance-limiting factors—such as low-temperature–induced collinearity and oversmoothing caused by deterministic sampling—and provide actionable insights via extensive ablations on samplers and discretization schemes.
(4) We release a unified, modular PyTorch library that implements the proposed framework, and supports future method development.
\section{Unified framework}
% In this section, we describe each methods as continuous stochastic processes. First, we describe the general equations that are true for all approaches. Then we analyze in detail various stochastic processes. We organized them into three groups: diffusion models, Ornstein-Uhlenbeck processes, and diffusion bridges.
%In this section, we present popular image enhancement methods by framing them as continuous stochastic processes. We start by introducing the general mathematical framework. After that, we examine 11 methods from the literature, grouped into three categories: diffusion models, Ornstein–Uhlenbeck processes, and diffusion bridges. Lastly, we introduce library that implements all unified methods. We highlight our major contributions with propositions and remarks.

In this section, we formulate a unified mathematical framework for stochastic image enhancement by expressing a broad range of existing methods as continuous-time stochastic processes. We first introduce the general stochastic differential equation (SDE) formulation that underlies all considered approaches. We then analyze representative methods from the literature, which we grouped into three families—unconditional processes, Ornstein--Uhlenbeck processes, and diffusion bridges—and show how each can be recovered as a special case of the unified framework. Finally, we present a modular implementation that instantiates all unified formulations within a single library. Throughout this section, we use propositions and remarks to highlight key contributions of the paper.
% theoretical connections, generalizations, and equivalences between methods.

\subsection{Unified Method Definitions}

A central contribution of this work is the unified formulation of modern stochastic image enhancement methods summarized in Table~\ref{tab:design-choices}. This work provides a standardized representation of each method by explicitly separating the definition of the forward stochastic process from implementation-dependent choices such as schedulers, samplers, and discretization strategies.

Each method is described through three components:  (i) the forward process, (ii) the transition kernels induced by this process, and (iii) the base distribution used to initialize reverse-time generation.

\textbf{Forward process} denoted as $(\mathbf{x}_t \in \mathbb{R}^d)_{t \in [0,1]}$ is fully defined by the forward stochastic differential equation (SDE), the initial distribution, and the distribution of the conditioning variable:
%that follows a stochastic differential equation (SDE) of the general form:
\begin{align}
    &\dx = \f(\xt, t, \y) \dt + g(t)\dw, \label{eq:forward-eq}\\
    &\xs \sim p_{data}(\xs), \quad \y \sim p_{low}(\y | \xs), \nonumber
\end{align}
where the SDE model is defined by a deterministic drift $\f: \mathbb{R}^d \times [0,1] \times \mathbb{R}^d \rightarrow \mathbb{R}^d$ , and a diffusion coefficient $g: [0,1] \rightarrow \mathbb{R}$,  $\mathbf{w}_t \in \mathbb{R}^d$ is a $d$-dimensional Wiener process, $\xs \in \mathbb{R}^d$ is $d$-dimensional random variable denoting high quality data, and $\y \in \mathbb{R}^d$ denotes corresponding low-quality images. The model needs to be constructed so that at the terminal time $t=1$ the process converges to its base distribution $p_1$ which should be easy to sample from. Note that $\y$ depends only on $\xs$ and $\xs$ is independent of Brownian motion, therefore, we can still consider \eqref{eq:forward-eq} in the Ito sense.

\textbf{Transition kernel} is the conditional distribution induced by the forward SDE at time $t$ given $\mathbf{x}_0$ and $\mathbf{y}$. The transition kernel is derived as:
\begin{equation}
    p(\mathbf{x}_t | \mathbf{x}_0, \mathbf{y}) = \mathcal{N}(\mathbf{x}_t; \boldsymbol{\mu}_t(\xs, \y), \sigma_t^2 \mI),
\end{equation}
with specific functions $\boldsymbol{\mu}_t$ and $\sigma_t^2$ for each method.

\textbf{Base distribution} that can be treated as a transition kernel at the terminal time $t=1$. The base distribution is indicated as $p_1(\xT | \y)$ and it can be the Gaussian or Dirac delta depending on the method.

In order to generate a new high-quality image $\hat{\mathbf{x}}$, we need to first sample $\xT \sim p_1(\xT | \y)$ and reverse \eqref{eq:forward-eq}. Thanks to \cite{anderson1982reverse,zhang2021diffusion}\footnote{For complete proof, see \cite{zhang2021diffusion} Appendix H.} we have another SDE, where time goes backward and it almost surely goes through $\mathbf{x}_0$ at time $t=0$:
\begin{align}
    \label{eq:reverse-sde-general}
    &\dx = [\f(\mathbf{x}_t, t, \mathbf{y}) - \frac{\lambda^2 + 1}{2} g(t)^2 \nabla_{\mathbf{x}_t} \log p_t(\mathbf{x}_t | \mathbf{y})]\dt + \nonumber \\
    &\qquad\quad\lambda g(t) \dwi, \\
    &\mathbf{x}_1 \sim p_1(\mathbf{x}_1 | \mathbf{y}), \quad \mathbf{y} \sim p_{low}(\mathbf{y}), \nonumber
\end{align}
where $\bar{\mathbf{w}}_t \in \mathbb{R}^d$ is a d-dimensional reversed Wiener process and $\lambda \in \mathbb{R}^+ \cup \{0\}$ is a nonnegative parameter that controls the level of stochasticity of the reverse process. In principle, \eqref{eq:reverse-sde-general} can be solved numerically using standard samplers such as Euler–Maruyama. In practice, however, this requires access to $\log p_t(\xt | \y)$, which is intractable as it involves integration over $p_{data}$. Instead, we approximate its gradient with a neural network $v_\theta$ with learnable parameters $\theta$ such that:
\begin{equation*}
    \theta^* = \arg \min_{\theta} \mathbb{E} || v_\theta(\xt, t, \y) - \nabla_{\xt} \log p_t(\xt | \xs, \y) ||^2,
\end{equation*}
which is known as the score matching loss function \cite{hyvarinen2005estimation,vincent2011connection,song2019generative}.

%(i) the forward stochastic differential equation (SDE), which specifies the continuous-time dynamics of the process; (ii) the transition kernels induced by this SDE, which characterize the conditional distributions at arbitrary time points; and (iii) the base distribution at the terminal time, which defines the initialization of the reverse-time generative procedure. 

%This decomposition makes explicit which aspects of each method are intrinsic to the stochastic process itself and which are auxiliary design choices.

For clarity, we organize the methods into three broad families: standard diffusion models, Ornstein--Uhlenbeck (OU) processes, and diffusion bridges. Importantly, all methods are defined independently of the specific noise scheduler $(\beta_t > 0)_{t \in [0,1]}$. We denote the Riemann integral of the scheduler by $\alpha_{s,t} = \int_s^t \beta_z \, \mathrm{d}z$ and define $\phi_{s,t} = \exp(-\alpha_{s,t})$. For notational convenience, we use $\alpha_t \equiv \alpha_{0,t}$ and $\phi_t \equiv \phi_{0,t}$. The temperature parameter is denoted by $\tau$ across all methods.

%One of the main contributions of our work is Table \ref{tab:design-choices}, where we provide the unified definitions for each of the methods.
% Table \ref{tab:design-choices} summarizes the definition of each method. 
%The columns are divided into three sections that describe the SDE of the forward process, transition kernels, and base distributions. The methods are divided into three groups: diffusion models, OU processes, and diffusion bridges. 

%Each model is defined independently of the chosen scheduler $(\beta_t > 0)_{t\in [0, 1]}$. We denote the Riemann integral of the scheduler as $\alpha_{s,t} = \int_s^t \beta_s ds$ and $\phi_{s,t} = \exp(-\alpha_{s,t})$. For convenience, we shorten $\alpha_t \equiv \alpha_{0,t}$ and $\phi_t \equiv \phi_{0,t}$. The temperature is denoted as $\tau$ in all the methods. For the parameters of each method, such as the scheduler, sampler, or temperature value, see Table \ref{tab:method-parameters}.

% While a fixed SDE uniquely determines its induced distributions, methods that share the same forward SDE may still differ due to conditioning or endpoint constraints, resulting in different transition kernels. We therefore define method equivalence in terms of matching transition kernels, i.e., identical finite-time conditional distributions, independent of discretization, schedulers, samplers, or implementation details.

We define method equivalence in terms of a shared SDE formulation of the forward process, and consider schedulers, discretizations, samplers, and similar components to be method-independent choices.
This decoupling of stochastic process definitions from numerical choices enables direct and fair comparison between methods that were previously treated as separate frameworks. The specific implementation settings used in our experiments, including schedulers, samplers, and temperature values, are summarized in Table~\ref{tab:method-parameters}.

\subsection{Unconditional processes}
This class of models were originally designed as fully generative models; as a result, the input observation $\y$ does not appear in the definitions of the forward processes, transition kernels, or base distributions. To obtain a conditional model capable of generating an enhanced version of $\y$, the observation must instead be provided as an additional input to the backbone network $v_\theta$ \cite{rombach2022high,saharia2022image,hou23global}. A prominent example of this class is the Diffusion Model (DM) \cite{sohl2015deep, ho2020denoising}, which was initially formulated as a discrete Markov process that progressively transforms the data distribution into Gaussian noise. This framework was later reformulated in continuous time by \cite{song2020score} into two variants—Variance Exploding (VE) and Variance Preserving (VP)—summarized in Table~\ref{tab:design-choices}. In the VE formulation, the expected value of the transition kernel remains constant while its variance grows unbounded, whereas in the VP formulation the drift term $\f(\xt,t)=-\beta_t \xt$ gradually pushes $\xt$ toward zero with a strength controlled by the scheduler $\beta_t$. In a similar way as diffusion model, below we show that flow matching~\cite{lipman2022flow,liu2022flow,geng2026mean} can also be presented in the form of SDE.

% were initially designed to be fully generative; therefore, $\y$ is not used in the definitions of the forward processes, transition kernels, or base distributions. To make the model conditional and generate the enhanced version of the input $\y$, one must add $\y$ as an additional input to the backbone $v_\theta$ \cite{rombach2022high,sr3,hou23global}.

% \textbf{Diffusion model} (DM) \cite{sohl2015deep, ho2020denoising} was originally introduced as a discrete Markov process that gradually transforms the data distribution into Gaussian noise. Later, \cite{song2020score} reformulated it in terms of two continuous processes: Variance Exploding (VE) and Variance Preserving (VP), summarized in Table \ref{tab:design-choices}. In VE, the expected value of the transition kernel stays constant, while the variance grows to infinity. In VP, the definition of drift $\f(\xt, t) = -\beta_t \xt$ pushes the $\xt$ towards zero with a force proportional to the scheduler $\beta_t$. 

\delete{This force counteracts the Wiener process and ensures that the stationary distribution is standard Gaussian $\mathcal{N}(\mathbf{0}, \mI)$. 
Originally, the authors used the linear scheduler with parameters $\beta_{max}=10$ and $\beta_{min}=0.05$\footnote{You might encounter $\beta_{max}=20$, $\beta_{min}=0.1$, but then the SDE is given as $\dx = - \frac{1}{2}\beta_t \xt \dt + \sqrt{\beta_t} \dw$ which is equivalent. We chose these values to highlight the connection with Ornstein-Uhlenbeck process.}. }

% \paragraph{Flow Matching} is derived from Normalizing Flow theory and can be viewed as deterministic vector fields that transport one distribution to another. 
\paragraph{Flow Matching} is derived from normalizing flow theory and formulates generative modeling as the regression of a deterministic vector field that transports one probability distribution to another, guided by optimal transport.

In this work, we show that Flow Matching can also be expressed as the continuous stochastic process for which the reverse ODE (Eq.~\eqref{eq:reverse-sde-general} with $\lambda=0$) recovers the original procedure from \cite{lipman2022flow,liu2022flow,geng2026mean}.

\begin{restatable}{proposition}{FM}\label{prop:fm}
Let $\mathbf{x}_t$ be a continuous stochastic process that follows the SDE
\begin{equation}
    \label{eq:fm-forward}
    \dx = - \beta_t \mathbf{x}_t \dt + \sqrt{2(1 - \phi_t)\beta_t} \dw.
\end{equation}
If the scheduler is defined as $\beta_t = \frac{1}{1 - t}$, then for any $\mathbf{x}_0 \sim p_{data}(\mathbf{x}_0)$ and $\epsilon \sim \mathcal{N}(\mathbf{0}, \mI)$ we have $\mathbf{x}_t = (1 - t) \mathbf{x}_0 + t \epsilon$ for $t \in[0, 1)$, which corresponds to transitions of \cite{liu2022flow}. 
\end{restatable}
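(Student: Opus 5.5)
The SDE is linear in $\mathbf{x}_t$, so I will solve it explicitly with an integrating factor. Then I will check that the resulting Gaussian transition kernel has mean $(1-t)\mathbf{x}_0$ and variance $t^2\mI$. Since the statement "$\mathbf{x}_t = (1-t)\mathbf{x}_0 + t\epsilon$" can only hold in distribution (a single $\epsilon$ cannot represent the whole path), I read it as an identity of the time-$t$ marginals given $\mathbf{x}_0$, i.e. of the transition kernel. That is exactly what the interpolation of \cite{liu2022flow} prescribes.

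First, the scheduler. With $\beta_t = 1/(1-t)$ we get $\alpha_{s,t} = \log\frac{1-s}{1-t}$, so $\phi_{s,t} = \frac{1-t}{1-s}$ and $\phi_t = 1-t$. The diffusion coefficient then simplifies:
\[
g(t)^2 = 2(1-\phi_t)\beta_t = \frac{2t}{1-t}.
\]

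Second, I solve the SDE. Multiplying by $e^{\alpha_t} = 1/\phi_t$ and applying It\^o's formula to $e^{\alpha_t}\mathbf{x}_t$, which has no second-order term because the factor is deterministic, gives
\[
\mathbf{x}_t = \phi_t\mathbf{x}_0 + \int_0^t \phi_{s,t}\, g(s)\,\mathrm{d}\mathbf{w}_s .
\]
For $t<1$ this is well defined, since $\beta$ and $g$ are bounded on $[0,t]$. The stochastic integral is a Wiener integral of a deterministic integrand. It is therefore Gaussian with mean zero, independent of $\mathbf{x}_0$, and by the It\^o isometry has covariance $\sigma_t^2\mI$, where
\[
\sigma_t^2 = \int_0^t \phi_{s,t}^2 g(s)^2\,\mathrm{d}s .
\]
Hence the mean is $\phi_t\mathbf{x}_0 = (1-t)\mathbf{x}_0$.

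Third, the variance, which is the only real computation and the step I expect to need care:
\[
\sigma_t^2 = \int_0^t \frac{(1-t)^2}{(1-s)^2}\cdot\frac{2s}{1-s}\,\mathrm{d}s
= (1-t)^2\int_0^t \frac{2s}{(1-s)^3}\,\mathrm{d}s .
\]
Rather than integrate directly, I will show that $\frac{d}{ds}\frac{s^2}{(1-s)^2} = \frac{2s}{(1-s)^3}$, which a quotient-rule check confirms. The integral is then $t^2/(1-t)^2$, so $\sigma_t^2 = t^2$.

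As a cross-check, $\sigma_t^2$ satisfies the moment ODE $\dot\sigma^2 = -2\beta_t\sigma^2 + g^2$ with $\sigma_0^2 = 0$. Substituting $\sigma_t^2 = t^2$ gives $2t = -\frac{2t^2}{1-t} + \frac{2t}{1-t}$, which holds. This also explains the choice of $g$: in general the ODE is solved by $\sigma_t^2 = (1-\phi_t)^2$. I could present this general identity first and then specialize.

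Conclusion: given $\mathbf{x}_0$, we have $\mathbf{x}_t \sim \mathcal{N}((1-t)\mathbf{x}_0, t^2\mI)$. Equivalently, $\mathbf{x}_t \overset{d}{=} (1-t)\mathbf{x}_0 + t\epsilon$ with $\epsilon \sim \mathcal{N}(\mathbf{0},\mI)$ independent of $\mathbf{x}_0$, for every $t\in[0,1)$. The endpoint $t=1$ is excluded because $\beta_t$ blows up there; the kernel converges weakly to $\mathcal{N}(\mathbf{0},\mI)$ as $t \to 1$.
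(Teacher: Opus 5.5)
Your proposal is correct and follows essentially the same route as the paper. Both use an integrating factor to get the mild solution $\mathbf{x}_t=\phi_t\mathbf{x}_0+\int_0^t\phi_{s,t}g(s)\,\mathrm{d}\mathbf{w}_s$, the zero-mean Wiener integral for the mean, and the It\^o isometry for the variance. The differences are cosmetic: the paper computes the variance for a general scheduler via the substitution $u=\phi_s^{-1}$, obtaining $(1-\phi_t)^2$ before setting $\phi_t=1-t$, and it adds a truncation-of-$\beta_t$ argument for existence and uniqueness on $[0,1)$. You instead specialize first, use the antiderivative $s^2/(1-s)^2$, and recover the general identity only through your moment-ODE cross-check.
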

% \begin{proof}
\textit{Proof sketch.} Eq. \eqref{eq:fm-forward} admits the mild solution:
\begin{equation}
    \xt = \phi_t\xs + \int_0^t \phi_{s,t} g(s) \mathrm{d}\textbf{w}_s.
\end{equation}
The deterministic term specifies the conditional mean $\mathbb{E}[\xt | \xs] = \phi_t \xs$ while the stochastic integral accounts for the variance.
By Ito isometry,
\begin{equation}
    \Var(\xt) = \int_0^t \phi_{s,t}^2 g^2(s) ds,
\end{equation}
which can be computed in closed form and yields $\Var(\xt) = (1 - \phi_t)^2$. For $\beta_t = \frac{1}{1-t}$, we have $\phi_t = 1-t$, and therefore:
\begin{equation}
    \xt = (1 - t) \xs + t \epsilon, \quad \epsilon \sim \mathcal{N}(\mathbf{0}, \mI).
\end{equation}
% \end{proof}

With Proposition~\ref{prop:fm}, we can reverse Eq.~\eqref{eq:fm-forward} using the Euler method to recover the original implementation of \cite{liu2022flow}\footnote{We do not consider rectification procedure.}. At the same time, this formulation allows the reverse-time process to be sampled with different standard samplers. We include this method in the comparison to measure its effectiveness against conditioned processes.

\subsection{Ornstein-Uhlenbeck processes}
The second class of processes considered in our framework consists of generalized Ornstein–Uhlenbeck (OU) processes, defined by the stochastic differential equation:
\begin{equation}
\mathrm{d}\mathbf{x}_t = \beta_t (\boldsymbol{\mu} - \mathbf{x}_t)\mathrm{d}t + g(t)\mathrm{d}\mathbf{w},
\end{equation}
where $\beta_t>0$ and $g(t)\geq 0$ are time-dependent functions. This process converges to a Gaussian stationary distribution with mean $\boldsymbol{\mu}$ and variance governed by $g(t)$. Notably, both diffusion and flow-matching models can be interpreted as special cases of OU processes with $\boldsymbol{\mu}=\mathbf{0}$. 

In this work, we focus on OU processes whose stationary distribution is $p_1(\xT \mid \y)=\mathcal{N}(\xT;\y,\tau^2\mathbf{I})$,
and accordingly identify the conditioning image $\mathbf{y}$ with the mean parameter, i.e., $\boldsymbol{\mu} \equiv \mathbf{y}$. A representative example is IR-SDE~\cite{luo2023image}, which corresponds to a mean-reverting OU process with a diffusion coefficient similar to that used in standard diffusion models. 
In the original work, the method uses a discrete scheduler, that is the integral of the function $\beta(t)$ was calculated numerically. This approach limits both speed and flexibility, as the values of $\int_0^{t_i} \beta(s) ds$ must be recomputed for each discrete timestep $\{t_i\}_{i=0}^K$. In contrast, we derive a closed-form analytical solution that supports arbitrary step counts without parameter recomputation (see Appendix \ref{sec:schedulers}). 
Below, we derive the formulations of ResShift \cite{yue2023resshift} and InDI \cite{delbracio2023indi} within the framework of continuous stochastic processes, showing that both can be interpreted as Ornstein–Uhlenbeck processes.

\paragraph{ResShift} was originally proposed as a discrete-time model defined by a Markov chain that interpolates between a high-quality image $\mathbf{x}_0$ and a Gaussian distribution centered at the degraded image $\mathbf{y}$ with variance $\tau^2$. The one-step forward transition is given by: 
\begin{equation}
    p(\mathbf{x}_t | \mathbf{x}_{t-1}, \mathbf{y}) = \mathcal{N}(\mathbf{x}_t; \mathbf{x}_{t-1} + \delta_t \mathbf{e}, \tau^2\delta_t \mI),
\end{equation}
where $\mathbf{e} = \mathbf{y} - \mathbf{x}_0$ denotes the residual between the low- and high-quality images, $\delta_t$ is a discrete noise scheduler, and $t \in \{0,\dots,T\}$ indexes the diffusion steps. Marginalizing over intermediate steps yields a closed-form transition from $\mathbf{x}_0$ to an arbitrary timestep $t$:
\begin{equation}
    p(\mathbf{x}_t | \mathbf{x}_0, \mathbf{y}) = \mathcal{N}(\mathbf{x}_t; \mathbf{x}_0 + \eta_t \mathbf{e}, \tau^2\eta_t \mI),
\label{eq:resshift}
\end{equation}
where $\eta_t = \sum_{i=0}^{t} \delta_i$.

\begin{restatable}{proposition}{RESSHIFT}\label{prop:resshift}
The Ornstein--Uhlenbeck process defined by the stochastic differential equation
\begin{equation}
\label{eq:resshift-sde}
    \mathrm{d}\mathbf{x}_t = \beta_t (\mathbf{y} - \mathbf{x}_t)\,\mathrm{d}t
    + \tau \sqrt{\beta_t (2 - \phi_t)}\,\mathrm{d}\mathbf{w}_t
\end{equation}
induces the same transition densities as the original ResShift model in Eq.~\eqref{eq:resshift} for $t \in [0,1]$.
\end{restatable}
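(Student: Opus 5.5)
The plan is to solve the linear SDE \eqref{eq:resshift-sde} explicitly, as in the proof sketch of Proposition~\ref{prop:fm}, and read off the Gaussian transition kernel $p(\mathbf{x}_t\mid\mathbf{x}_0,\mathbf{y})$. The SDE is affine in $\mathbf{x}_t$ with deterministic coefficients, since $\mathbf{y}$ is fixed given $\mathbf{x}_0$ and independent of $\mathbf{w}$. Hence the solution is Gaussian and is fully determined by its conditional mean and covariance. Comparing these with Eq.~\eqref{eq:resshift} under the identification $\eta_t = 1-\phi_t$ will then give the claim.

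\textbf{Step 1 (mild solution and mean).} Set $\mathbf{z}_t=\mathbf{x}_t-\mathbf{y}$, so that $\mathrm{d}\mathbf{z}_t=-\beta_t\mathbf{z}_t\,\mathrm{d}t+g(t)\,\mathrm{d}\mathbf{w}_t$ with $g(t)=\tau\sqrt{\beta_t(2-\phi_t)}$. The integrating factor $\phi_t^{-1}$ gives
\begin{equation*}
\mathbf{x}_t=\mathbf{y}+\phi_t(\mathbf{x}_0-\mathbf{y})+\int_0^t\phi_{s,t}\,g(s)\,\mathrm{d}\mathbf{w}_s .
\end{equation*}
Its conditional mean is therefore $\mathbf{x}_0+(1-\phi_t)(\mathbf{y}-\mathbf{x}_0)=\mathbf{x}_0+(1-\phi_t)\mathbf{e}$, which matches Eq.~\eqref{eq:resshift} with $\eta_t=1-\phi_t$.

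\textbf{Step 2 (variance).} By It\^o isometry, and using $\phi_{s,t}=\phi_t/\phi_s$, the covariance is $\sigma_t^2\mI$ with
\begin{equation*}
\sigma_t^2=\tau^2\phi_t^2\int_0^t\frac{\beta_s(2-\phi_s)}{\phi_s^2}\,\mathrm{d}s .
\end{equation*}
The key observation is that $\tfrac{\mathrm{d}}{\mathrm{d}s}\phi_s^{-1}=\beta_s\phi_s^{-1}$ and $\tfrac{\mathrm{d}}{\mathrm{d}s}\phi_s^{-2}=2\beta_s\phi_s^{-2}$. The integrand is therefore exactly $\tfrac{\mathrm{d}}{\mathrm{d}s}\bigl(\phi_s^{-2}-\phi_s^{-1}\bigr)$. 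Integrating from $0$ to $t$ with $\phi_0=1$ gives $\phi_t^{-2}-\phi_t^{-1}$, and multiplying by $\phi_t^2$ yields
\begin{equation*}
\sigma_t^2=\tau^2(1-\phi_t)=\tau^2\eta_t ,
\end{equation*}
which is again consistent with Eq.~\eqref{eq:resshift}. This antiderivative identity is the step I expect to be the crux, although it is a short check once spotted. It explains the peculiar factor $(2-\phi_t)$ in the diffusion coefficient.

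\textbf{Step 3 (matching the scheduler).} It remains to show that any ResShift schedule can be realized. The sequence $\eta_t$ is nondecreasing, since $\delta_i\ge 0$, and satisfies $\eta_0\approx 0$ and $\eta_T<1$ or $\eta_T\to 1$. Given such a sequence, define a continuous scheduler by $\alpha_t=-\log(1-\eta_t)$ and interpolate monotonically between grid points, so that $\beta_t=\dot\alpha_t>0$ and $\phi_t=1-\eta_t$ at the discrete times. The main technical caveat is the endpoint. If $\eta_1=1$, then $\alpha_1=\infty$ and the statement at $t=1$ must be read as a limit; in that case $\mathbf{x}_1\sim\mathcal{N}(\mathbf{y},\tau^2\mI)$ is recovered as the base distribution $p_1$. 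Because both processes have Gaussian kernels with equal mean and covariance for every $t\in[0,1]$, the transition densities coincide.
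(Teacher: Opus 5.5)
Your proof is correct and follows the paper's own argument: the integrating-factor (mild) solution, then It\^o isometry for the variance. Your antiderivative $\phi_s^{-2}-\phi_s^{-1}$ is exactly the paper's substitution $r=\phi_s^{-1}$, which turns the integrand into $2r-1$, and your Step 3 only spells out (endpoint and interpolation included) the identification $\phi_t=1-\eta_t$ that the paper takes as a definition.
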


\textit{Proof sketch.}
To generalize ResShift to continuous time, we define $\phi_t = 1 - \eta_t$ and rewrite the residual explicitly as $\mathbf{e} = \mathbf{y} - \mathbf{x}_0$, turning Eq.~\eqref{eq:resshift} into:
\begin{equation}
    \label{eq:resshift-transition}
    p(\mathbf{x}_t | \mathbf{x}_0, \mathbf{y}) = \mathcal{N}(\mathbf{x}_t; \phi_t \mathbf{x}_0 + (1 - \phi_t)\mathbf{y}, \tau^2(1 - \phi_t)\mI).
\end{equation}
We then seek a continuous-time stochastic process whose transition kernels match Eq.~\eqref{eq:resshift-transition}. Eq.~\eqref{eq:resshift-sde} admits a mild solution of the form
\begin{equation}
\label{eq:resshift-mild-solution}
    \mathbf{x}_t =
    \phi_t \mathbf{x}_0
    + \mathbf{y} \int_0^t \phi_{s,t} \beta_s \,\mathrm{d}s
    + \int_0^t \phi_{s,t} g(s)\,\mathrm{d}\mathbf{w}_s,
\end{equation}
where $\mathbf{y}$ is treated as an independent conditioning variable. The deterministic part evaluates to
$\mathbb{E}[\mathbf{x}_t]
= \phi_t\mathbf{x}_0+(1-\phi_t)\mathbf{y}$, and $\operatorname{Var}(\mathbf{x}_t)
= \int_0^t \phi_{s,t}^2 g(s)^2\mathrm{d}s.$ By choosing $g(t)^2 = \tau^2 \beta_t(2-\phi_t)$
the integral evaluates to
$\tau^2(1-\phi_t)$, recovering the transition kernel in Eq.~\eqref{eq:resshift-transition}. A complete derivation is provided in Appendix~\ref{subsec:resshift-proof}.

Proposition~\ref{prop:resshift} clarifies how ResShift relates to other OU-based methods, such as IR-SDE~\cite{luo2023irsde}, by showing that their differences arise from the specific choice of diffusion coefficients, while both processes share the same drift and stationary distribution. This continuous-time reformulation also enables fair experimental comparisons by decoupling the method definition from discretization choices.

% With proposition \ref{prop:resshift} we can observe the true difference between the definition of ResShift and other OU processes and with fair experimentation we can assess which formulation is better, leading toward better understanding of these methods.

% we can use exponential scheduler and perform ancestral sampling on \eqref{eq:reverse-sde-general} with $\lambda=1$ to recover the original implementation.

\paragraph{InDI}
defined a continuous process with the following formula for intermediate samples:
\begin{equation}
    \label{eq:indi-marginals}
    \mathbf{x}_t = (1 - t) \mathbf{x}_0 + t \mathbf{y} + \tau t \epsilon, \quad \epsilon \sim \mathcal{N}(\mathbf{0}, \mI).
\end{equation}
The generative process starts with $\mathbf{x}_1 \sim \mathcal{N}(\mathbf{x}_1; \mathbf{y}, \tau^2\mI)$ and solves the deterministic ODE using the Euler method.

In this work, we show that this method can be formulated as an OU process. It can be viewed as a generalization of Flow Matching \cite{lipman2022flow,liu2022flow} to arbitrary Gaussian priors, in the same spirit as IR-SDE \cite{luo2023irsde} generalizes diffusion models \cite{ho2020denoising}.

\begin{restatable}{proposition}{INDI}\label{prop:indi}
    The OU-process with the following stochastic differential equation:
    \begin{equation}
        \label{eq:indi-sde}
        \dx = \beta_t(\mathbf{y} - \mathbf{x}_t)\dt + \tau \sqrt{2\beta_t(1 - \phi_t)} \dw.
    \end{equation}
    If the scheduler is defined as $\beta_t = \frac{1}{1 - t}$,
    then for any 
    $\xs \sim p_{data}(\xs)$ and $\epsilon \sim \mathcal{N}(\mathbf{0}, \mI)$ we have the same marginals as \eqref{eq:indi-marginals} for $t \in [0, 1)$.
\end{restatable}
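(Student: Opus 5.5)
The proof follows the same template as Propositions~\ref{prop:fm} and~\ref{prop:resshift}: solve the linear SDE \eqref{eq:indi-sde} with an integrating factor, read off the conditional mean and variance, and specialize to $\beta_t = \frac{1}{1-t}$.

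\textbf{Mild solution.} Conditioned on $(\xs,\y)$, which are independent of $\mathbf{w}_t$, the SDE is linear with deterministic coefficients. Multiplying by $\phi_t^{-1}=\exp(\alpha_t)$ and integrating, exactly as in Eq.~\eqref{eq:resshift-mild-solution}, gives
\begin{equation*}
\xt = \phi_t\xs + \y\int_0^t \phi_{s,t}\beta_s\,\mathrm{d}s + \int_0^t \phi_{s,t}\, g(s)\,\mathrm{d}\mathbf{w}_s,
\qquad g(s)=\tau\sqrt{2\beta_s(1-\phi_s)}.
\end{equation*}
Because the stochastic integral has a deterministic integrand, it is a centered Gaussian. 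Hence $\xt \mid \xs,\y$ is Gaussian.

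\textbf{Mean.} Since $\partial_s \phi_{s,t} = \beta_s\phi_{s,t}$, the deterministic integral equals $1-\phi_t$. The conditional mean is therefore $\phi_t\xs + (1-\phi_t)\y$.

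\textbf{Variance.} By It\^o isometry, using $\phi_{s,t}=\phi_t/\phi_s$, the variance is $\tau^2\int_0^t \phi_{s,t}^2\,2\beta_s(1-\phi_s)\,\mathrm{d}s$. I claim this equals $\tau^2(1-\phi_t)^2$. To verify it, set $V(t)=(1-\phi_t)^2$ and check that it satisfies the variance ODE $V' = -2\beta_t V + 2\beta_t(1-\phi_t)$ with $V(0)=0$. Using $\phi_t' = -\beta_t\phi_t$, we get
\begin{equation*}
V' = 2(1-\phi_t)\beta_t\phi_t = 2\beta_t(1-\phi_t) - 2\beta_t(1-\phi_t)^2,
\end{equation*}
which matches. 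This is the same computation as in Proposition~\ref{prop:fm}, now scaled by $\tau^2$.

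\textbf{Specialization.} For $\beta_t = \frac{1}{1-t}$ we have $\alpha_t = -\log(1-t)$, so $\phi_t = 1-t$ on $[0,1)$. The law of $\xt$ given $(\xs,\y)$ is then $\mathcal{N}\bigl((1-t)\xs + t\y,\ \tau^2t^2\mI\bigr)$. This is exactly the law of $(1-t)\xs + t\y + \tau t\epsilon$ with $\epsilon\sim\mathcal{N}(\mathbf{0},\mI)$ independent of $(\xs,\y)$. Integrating over $\xs\sim p_{data}$ and $\y\sim p_{low}(\cdot\mid\xs)$ gives the marginals of \eqref{eq:indi-marginals} for every $t\in[0,1)$.

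\textbf{Main obstacle.} The only real difficulty is the singularity of $\beta_t$ at $t=1$. The statement is restricted to $t\in[0,1)$, where $\beta_t$ is finite and continuous, so the coefficients are locally integrable and the mild solution is well defined on every $[0,t]$ with $t<1$. The equality claimed is one of marginals, not trajectories, so no further argument is needed. Optionally, I would add a remark that $\xt\to\y+\tau\epsilon$ in distribution as $t\to1$, which matches the base distribution $\mathcal{N}(\y,\tau^2\mI)$.
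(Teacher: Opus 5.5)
Your proposal is correct and follows essentially the same route as the paper: an integrating-factor mild solution, the mean $\phi_t\xs+(1-\phi_t)\y$, It\^o isometry for the variance, and then specialization to $\phi_t=1-t$, with well-posedness on $[0,1)$ coming from the regularity of $\beta_t$ away from $t=1$, as in Proposition~\ref{prop:fm}. The only cosmetic difference is how you obtain $\tau^2(1-\phi_t)^2$: you verify the variance ODE $V'=-2\beta_tV+2\beta_t(1-\phi_t)$ with $V(0)=0$, whereas the paper reuses the substitution $u=\phi_s^{-1}$ from Proposition~\ref{prop:fm} and scales the result by $\tau^2$.
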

\textit{Proof sketch.} Similarly to ResShift, the Eq.~\eqref{eq:indi-sde} yields the solution described in Eq.~\eqref{eq:resshift-mild-solution}. However, due to the differences in the diffusion term $g(t)$ the analytical solution is defined as: 
\begin{equation}
    p(\xt | \xs, \y) = \mathcal{N}(\xt; \phi_t\xs + (1 - \phi_t)\y, \tau^2 (1 - \phi_t)^2 \mI).
\end{equation}

The proof can be found in Appendix \ref{subsec:indi-proof}. With proposition \ref{prop:indi} we can use the reversed scheduler $\beta_t = \frac{1}{1-t}$ with the Euler sampler of reversed \eqref{eq:reverse-sde-general} with $\lambda=0$ to recover the original implementation. 
Similarly to previous propositions, the choice of sampler is independent of the model definition, and therefore we evaluate several variants.

For both ResShift and InDI, the derived form of the forward processes, transition kernels, and base distributions are summarized in Table~\ref{tab:design-choices}.

\subsection{Diffusion bridges}
The final class of models modifies the forward SDE to ensure that the process reaches $\mathbf{y}$ at the terminal time $t=1$ almost surely.
For any SDE with a form
\begin{equation}
    \dx = \f(\mathbf{x}_t, t) \dt + g(t) \dw,
\end{equation}
one can obtain this conditioning by applying the h-transform \cite{doob1984classical}
\begin{equation}
    \dx = [\f(\mathbf{x}_t, t) + g^2(t) \nabla_{\mathbf{x}_t} \log p_1(\mathbf{y} | \mathbf{x}_t)] \dt + g(t) \dw.
\end{equation}
Applying this construction to the VE and VP diffusion models with standard noise schedules results in the Denoising Diffusion Bridge Models (DDBM-VE and DDBM-VP) \cite{zhou2023denoising}.

Similarly, GOUB \cite{yue2023goub} applies the $h$-transform to an Ornstein–Uhlenbeck process with the same parameterization as IR-SDE;  Finally, UniDB \cite{zhu2025unidb} generalizes diffusion bridges through a stochastic optimal control perspective, introducing a terminal cost weighted by $\gamma$ that penalizes deviation from the target $\y$ while regularizing trajectory length; in the limit $\gamma\to\infty$, this formulation recovers the classical $h$-transform, whereas finite $\gamma$ allows a controlled trade-off between endpoint accuracy and path smoothness. Following \cite{zhu2025unidb}, we adopt the same formulation and choice of $\gamma$.

Similarly to IR-SDE, for GOUB and UniDB we derive a closed-form solution of the cosine scheduler, which was computed numerically in the original papers. This approach improves both speed and flexibility, allowing the scheduler to operate with arbitrary timesteps during inference.

Below, we discuss the BBDM \cite{li2023bbdm} and I$^2$SB \cite{liu2023i2sb} methods, and through the perspective of the h-transform, we show that they have an equivalent definition to the DDBM-VE process.

\begin{table*}[t]
\centering
\scriptsize

\caption{\label{tab:design-choices}Different design choices of forward SDE and transition densities of iteration-based image enhancement methods. We organize the methods into three groups: unconditional processes, OU processes and diffusion bridges. We denote $\tau$ as temperature, $\gamma$ as SOC penalty, $\beta_t$ as a noise scheduler, its Riemann integral $\alpha_{s,t} = \int_s^t \beta_z dz$, and $\phi_{s,t} = \exp(-\alpha_{s,t})$. We shorten the notation by $\alpha_t \equiv \alpha_{0,t}$, $\phi_t \equiv \phi_{0,t}$. For original choices of $\beta_t$, $\tau$, or $\gamma$ for each method please see Table \ref{tab:method-parameters}.}
\setcellgapes{3pt}
\makegapedcells
\begin{tabular}{|*{6}{>{$ \displaystyle}c<{$}|}}
\Xhline{3\arrayrulewidth}
\multirow{3}{*}{\makecell{\\[5pt]\textbf{Methods}}} & \multicolumn{2}{c|}{\textbf{Forward SDE}} & \multicolumn{2}{c|}{\textbf{Transition kernel}} & \multirow{2}{*}{\makecell{\\\textbf{Base dist.}}} \\
 & \multicolumn{2}{c|}{$\dx = \f(\xt, t, \y)\dt + g(t)\dw$} & \multicolumn{2}{c|}{$p_t(\xt | \xs, \y) = \mathcal{N}(\xt; \boldsymbol{\mu}_t(\xs, \y), \sigma_t^2 \mI)$} & \\ 
\cline{2-6}
& \f(\xt, t, \y) & g^2(t) & \boldsymbol{\mu}_t(\xs, \y) & \sigma_t^2 & p_1(\xT | \y) \\
\Xhline{4\arrayrulewidth}
\text{DM-VE} & 0 & 
\beta_t & 
\xs &
\alpha_t & 
\mathcal{N}(\xs, \alpha_1 \mI) \\ 
\cline{1-6} 
\text{DM-VP} & 
\multirow{2}{*}{\makecell{\\$-\beta_t \xt$\\}} & 
2\beta_t & 
\multirow{2}{*}{\makecell{\\$\phi_t \xs$\\}} & 
1 - \phi_t^2 & 
\multirow{2}{*}{\makecell{\\$\mathcal{N}(\mathbf{0}, \mI)$\\}} \\ 
\cline{1-1} 
% \text{DDIM} & & & & & \\ 
\cline{1-1} \cline{3-3} \cline{5-5} 
\text{FM} & & 2 (1 - \phi_t) \beta_t & & (1 - \phi_t)^2 & \\ 
\Xhline{4\arrayrulewidth}
\text{IR-SDE} & 
\multirow{3}{*}{\makecell{\\[6pt]$\beta_t (\y - \xt)$\\[6pt]}} & 
2\tau^2 \beta_t & 
\multirow{3}{*}{\makecell{\\[6pt]$\phi_t\xs + (1 - \phi_t)\y$\\[6pt]}} &
\tau^2(1 - \phi_t^2) & 
\multirow{3}{*}{\makecell{\\[6pt]$\mathcal{N}(\y, \tau^2 \mI)$\\[6pt]}} \\ 
\cline{1-1} \cline{3-3} \cline{5-5} 
\text{ResShift} & & \tau^2 (2 - \phi_t)\beta_t & & \tau^2(1 - \phi_t) & \\ 
\cline{1-1} \cline{3-3} \cline{5-5} 
% \hline
\text{InDI} & & 2\tau^2 (1 - \phi_t)\beta_t & & \tau^2(1 - \phi_t)^2 & \\ 
\Xhline{4\arrayrulewidth}
\text{BBDM} & 
\multirow{3}{*}{\makecell{\\$\frac{\beta_t}{\alpha_{t,1}}(\y - \xt)$\\}} & 
\multirow{3}{*}{\makecell{\\$\beta_t$\\}} & 
\multirow{3}{*}{\makecell{\\$\frac{\alpha_{t,1}}{\alpha_1}\xs + \frac{\alpha_t}{\alpha_1}\y$\\}} & 
\multirow{3}{*}{\makecell{\\$\frac{\alpha_t\alpha_{t,1}}{\alpha_1}$\\}} &  
\multirow{9}{*}{\makecell{\\[71pt]$\delta(\mathbf{x} - \y)$}} \\
\cline{1-1} 
\text{DDBM-VE} & & & & & \\
\cline{1-1} 
\text{I$^2$SB} & & & & & \\
\cline{1-5} 
\multirow{2}{*}{\makecell{\\[6pt]DDBM-VP}} & 
\beta_t \big(
\frac{2\phi_{t,1}}{1 - \phi_{t,1}^2}\y & 
\multirow{2}{*}{\makecell{\\[6pt]$2\beta_t$}} & 
\phi_t\frac{1 - \phi_{t,1}^2}{1 - \phi_1^2}\xs  & 
\multirow{2}{*}{\makecell{\\[3pt]$\frac{(1 - \phi_{t,1}^2)(1 - \phi_t^2)}{1 - \phi_1^2}$}} & \\
&  
-\frac{1 + \phi_{t,1}^2}{1 - \phi_{t,1}^2}\xt \big) & & 
+ \phi_{t,1}\frac{1 - \phi_t^2}{1 - \phi_1^2}\y & & \\
\cline{1-5} 
\multirow{2}{*}{\makecell{\\[5pt]GOUB}} & \multirow{2}{*}{\makecell{\\[0pt]$\beta_t \frac{1 + \phi_{t,1}^2}{1 - \phi_{t,1}^2}(\y - \xt)$}} & 
\multirow{4}{*}{\makecell{\\[28pt]$2\tau^2\beta_t$}} & 
\phi_t \frac{1 - \phi_{t,1}^2}{1 - \phi_1^2}\xs & \multirow{4}{*}{\makecell{\\[24pt]$\tau^2\frac{(1 - \phi_{t,1}^2)(1 - \phi_t^2)}{1 - \phi_1^2}$}} & \\ 
& & & + (1 - \phi_t \frac{1 - \phi_{t,1}^2}{1 - \phi_1^2})\y & & \\
\cline{1-2} \cline{4-4}
\multirow{2}{*}{\makecell{\\[4pt]\text{UniDB$^{\dag}$}}} & \beta_t \frac{(\gamma\tau^2)^{-1} + 1 + \phi_{t,1}^2}{(\gamma\tau^2)^{-1} + 1 - \phi_{t,1}^2} & &
\phi_t \frac{\gamma^{-1} + 1 - \phi_{t,1}^2}{\gamma^{-1} + 1 + \phi_1^2}\xs & &  \\
& (\y - \xt) & &
+ (1 - \phi_t \frac{\gamma^{-1} + 1 - \phi_{t,1}^2}{\gamma^{-1} + 1 + \phi_1^2})\y
& & \\
\Xhline{3\arrayrulewidth}
\end{tabular}
\begin{tablenotes}
\item $^{\dag}$ We consider UniDB that modifies GOUB, which was also used as a main example in the original work.
\end{tablenotes}
\end{table*}

\paragraph{Brownian Bridge Diffusion Model (BBDM)}
defines image-to-image translation as a stochastic Brownian bridge diffusion process that translates between the source and target domains.

\begin{remark}\label{rmk:brownian-bridge}
The Brownian bridge definition can be seen as the result of applying the h-transform on the Wiener process. The Wiener process can be generalized to the diffusion VE variant with a constant scheduler $\beta_t = 1$. Therefore, DDBM-VE and this model have the same definition but different scheduler choice.  
\end{remark}

\paragraph{I$^2$SB} approaches the bridge problem from a different perspective. Their theory is rooted in the findings on dynamic Schrodinger bridges (SB) from \cite{chen2021likelihood}, but they considered the scenario, where we can sample from the joint distribution $p_{data, low}(\mathbf{x}, \mathbf{y})$ during training, which is generally not required for SB models. 

\begin{restatable}{remark}{ISB}\label{rmk:isb}
    Deriving the bridge process from \cite{liu2023i2sb} Theory 3.1 with $\f(\xt, t) = \mathbf{0}$ and $g(t) = \sqrt{\beta_t}$ leads to the same result as applying h-transform \cite{doob1984classical} to the exact same SDE $\dx = \sqrt{\beta_t} \dw$. This gives the definition of DDBM-VE \cite{zhou2023denoising}.
\end{restatable}
We elaborate on Remark \ref{rmk:isb} in Appendix \ref{subsec:isb}. As a result, we show that BBDM, DDBM-VE, and I$^2$SB, although derived from different theoretical frameworks and often regarded as fundamentally incomparable \cite{yue2023goub}, are in fact the same method, differing only in their choice of scheduler, as illustrated in Table \ref{tab:design-choices}. Please also refer to Appendix \ref{sec:related} for elaboration on methods not included in our study.

\subsection{Library}
Building on the unified mathematical framework that underlies all considered methods, we release \textit{Ito Vision}, a Python library implemented in PyTorch. The library combines the tested approaches into a single modular framework. This ensures consistency with the theoretical formulation and enables faithful reproduction of our results. Beyond serving as a reference implementation, \textit{Ito Vision} is designed to be easily extensible: by following the unified framework summarized in Table~\ref{tab:design-choices}, researchers can prototype and integrate new methods with minimal effort. We hope this library will provide a practical foundation for future research and accelerate the development of novel approaches in this domain. Further details are provided in Appendix~\ref{sec:library}.
\section{Experiments}
\begin{table*}[t]
\centering
\begin{threeparttable}
\scriptsize
\caption{Results on four selected tasks using ancestral sampling with Network Forward Evaluations (NFE) = $35$. The best values are \textbf{bolded} and second to best are \underline{underscored}. We do not notice any significant improvements of conditional processes over standard diffusion. In fact, in super-resolution and low-light image enhancement diffusion achieves best results. \label{tab:main-results}}
\begin{tabular}{|l|ccc|ccc|ccc|ccc|ccc|}
\Xhline{3\arrayrulewidth}
& \multicolumn{3}{c|}{\textbf{Facial Super-Resolution}} & \multicolumn{3}{c|}{\textbf{Low-light Enhancement}} & \multicolumn{3}{c|}{\textbf{Colorization}} & \multicolumn{3}{c|}{\textbf{Deraining}} & \multicolumn{3}{c|}{\textbf{General Super-Resolution}} \\
\textbf{Model} & PSNR & SSIM & LPIPS & PSNR & SSIM & LPIPS & PSNR & SSIM & LPIPS & PSNR & SSIM & LPIPS & PSNR & SSIM & LPIPS \\
\Xhline{3\arrayrulewidth}

DM-VP & 
\textbf{27.86} & \textbf{0.748} & \textbf{0.181} &
23.01 & 0.685 & \textbf{0.186} &
21.73 & 0.685 & 0.191 & 
29.02 & 0.853 & 0.055 &
24.12 & 0.663 & \underline{0.381} \\

FM & 
27.15 & 0.730 & \underline{0.183} & 
22.57 & 0.678 & 0.209 &
21.64 & 0.685 & 0.191 & 
28.89 & 0.844 & 0.057 &
23.80 & \underline{0.664} & 0.399 \\

\Xhline{2\arrayrulewidth}

IR-SDE & 
26.97 & 0.718 & 0.184 & 
23.07 & 0.679 & 0.237 &
22.02 & 0.691 & 0.187 &
28.15 & 0.827 & 0.061 &
23.01 & 0.622 & 0.430 \\

ResShift & 
\underline{27.66} & \underline{0.746} & 0.184 &
\underline{23.15} & \textbf{0.693} & 0.195 &
21.82 & 0.689 & 0.189 &
29.17 & \underline{0.856} & 0.052 &
\textbf{24.36} & \textbf{0.671} & \textbf{0.378} \\

InDI & 
27.21 & 0.713 & 0.199 & 
21.48 & 0.633 & 0.323 &
22.28 & 0.699 & 0.191 & 
28.17 & 0.816 & 0.069 &
\underline{24.20} & 0.664 & 0.384 \\

\Xhline{2\arrayrulewidth}

BBDM & 
27.14 & 0.728 & 0.252 & 
22.84 & \underline{0.687} & \underline{0.191} &
22.17 & 0.698 & \underline{0.182} & 
\textbf{30.10} & \textbf{0.873} & \textbf{0.049} &
22.50 & 0.611 & 0.444 \\

DDBM-VE & 
27.11 & 0.724 & 0.186 & 
\textbf{23.18} & 0.680 & 0.224 &
\underline{22.36} & \underline{0.700} & \textbf{0.181} & 
28.65 & 0.837 & 0.054 &
23.95 & 0.643 & 0.421 \\

DDBM-VP & 
27.46 & 0.741 & 0.190 & 
22.55 & 0.659 & 0.231 &
22.23 & 0.697 & 0.184 & 
28.94 & 0.842 & 0.052 &
24.07 & 0.641 & 0.419 \\

I$^2$SB & 
27.46 & 0.733 & 0.184 & 
21.86 & 0.655 & 0.245 &
\textbf{22.39} & \textbf{0.701} & \underline{0.182} & 
\underline{29.29} & 0.854 & \underline{0.050} &
22.71 & 0.619 & 0.418 \\

GOUB & 
27.24 & 0.735 & 0.224 & 
22.57 & 0.669 & 0.233 &
21.59 & 0.681 & 0.193 & 
28.38 & 0.831 & 0.061 &
23.30 & 0.637 & 0.412 \\

UniDB & 
27.59 & 0.739 & \textbf{0.181} & 
22.80 & 0.679 & 0.223 &
21.69 & 0.684 & 0.190 & 
28.02 & 0.824 & 0.060 &
23.24 & 0.633 & 0.417 \\

\Xhline{3\arrayrulewidth}
\end{tabular}
\end{threeparttable}
\end{table*}

In this section, we report experiments on four image enhancement tasks: super-resolution, low-light enhancement, colorization, and deraining. 
We first describe the training setup, then compare the methods, and explain the differences. Finally, we show how different samplers and discretization techniques influence the results.

\subsection{Experimental setup} 
For different tasks, we use backbones of varying sizes to study the scalability of the methods, considering both UNet-based architectures and Diffusion Transformers (DiT)~\cite{peebles2022scalable}. Within each task, however, all methods share the same backbone architecture, model size, and dataset setup to ensure a fair comparison. Unless otherwise specified, we evaluate methods using their default settings from the original works, while fixing the network parameterization across models and adopting $\xs$ prediction, which we found to perform best across all methods. To isolate the effect of the underlying process definitions on final performance, all models are trained using mean squared error (MSE) as a common loss function. In addition to this unified evaluation setup, we also report results obtained using the original configurations of each method, including their native parameterizations, loss functions, and samplers.

We evaluate single-image super-resolution on two benchmarks: FFHQ~\cite{karras2019style} downsampled by a factor of $8$ ($64\times64 \rightarrow 512\times512$) and DIV2K~\cite{div2k}. Low-light image enhancement is conducted on the LOL dataset~\cite{wei2018deep}, image colorization on ImageNet~\cite{russakovsky2015imagenet} using a latent diffusion setup, and image deraining on Rain1400~\cite{fu2017removing}. Detailed experimental settings for each task are provided in Appendix~\ref{sec:exp-setup} and Table~\ref{tab:experiment-parameters}.

\subsection{Methods comparison }
We compare unconditional, Ornstein–Uhlenbeck (OU), and diffusion-bridge processes, with quantitative results shown in Table \ref{tab:main-results}. More detailed results can be found in Appendix in Tables \ref{tab:isr-results}, \ref{tab:llie-results}, \ref{tab:ic-results}, \ref{tab:id-results}, and \ref{tab:div2k-results}. Qualitative comparisons of restored output images are in Figures \ref{fig:comparison-isr-woman}, \ref{fig:comparison-id-harbour}, \ref{fig:comparison-llie-bookshelf}, and \ref{fig:comparison-ic-palette} in Appendix. Both quantitative metrics and visual results indicate that, when trained under the same protocol, all methods achieve comparable performance. Further statistical analysis shows a slight advantage for ResShift followed by the diffusion model, as shown in the aggregated results in Table \ref{tab:aggregated} in Appendix. Analyzing results by process family (Table \ref{tab:results_grouped} in Appendix) reveals that performance depends on the task: unconditional processes are favored for super-resolution and low-light enhancement, diffusion bridges are most effective for colorization, and both unconditional and bridge-based methods are competitive for deraining. We then disentangle attributes that impact the differences in performance and discuss consistent trends below.
%(i) InDI performs worst among all OU processes across all tasks, (ii) plain diffusion baselines often outperform more elaborate variants, (iii) UniDB consistently improves the results of GOUB. Below, we analyze each of these observations in more detail.
\paragraph{Temperature influence.} In this work, we identify temperature as a key factor influencing performance, particularly for Ornstein–Uhlenbeck processes. Figure~\ref{fig:indi_temp} shows that the reduced performance of InDI is closely tied to its default low-temperature setting ($\tau=0.06$). The same figure further demonstrates that ResShift ($\tau=2$) exhibits similarly degraded performance when its temperature is lowered to match that of InDI. This indicates that the observed performance drop is not specific to a particular method, but rather reflects a general limitation of conditional stochastic processes under low-temperature regimes. In particular, reducing the temperature decreases the variance of the forward marginals, which in turn limits the diversity and quality of the generated samples. We identify two primary mechanisms through which low temperature leads to poorer performance.

First, under low-temperature settings, the intermediate samples $\mathbf{x}_t$ remain tightly concentrated around the line segment connecting the degraded input $\mathbf{y}$ and the target image $\mathbf{x}_0$, resulting in minimal stochasticity during training. Since the network is conditioned on $(\mathbf{x}_t, \mathbf{y}, t)$, it can exploit this near-deterministic structure by learning to extrapolate the displacement $\mathbf{x}_t - \mathbf{y}$ as a simple function of time, rather than modeling the true score of the conditional distribution. Consequently, the model tends to propagate its previous predictions instead of progressively refining them. During inference, where the location of $\mathbf{x}_t$ depends on earlier network outputs, this extrapolation bias compounds over time, leading to diminishing improvements as sampling proceeds. This behavior is illustrated in Figure~\ref{fig:colinear} in Appendix, where low-temperature models exhibit strong collinearity between $\mathbf{x}_t$, $\mathbf{y}$, and the predicted $\hat{\mathbf{x}}_{0|t}$, accompanied by slower reductions in LPIPS.

Second, low-temperature training restricts the model to a narrow region of the state space between the conditioning and target distributions. During inference, even small deviations from this region—caused by modeling inaccuracies or numerical errors—can push the trajectory into areas that are poorly represented in the training data. Because the injected noise is minimal, these deviations are not smoothed out and instead accumulate over time, further degrading reconstruction quality.

Based on these observations, it is clear that diffusion and flow matching do not have this problem because, in the processes definitions, the intermediate sample $\xt$ does not depend on the $\y$, as these processes are unconditional. Because $\xt$ and $\y$ are independent, network cannot learn to extrapolate and is not biased toward its previous predictions during the inference. Additionally, in this case the forward process diffuses the initial condition $\xs$ almost uniformly in all directions, rather than toward the corresponding $\y$. This neglects the second issue with the risk of out-of-distribution trajectories, since the entire space is modeled uniformly.

\paragraph{Diffusion bridge methods differences.}
% Diffusion bridges can differ not only in their underlying stochastic processes but also in how terminal constraints are imposed and regularized during training and inference. In UniDB, the forward process is regularized by terminating closer to the clean image rather than exactly at the conditioning image $\mathbf{y}$. During inference, sampling still starts from $\mathbf{y}$, which effectively shifts the expected marginals of the learned process. As a result, the network $v_\theta$ is trained on shorter trajectories and learns to apply smaller, more conservative updates, acting as an implicit regularization on the dynamics. This prevents overshooting $\mathbf{x}_0$ and keeps trajectories within better-modeled regions of the data manifold. The performance improvement of UniDB over GOUB can therefore be attributed to this endpoint regularization, as the two methods share the same underlying stochastic process and differ only in the strength of endpoint enforcement controlled by the shift parameter $\gamma$.
Diffusion bridges can differ in both their stochastic processes and in how terminal constraints are enforced. UniDB regularizes the forward process by terminating closer to the clean image rather than exactly at the conditioning image $\y$, which shifts the learned marginals, leading the network $v_\theta$ to train on shorter trajectories and make smaller, more conservative updates. This regularization reduces overshooting of $\mathbf{x}_0$ and keeps trajectories within well-modeled regions of the data manifold. Since UniDB and GOUB share the same stochastic process and differ only in the strength of endpoint enforcement controlled by the parameter $\gamma$, this endpoint regularization explains UniDB’s performance gains in almost all tasks in Table \ref{tab:main-results}.

\textbf{Samplers and discretization techniques comparison.}
In addition, we study several samplers to determine which work best with our considered methods. The results shown in Tables \ref{tab:samplers1} and \ref{tab:samplers2} in Appendix 
reveal that ancestral sampling is the most consistent method for all models. The Euler method on reversed ODE works very well for diffusion and OU processes. 
For all diffusion bridges, deterministic samplers (e.g., Euler ODE, Exponential Integrator ODE, mean-reverting ODE, and second-order Runge-Kutta methods) yield high LPIPS scores ($>0.3$), suggesting that these methods perform best when combined with stochastic samplers. This is because trajectories start from fixed points and the only source of stochasticity in these models is the Brownian motion, which deterministic samplers omit. As a result, the trajectories follow straight lines between $\y$ and $\xs$, converging to the averaged ground truth which is known as the oversmoothing problem.
% We notice that for all diffusion bridges, the usage of deterministic samplers (i.e. Euler ODE, Exponential Integrator (EI) ODE, mean reverting ODE, and all $2$nd order Runge-Kutta methods) results with high LPIPS (above $0.3$), which indicates that this group of methods needs stochastic samplers to work best.

Finally, we conduct a series of experiments with different discretization techniques. The details can be found in Appendix (techniques plotted in Figure \ref{fig:discretization} and quantitative results in Table \ref{tab:discretizations}). However, we find that the discretization choice has no significant impact on final sample quality.

\paragraph{Additional results}
Appendix~\ref{sec:additional_results} includes additional experiments that are not part of the main analysis of the impact of process definition on sample quality. Specifically, we report results obtained using a different backbone architecture (DiT instead of U-Net), as well as results using the original parameter settings of the compared methods.

\section{Conclusion}
In this work, we propose a unified continuous-time framework for stochastic image enhancement that brings together a wide range of existing methods under a common stochastic differential equation formulation. By classifying popular approaches as instances of standard diffusion processes, Ornstein–Uhlenbeck processes, or diffusion bridges, we disentangle core model definitions from schedulers, samplers, and discretization choices, enabling fair and transparent comparison. This perspective reveals that many methods previously treated as distinct differ primarily in design choices such as endpoint conditioning, temperature, and sampling strategy rather than in their underlying stochastic processes. Leveraging the unified framework, we conduct a controlled empirical study across multiple image enhancement tasks and find no consistently dominant method. Instead, we identify the key factors that govern performance differences and explain observed empirical trends. To support reproducibility and future research, we release ItoVision, a modular library that directly implements the unified formulation and facilitates systematic development of new methods.

\section*{Impact Statement}
Our work unifies definitions of existing methods and evaluates them on standard benchmarks. We do not introduce new datasets; therefore, we do not find any ethical concerns associated with this study.

\section*{Acknowledgments}
The work conducted by Wojciech Kozłowski, Radosław Kuczbański  and Maciej Zięba was supported by the National Centre of Science (Poland) grant no. 2021/43/B/ST6/02853. 

Calculations have been carried out in Wroclaw Centre for Networking and Supercomputing (http://www.wcss.pl), grant No. 1766404231.

% \paragraph{Reproducibility statement} We provide the complete code, in the form of a library and framework that was used to train and validate our methods, together with a \textit{README.md} and a requirements file to enable straightforward reproduction of our results.

% \paragraph{Ethics statement} Our work unifies definitions of existing methods and evaluates them on standard benchmarks. We do not introduce new data or methods, therefore, we do not find any ethical concerns associated with this study.

\bibliography{bibliography}
\bibliographystyle{icml2026}

\newpage
\appendix
\clearpage

\twocolumn[
\begin{center}
    \LARGE{
Unifying Deep Stochastic Processes for Image Enhancement}\\
\Large{Appendix}
\end{center}

\vspace{30px}
]

\section{Proofs and methods derivations}
% \subsection{Diffusion}
\subsection{Flow Matching Diffusion}\label{subsec:fm-proof}
\FM*
\begin{proof}
    To construct the solution, we take the Brownian motion in the interval $[0, 1]$ and consider the modified version of the \eqref{eq:fm-forward} in the closed interval $[0, 1]$ where $\beta_t$ is replaced by $\Tilde{\beta}_t$ defined as follows:
    $\Tilde{\beta}_t = \beta_t$, for $t \in [0, 1 - \delta]$ and $\Tilde{\beta}_t = \beta_{1 - \delta}$, for $t \in [1 - \delta, 1]$, for a fixed $\delta \in (0, 1)$. The modified equation has a unique strong solution (see \cite{oksendal2003stochastic} Theory 5.2.1 or \cite{karatzas2014brownian} Theory 5.2.5 and 5.5.2.9). If we take $\delta, \delta' \in (0, 1)$ and $\delta < \delta'$, due to pathwise uniqueness, almost all trajectories of the corresponding solutions $\xt^\delta, \xt^{\delta'}$ agree on $[0, \delta']$. Therefore, $\xt^\delta$ is an extension of the solution $\xt^{\delta'}$ of \eqref{eq:fm-forward} from the interval $[0, 1 - \delta']$ to $[0, 1 - \delta]$.
    The solution can now be constructed for almost all trajectories by taking $\delta \to 0^+$. Similarly, by truncating the interval $[0, 1)$ to $[0, 1 - \delta]$ one proves the pathwise uniqueness.

    We find the solution of \eqref{eq:fm-forward} using Integrating Factor method.
    First, denote that the integrating factor $M$ of \eqref{eq:fm-forward} is equivalent to the definition of $\phi_{s,t}$ 
    \begin{equation}
        M = \exp\left(\int_s^t a(z) dz\right) = \phi_{s,t} = \frac{1 - t}{1- s}.
    \end{equation}
    The mild solution is therefore
    \begin{equation}
        \label{eq:fm-solution}
        \xt = \phi_{t} \xs + \int_0^t \phi_{s,t} g(s) \mathrm{d}\textbf{w}_s
    \end{equation}
    Now, for simplicity, we divide our calculation into expected value $\mathbb{E}[\xt | \xs]$ and variance $\Var(\xt | \xs)$ of transition densities.
 Because the Ito integral has zero mean, only the deterministic part of \eqref{eq:fm-solution} contributes to the expected value
    \begin{equation}
        \mathbb{E}[\xt|\xs] = \phi_t \xs
    \end{equation}
    Similarly, for variance, we consider only the stochastic part of \eqref{eq:fm-solution}.
    \begin{equation}
        \Var(\xt) = \mathbb{E}\left[\left(\int_0^t \phi_{s,t} g(s) \mathrm{d}\textbf{w}_s\right)^2 \right]
    \end{equation}
    We can apply Ito isometry to get
    \begin{equation}
        \label{eq:fm-ito-var}
        \Var(\xt) = \int_0^t \phi_{s,t}^2 g^2(s) ds.
    \end{equation}
    Using exponential properties and linearity of Riemann integral we have
    \begin{equation}
        \phi_{s,t} = \frac{\phi_t}{\phi_s},
    \end{equation}
    we can apply that to \eqref{eq:fm-ito-var} to obtain
    \begin{equation}
        \Var(\xt) = \phi_t^2 \int_0^t \phi_s^{-2} g^2(s) ds.
    \end{equation}
    Recall that $g^2(t) = 2(1 - \phi_t)\beta_t$ which gives
    \begin{equation}
        \Var(\xt) = 2 \phi_t^2 \int_0^t \phi_s^{-2} \beta_s(1 - \phi_s) ds.
    \end{equation}
    We substitute $u = \phi_s^{-1}$, $du = \phi_s^{-1} \beta_s ds$
    \begin{align}
        \nonumber
        \Var(\xt) &= 2\phi_t^2\int_1^{\phi_t^{-1}} u - 1 \; du \\
        \nonumber
        &= 2\phi_t^2 \left(\frac{\phi_t^{-2}}{2} - \phi_t^{-1} + \frac{1}{2}\right)  \\
        \nonumber
        &= 1 - 2 \phi_t + \phi_t^2 \\
        &= (1 - \phi_t)^2 
    \end{align}
    Recall that for our chosen scheduler, we have $\phi_t = 1 - t$, therefore
    \begin{align}
        &\mathbb{E}[\xt|\xs] = (1 - t) \xs \\
        &\Var(\xt) = t^2
    \end{align}
    and we can express intermediate sample $\xt$ as
    \begin{equation}
        \xt = (1-t)\xs + t \epsilon
    \end{equation}
    for some $\xs \sim p_{\text{data}}(\xs)$ and $\epsilon \sim \mathcal{N}(\mathbf{0}, \mI)$. That shows that \eqref{eq:fm-forward} has the same marginals as \cite{liu2022flow}.
    % Let us divide the interval $[0, 1 - \delta]$ into $N$ equal nonoverlaping subsections.
\end{proof}

\subsection{ResShift}\label{subsec:resshift-proof}
\RESSHIFT*
\begin{proof}
    Based on Theory 5.2.1 from \cite{oksendal2003stochastic},  \eqref{eq:resshift-sde} has a strong solution and is unique.

 Similarly to Proposition \ref{prop:fm}, we can find the solution using the integrating factor method; the subsequent steps are very similar, so we omit some comments.
    \begin{equation}
        \xt = \phi_t\xs + \y\int_0^t \phi_{s,t} \beta_s ds + \int_0^t \phi_{s,t} g(s) \mathrm{d}\textbf{w}_s
    \end{equation}
    
    \begin{align}
        \nonumber
        \mathbb{E}[\xt | \xs, \y] &= \phi_t\xs + \y\int_0^t \phi_{s,t} \beta_s ds \\ \nonumber
        &= \phi_t \xs + \phi_t \y \int_0^t \phi_s^{-1} \beta_s ds \\ \nonumber
        &= \phi_t \xs + \phi_t \y \int_1^{\phi_t^{-1}} dr \\ 
        &= \phi_t \xs + (1 - \phi_t) \y 
    \end{align}
    
    \begin{align}
        \nonumber
        \Var(\xt) &= \mathbb{E}\left[\left(\int_0^t \phi_{s,t} g(s) \mathrm{d}\textbf{w}_s\right)^2 \right] \\ \nonumber
        &= \int_0^t \phi_{s,t}^2 g^2(s) ds \\ \nonumber
        &= \tau^2 \phi_t^2 \int_0^t \phi_s^{-2} \beta_s (2 - \phi_s) ds \\ \nonumber
        &= \tau^2 \phi_t^2 \int_1^{\phi_t^{-1}} 2r - 1 \; dr \\ \nonumber
        &= \tau^2 \phi_t^2 \left( \phi_t^{-2} - \phi_t^{-1}\right) \\
        &= \tau^2 \left(1 - \phi_t\right) 
    \end{align}
    Combining expected value and variance, we get the transition density formula from \eqref{eq:resshift-transition}.
\end{proof}

\subsection{InDI}\label{subsec:indi-proof}
\INDI*
\begin{proof}
    The conditions are the same as in \ref{prop:fm}.
    
      Similarly to Proposition \ref{prop:fm}, we can find the solution using the integrating factor method; the subsequent steps are very similar, so we omit some comments.

    \begin{equation}
        \xt = \phi_t \xs + \y \int_0^t \phi_{s,t} \beta_s ds + \int_0^t \tau \phi_{s,t} \sqrt{2 \beta_s (1 - \phi_s) } d\mathbf{w}_s
    \end{equation}
 The expected value is the same as in Proposition \ref{prop:resshift}. The variance is a scaled version of the variance from the Proposition \ref{prop:fm}. Therefore, we have
    \begin{align}
        \mathbb{E}[\xt | \xs, \y] &= \phi_t \xs + (1 - \phi_t)\y \\
        \Var(\xt) &= \tau^2 (1 - \phi_t)^2
    \end{align}

    If we set $\phi_t = 1 - t$ then we get
    \begin{equation}
        p_t(\xt | \xs, \y) = \mathcal{N}(\xt; (1-t)\xs + t\y, \tau^2 t^2 \mI),
    \end{equation}
    which is equivalent to \eqref{eq:indi-marginals}.
\end{proof}

\subsection{I$^2$SB}\label{subsec:isb}
\ISB*

We consider two SDEs where, respectively, time goes forward and backward
\begin{align}
    \label{eq:isb-forward}
    &\dx = [\f(\mathbf{x}_t, t) + g^2(t) \nabla_{\mathbf{x}_t} \log \Psi(\mathbf{x}_t, t)]\dt + g(t) \dw, \\
    \label{eq:isb-backward}
    &\dx = [\f(\mathbf{x}_t, t) - g^2(t) \nabla_{\mathbf{x}_t} \log \hat{\Psi}(\mathbf{x}_t, t)]\dt + g(t) \dwi,
\end{align}
$\xs \sim p_{data}(\xs)$, $\xT \sim p_{low}(\xT)$, and the functions $\Psi$ and $\hat{\Psi}$ are the solution to the following coupled PDEs\footnote{Following \cite{chen2021likelihood}, for brevity we denote $\Psi \equiv \Psi(\mathbf{x}, t)$, $\f \equiv \f(\mathbf{x}, t)$, and $g \equiv g(t)$.}
\begin{equation}
   \left\{ \begin{aligned} 
  \frac{\partial \Psi}{\partial t} &= 
  -\nabla_{\mathbf{x}} \Psi^\intercal \f - \frac{1}{2} \text{Tr}(g^2\nabla_{\mathbf{x}}^2 \Psi), \\
  \frac{\partial \hat{\Psi}}{\partial t} &= 
  -\nabla_{\mathbf{x}} \cdot (\hat{\Psi}\f) + \frac{1}{2} \text{Tr}(g^2 \nabla_{\mathbf{x}}^2 \hat{\Psi}),
\end{aligned} \right.  \\
\end{equation}
\begin{align}
    \nonumber
    s.t. \quad &\Psi(\mathbf{x}, 0)\hat{\Psi}(\mathbf{x}, 0) = p_{data}(\mathbf{x}), \\
    &\Psi(\mathbf{x}, 1)\hat{\Psi}(\mathbf{x}, 1) = p_{low}(\mathbf{x}).
\end{align}
From Theory 3.1 from \cite{liu2023i2sb} when the above system holds, we can consider $\hat\Psi(\cdot, 0), \Psi(\cdot, 1)$ as the boundary distributions and $\nabla_{\xt} \log \hat\Psi(\xt, t)$ and $\nabla_{\xt} \log \Psi(\xt, t)$ are the score functions for the following SDEs, respectively
\begin{align}
    &\dx = \f(\mathbf{x}_t, t)\dt + g(t) \dw, \quad \xs \sim \hat\Psi(\xs, 0)\\
    \label{eq:isb-simple-backward}
    &\dx = \f(\mathbf{x}_t, t)\dt + g(t) \dwi, \quad \xT \sim \Psi(\xT, 1),
\end{align}
with the same $\f$ and $g$ as in \eqref{eq:isb-forward} and \eqref{eq:isb-backward}.
For simplicity, we specify the drift $\f(\xt, t) = \mathbf{0}$ and the diffusion coefficient $g(t) = \sqrt{\beta_t}$ because that was the practical design choice made in \cite{liu2023i2sb}. With that we can easily calculate the transition kernel for \eqref{eq:isb-simple-backward} remembering that time goes backwards
\begin{equation}
    p_t(\xt | \xT) = \mathcal{N}(\xt; \xT, \int_t^1 \beta_s ds \mI).
\end{equation}
To remain consistent with our notation, we denote $\alpha_{t,1} = \int_t^1 \beta_s ds$. With that, we can define the score of \eqref{eq:isb-simple-backward} as $\nabla_{x_t} \log \Psi(\xt, t) = \frac{\xT - \xt}{\alpha_{t,1}}$. Finally, we substitute the definitions of $\f$, $g$, and $\nabla_{x_t} \log \Psi(\xt, t)$ into \eqref{eq:isb-forward} which yields
\begin{equation}
    \dx = \frac{\beta_t}{\alpha_{t,1}}(\xT - \xt) \dt + \sqrt{\beta_t} \dw.
\end{equation}
To further unify the notation, we denote $\y \equiv \xT$, which is true for all the models based on bridges that we consider
\begin{equation}
    \dx = \frac{\beta_t}{\alpha_{t,1}}(\y - \xt) \dt + \sqrt{\beta_t} \dw.
\end{equation}
This corresponds to the h-transform \cite{doob1984classical} applied to variance exploding diffusion process \cite{zhou2023denoising,li2023bbdm}.

\section{Unified schedulers}\label{sec:schedulers}
In this section, we show all the schedulers that were used in the methods we covered. Each scheduler is defined for $t \in [0, 1]$. 

We provide the exact definitions for $\beta_t$ and $\alpha_{s,t} = \int_s^t \beta_z dz$. The $\phi_{s,t} = \exp(-\alpha_{s,t})$ can be easily computed for any scheduler.

\paragraph{Linear} with two parameters $\beta_{min}$, and $\beta_{max}$ that denote the value of $\beta_t$ at time $0$, and $1$. 
\begin{align}
    \beta_t &= (\beta_{max} - \beta_{min}) t + \beta_{min} \\
    \alpha_{s,t} &= \frac{1}{2} (\beta_{max} - \beta_{min}) (t^2 - s^2) + \beta_{min} (t - s)
\end{align}

\paragraph{Cosine} was used among several papers with two parameters $\epsilon$ and $\delta$ that are used to ensure numerical stability. For all methods, we have $$\epsilon=0.008, \delta = 0.005.$$
For clarity, we shall define a few intermediate functions 
\begin{align}
   g(t) &= \cos\left(\frac{t + \epsilon}{1 + \epsilon} \cdot \frac{\pi}{2}\right)^2 \\
   h(t) &= \sin\left(\frac{t + \epsilon}{1 + \epsilon}\pi\right) \\
   f(t) &= 1 - \frac{g(t)}{g(0)}
\end{align}
In previous works \cite{luo2023irsde,yue2023goub,zhu2025unidb}, the integral of the function $f(t)$ was calculated numerically. This approach limits both speed and flexibility, as the values of $\int_0^{t_i} f(s) ds$ must be recomputed for each discrete timestep $\{t_i\}_{i=0}^K$. In contrast, we derive a closed-form analytical solution, given by
\begin{align}
    \nonumber
   F(s,t) = \int_s^t f(z) dz = &t - s + \\
   &\frac{(s - t)\pi + (1 + \epsilon)(h(s) - h(t))}{2 \pi g(0)}.
\end{align}
With that, we can easily define the scheduler
\begin{align}
    \beta_t = - \log(\delta) \frac{f(t)}{F(0, 1)} \\
    \alpha_{s,t}= - \log(\delta) \frac{F(s,t)}{F(0, 1)}
\end{align}

\paragraph{Exponential} with three parameters $\eta_{min}$, $\eta_{max}$, and $p$, where $\eta_{min}$ and $\eta_{max}$ are responsible for a value range of $\phi$, and $p$ influence how fast $\phi$ goes to $\eta_{max}$.

\begin{align}
    \beta_t &= 2 p \log\left(\frac{\eta_{max}}{\eta_{min}}\right) t ^ {p - 1} \eta_{min}^2 \left(\frac{\eta_{max}}{\eta_{min}}\right)^{2 t^p} \phi_t \\
    \alpha_{s,t} &= \log\left(\frac{1 - \left(\eta_{min}^2 \left(\frac{\eta_{max}}{\eta_{min}}\right)^{2s^p}\right)}{1 - \left(\eta_{min}^2 \left(\frac{\eta_{max}}{\eta_{min}}\right)^{2t^p}\right)}\right)
\end{align}

\paragraph{Inversed} is defined so that $\phi_t = 1 - t$ and it is a useful scheduler for methods based on linear interpolation such as Flow Matching or InDI.
\begin{align}
    \beta_t &= \frac{1}{1 - t} \\
    \alpha_{s,t} &= \log\left( \frac{1 - s}{1 - t} \right)
\end{align}

\paragraph{Quadratic Symmetric} is used when we want the transition variance of a diffusion bridge to be symmetric at $t \in [0,1]$.
\begin{align}
    \beta_t &= \left( \left(\sqrt{\beta_{max}} - \sqrt{\beta_{min}}\right) \left(\frac{1}{2} - \bigg\lvert\frac{1}{2} - t\bigg\rvert\right) + \sqrt{\beta_{min}}\right)^2
\end{align}
We consider two scenarios $t \leq 0.5$ and $t > 0.5$.
\begin{align}
    \nonumber
    f(t)^{(1)} = &\left(\sqrt{\beta_{max}} - \sqrt{\beta_{min}}\right)^2 \frac{t^3}{3} + \\
    &\left(\sqrt{\beta_{max}} - \sqrt{\beta_{min}}\right) \sqrt{\beta_{min}} t^2 + \beta_{min} t \\
    \nonumber
    f(t)^{(2)} = &\left(\sqrt{\beta_{max}} - \sqrt{\beta_{min}}\right)^2 \cdot \left(t - t^2 + \frac{t^3}{3}\right) + \\
    &\left(\sqrt{\beta_{max}} - \sqrt{\beta_{min}}\right) \sqrt{\beta_{min}} \cdot (2t - t^2) + \beta_{min} t
\end{align}

\begin{align}
    \alpha_t = \begin{cases} 
        f(t)^{(1)}, &t \leq \frac{1}{2} \\
        f(t)^{(2)} - f(\frac 1 2)^{(2)} + f(\frac 1 2)^{(1)}, \; &t > \frac{1}{2}
    \end{cases} 
\end{align}
\begin{align}
    \alpha_{s, t} = \alpha_t - \alpha_s
\end{align}

\begin{table*}[h]
\centering
\begin{threeparttable}
\scriptsize
\caption{Original choices of used scheduler, sampler, and other method-specific parameters.\label{tab:method-parameters}}
\begin{tabular}{|l|c|c|c|c|c|}
\Xhline{3\arrayrulewidth}
Method & Scheduler $\beta_t$ & Sampler & Parametrization & Loss & Parameters \\
\Xhline{1\arrayrulewidth}
DM-VP\dag & linear & Ancestral sampling & $\epsilon_t$ & $l_2$ norm & - \\
% DDIM & quadratic & EI-ODE & $\epsilon_t$ & $l_2$ norm & - \\
FM & inversed & Euler-ODE & $\xs-\y$ & $l_2$ norm & - \\
\Xhline{1\arrayrulewidth}
IR-SDE & cosine & Euler-Maruyama & $\epsilon_t$ & $l_2$ norm & $\tau = 0.20$ \\
ResShift & exponential & Ancestral sampling & $\xs$ & $l_2$ norm & $\tau = 2.00$ \\
InDI & inversed & Euler-ODE & $\xs$ & $l_1$ norm & $\tau = 0.06$ \\
\Xhline{1\arrayrulewidth}
BBDM & constant & Ancestral sampling & $\epsilon_t$ & $l_2$ norm & - \\
DDBM-VE & linear & 2nd Heun \& Langevin-Heun & karras & $l_2$ norm & - \\
DDBM-VP & linear & 2nd Heun \& Langevin-Heun & karras & $l_2$ norm & - \\
I$^2$SB & quadratic-symmetric & Ancestral sampling & $\epsilon_t$ & $l_2$ norm & - \\
GOUB & cosine & Euler-ODE \& mean-ODE & $\epsilon_t$ & $l_1$ norm & $\tau = 0.34$ \\
UniDB-GOUB & cosine & Euler-ODE \& mean-ODE & $\epsilon_t$ & $l_1$ norm & $\tau = 0.34, \gamma=1\mathrm{e}{4}$ \\
\Xhline{3\arrayrulewidth}
\end{tabular}
\begin{tablenotes}
\item[\dag] For diffusion we consider DDPM \cite{ho2020denoising} implementation.
\end{tablenotes}
\end{threeparttable}
\end{table*}

\section{Library}
\label{sec:library}
\textit{Ito Vision} is a Python library, built on PyTorch, that implements 11 considered methods, together with multiple schedulers, samplers, network parametrizations and  discretization techniques. Collectively, these components support the use of all methods presented in this paper and enable straightforward implementation of novel approaches. The library can be installed via:
\begin{lstlisting}
  $ pip install <path/to/ito_vision>
\end{lstlisting}

The codebase is fully type-hinted and designed for consistency through the use of abstract classes for each module. For example, every method inherits from the \texttt{IterativeRefinementMethod} abstract class, which specifies required functions that correspond to the definitions from Table~\ref{tab:design-choices}, bridging the gap between the mathematical formulations and the code, and enabling easy implementation of new stochastic processes.

Further details and a code example are provided in the library’s \texttt{README.md}, included in the supplementary material.

\section{Implementation details} \label{sec:exp-setup}
\begin{table*}[h!]
\centering
\begin{threeparttable}
\scriptsize
\caption{Detailed parameters choices of training procedure for each image enhancement task.\label{tab:experiment-parameters}}
\begin{tabular}{lccccc}
\Xhline{3\arrayrulewidth}

& Facial Super-Resolution & Colorization & Low-light Enhancement & Deraining & General Super-Resolution \\
\Xhline{3\arrayrulewidth}
Dataset & FFHQ & ImageNet & LOL & Rain100H & DIV2K \\
Iterations & $400$k & $400$k & $150$k & $50$k & $250$k \\
Latent & \xmark & \cmark & \xmark & \xmark & \cmark \\
Batch size & 64 & 256 & 80 & 80 & 32 \\
Random crop size & $256\times256$ & $384\times384$ & $320\times320$ & $320\times320$ & $512\times512$ \\
Learning rate\dag & 1e-4 $\rightarrow$ 1e-7 & 1e-4 $\rightarrow$ 1e-7 & 1e-4 $\rightarrow$ 1e-7 & 1e-4 $\rightarrow$ 1e-7 & 1e-4 $\rightarrow$ 1e-7 \\
Backbone parameters & $119$M & $119$M & $31.3$M & $31.3$M & $119$M \\
Channels & 128 & 128 & 64 & 64 & 128 \\
Depth & 2 & 2 & 2 & 2 & 2 \\
Channel multipliers & 1,1,2,2,4,4 & 1,1,2,2,4,4 & 1,1,2,2,4,4 & 1,1,2,2,4,4 & 1,1,2,2,4,4 \\
Attention head dimension & 64 & 64 & 64 & 64 & 64 \\
\Xhline{3\arrayrulewidth}
\end{tabular}
\begin{tablenotes}
\item[\dag] Decreasing learning rate with respect to cosine annealing LR scheduler.
\end{tablenotes}
\end{threeparttable}
\end{table*}

Below we provide detailed description of the experimental setups for each task. We evaluate the methods using Peak Signal-to-Noise Ratio (PSNR), Structural Similarity Index Measure (SSIM), Learned Perceptual Image Patch Similarity (LPIPS) \cite{zhang2018unreasonable}, and Natural Image Quality Evaluator (NIQE) \cite{mittal2012making}. For image super-resolution on FFHQ and colorization on ImageNet, we also report the Fréchet Inception Distance (FID) \cite{heusel2017gans}, as the test datasets are large enough for this measure. We use LPIPS as the main metric to evaluate model quality because, unlike PSNR and SSIM, it aligns with human perception, and unlike NIQE and FID, it incorporates ground truth, which is important for preserving details.

\paragraph{Facial Image Super-Resolution} For this task, we used the FFHQ dataset \cite{karras2019style} and performed x8 single image super-resolution $64\times64 \rightarrow 512\times512$. In addition to reducing resolution, we also applied JPEG compression to $10\%$ quality and then upscaled back to $512\times512$. Both resizing used bicubic interpolation with anti-aliasing. Each model was trained in $256\times256$ crops with batch size of $64$. 
We split the data set that contains $70$k images into train-val-test splits with $0.98:0.002:0.018$ proportions. For our backbone, we used UNet \cite{ronneberger2015u} with $119$M parameters from the diffuser library \cite{von-platen-etal-2022-diffusers} with the same channel dimensionality as in \cite{dhariwal2021diffusion} (see their Appendix I, Table $12$ ImageNet $256\times256$).
Detailed architectures are shown in Table \ref{tab:experiment-parameters}. Each method was trained for $400$k iterations. 
The final model was selected based on the LPIPS metric \cite{zhang2018unreasonable} on the validation split, measured each $10$k iterations.

\paragraph{Low-Light Image Enhancement} We used the LOL dataset \cite{wei2018deep}, as it contains a relatively extensive collection of $500$ real-world dark and light pairs of the same scene. We used original train and test splits, and we moved 5 pairs from training to validation split as it was not provided. 
We trained the models on $320\times320$ crops with batch size of $80$. The model had $31.3$M parameters and its architecture was the same as in the super-resolution task, with reduced channel dimensions. Each method was trained for $150$k iterations with the validation each $2.5$k iterations. 
Following most of the state-of-the-art methods in this field \cite{zhang2019kindling,cai2023retinexformer,zhou2023pyramid} we involve conditioning on ground-truth average lightness. We injected it through the attention mechanism and adjusted the value channel for each estimation of ground truth $\hat{\mathbf{x}}_{0|t}$ to the given lightness through the HSV color space.

\textbf{Image Colorization} Here, we utilized the ImageNet dataset \cite{russakovsky2015imagenet}. It contains a wide range of real world objects; therefore, it fits this task perfectly as the diversity of objects and their hues makes it well-suited for assessing generalization of validated methods. For training, we used all images from the train split, with width and height between $256$ and $768$. We treat low-quality input $\y$ as the grayscale version of ground truth with three channels to maintain the same dimensionality. We used pretrained VAE from Stable Diffusion 2.1 \cite{rombach2022high} originally used for $\times 4$ super-resolution. 
We use the same architecture of the UNet as in the image super-resolution task. We train it for $400$k iteration with $384\times384$ random crops and a batch size of $256$. 
We evaluated FID on $50$k images from the test split.

\textbf{Image Deraining} For the fourth task, we used the Rain1400 dataset \cite{fu2017removing} with image pairs prepared with synthetic rain applied to low-quality images $\y$. Similarly to other experiments, we report PSNR and SSIM metrics in RGB space.

\textbf{General Image super-resolution} Here, we used DIV2K \cite{div2k} dataset with $\times 4$ degradation, generated using an interpolation method unknown to the model. For this task, we utilized latent diffusion on pretrained VAE from Stable Diffusion 2.1 \cite{rombach2022high}. We trained the UNet of the same architecture as in Facial ISR for $250$k iterations with batch size of $32$ and $128\times128$ latent crops, which corresponds to $512\times512$ training crops in pixel space.

\begin{figure}[t]
    \centering
    \includegraphics[width=0.3\textwidth]{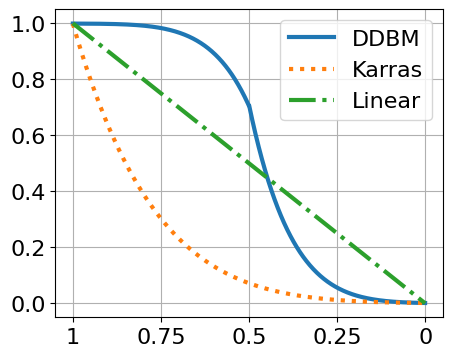}
    \caption{Visualization of different discretization techniques. Karras discretization spends most of the time at low values of $t$, while DDBM places emphasis on both the beginning and the end of the trajectory.}
    \label{fig:discretization}
\end{figure}

\textbf{Transformer backbone} To study the impact of the backbone architecture, we trained each method for image super-resolution on the FFHQ \cite{karras2019style} dataset using DiT \cite{peebles2022scalable} as the backbone. All dataset parameters were kept identical to those used in the Facial Image Super-Resolution setup. The DiT model contains 114 M parameters, chosen to approximately match the size of the UNet backbone. It consists of 12 transformer layers with 14 attention heads per layer, where each head has a dimensionality of 48. The model operates on non-overlapping patches of size $8\times8$. Although smaller patch sizes would most likely improve performance, we found that this configuration closely matches the UNet backbone in terms of computational speed.

\textbf{Original setup} To examine the behavior of each method, we conducted experiments on the FFHQ dataset \cite{karras2019style} using the same training protocol as in the Facial Image Super-Resolution setup. For each method, we keep all parameters at their original values as specified in the original paper, including network parameterization, sampler, loss function, and conditioning strategy. This ensures that our comparison evaluates the methods as complete systems rather than just the processes definition. 

\begin{figure*}[h]
    \centering
    \includegraphics[width=0.99\textwidth]{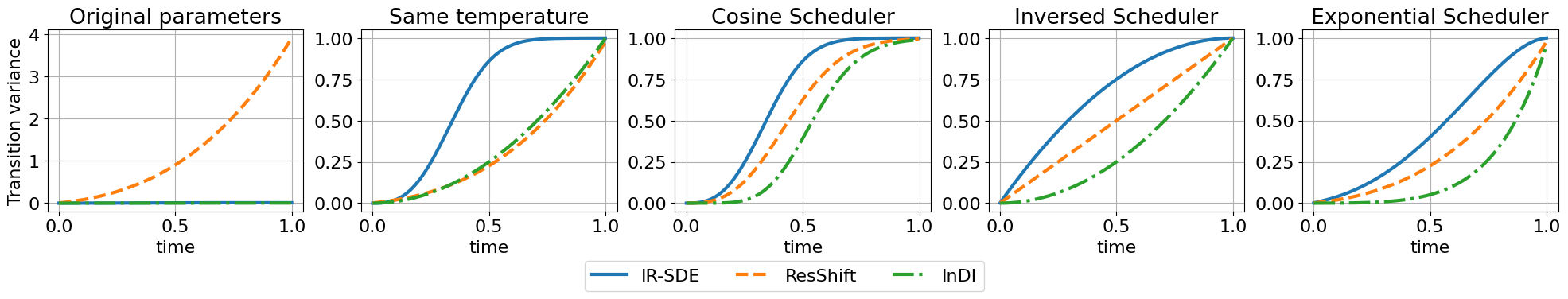}
    \caption{Transition variance $\Var(\xt|\xs)$ for all considered Ornstein-Uhlenbeck processes. Left: Original temperatures $\tau$ and schedulers $\beta_t$. Next: standarized $\tau$ with original schedulers. Middle to right: standarized $\tau$ with specific schedulers. We can see that ResShift has by far the highest temperature but IR-SDE reaches maximum level of noise faster than other methods. InDI has the lowest temperature and adds the noise with the slowest pace.}
    \label{fig:oup-var}
\end{figure*}

%The experiment with transformer has shown us that the choice of the backbone 

\section{The Use of Large Language Models}
We used LLMs to find better synonyms and correct grammar in our text to improve its readability. We read, analyzed, and modified each LLM suggestion, if needed, to ensure that there were no hallucinations that could lead to misinformation.

 \section{Related works}
\label{sec:related}

\textbf{SP for image enhancement} Diffusion models were first introduced for unconditional generation \cite{sohl2015deep,song2019generative,ho2020denoising,song2020score} and later adapted to image enhancement by concatenating the corrupted image with the input \cite{saharia2022palette,saharia2022image,rombach2022high,hou23global}. Another line of work \cite{wang2022zero,yang2023pgdiff} applies guidance to recover a clean version of the low-quality image. Following \cite{bansal2023cold}, which showed that the forward process does not have to be Gaussian, the researchers proposed alternative formulations that directly incorporate input into the diffusion process. Examples include ResShift \cite{yue2023resshift}, which defines a Markov chain converging to a Gaussian centered on the low-quality image; InDI \cite{delbracio2023indi}, which adapts flow matching \cite{lipman2022flow,liu2022flow} using the low-quality image distribution as the base; IR-SDE \cite{luo2023image}, which introduces a mean-reverting Ornstein–Uhlenbeck process; I$^2$SB \cite{liu2023i2sb}, derived from the Schrödinger Bridge theory \cite{chen2021likelihood,de2021diffusion,de2024schrodinger}; DDBM and GOUB \cite{zhou2023denoising,yue2023goub}, based on Doob’s h-transform \cite{doob1984classical}; and UniDB \cite{zhu2025unidb}, which is based on stochastic optimal control (SOC). The goal of this paper is to provide a unified formulation for these methods.

\textbf{Methods not included in our studies} We exclude methods that use the definition of the degradation function \cite{kawar2021snips,kawar2022denoising,chung2022improving,song2023pseudoinverse}. 
% We also exclude RDDM \cite{liu2024residual} (here we need to write somthing why we dont consider it). 
We do not consider methods that improve the architectures or samplers of our chosen methods, such as \cite{hu2025diffusion}, Refusion \cite{luo2023refusion} that focuses on the design of the backbone for the IR-SDE method or I$^3$SB \cite{wang2025implicit} that applies DDIM \cite{song2020denoising} sampling to I$^2$SB. Finally, we do not include the Schrödinger Bridge methods \cite{de2021diffusion,de2024schrodinger,su2022dual,kim2023unpaired}, as they are designed for unpaired data, while we assume paired training data.

\textbf{Unification} We were inspired by EDM \cite{karras2022elucidating}, which unifies and compares diffusion models within a consistent framework. 
\cite{albergo2025stochastic} introduced a common formulation for diffusion- and flow matching based methods. In a similar direction, \cite{tong2023improving} combines the flow-based and Schrödinger Bridge methods for unpaired image translation. For GANs, \cite{lucic2018gans} conducted large-scale studies and demonstrated that different architectures achieve comparable results when trained under the same conditions. In image enhancement, \cite{li2025diffusion} provided a comprehensive evaluation of diffusion-based methods using their original backbones. We want to fill this gap by unifying and comparing alternative stochastic processes for image enhancement in the same training setup.

% \newpage
\section{Additional results} \label{sec:additional_results}

\paragraph{Transformer backbone} The results in Table \ref{tab:dit-results} indicate that the relative performance of the methods remains consistent with the experiments using UNet as the backbone. In particular, ResShift achieves the best overall performance, diffusion models continue to produce reasonable results, and InDI performs poorly, especially at low numbers of function evaluations (NFE). Notably, IR-SDE shows a slight relative improvement when using the DiT backbone compared to UNet.

When comparing backbone architectures, we observe that DiT generally underperforms UNet on average. Only ResShift and GOUB achieve better results with DiT than with UNet. Moreover, in contrast to the UNet-based experiments, the performance with DiT degrades as the NFE increases.

\paragraph{Original setup} The results in Table \ref{tab:original-results} show that most methods achieve slightly worse performance than in our unified setup. For example, the diffusion model performs worse because predicting $\textbf{x}_0$ is more effective than predicting epsilon for image enhancement tasks when the network is conditioned on the input. Next, I$^2$SB does not condition the backbone network and relies only on information from $\y$ through the starting point of the diffusion process. As a result, its performance is notably worse, particularly at low numbers of function evaluations (NFE). 
By using specific samplers, DDBM-VE and DDBM-VP focus more on pixel fidelity (as indicated by high PSNR and SSIM values), while sacrificing some perceptual realism (measured by LPIPS and FID). This setup can work well for relatively simple image enhancement tasks such as colorization or image deraining. However, it is less suitable for super-resolution with a high upscaling ratio, where the model must hallucinate fine details.
Some methods, such as Flow Matching and InDI, improve over their standardized counterparts, suggesting that specific choices of reparameterization, samplers, or loss functions can work particularly well together. This further supports the main observation that, when trained under the same protocol, all methods achieve broadly similar performance.

\begin{table*}[h!]
\centering
\begin{threeparttable}
\scriptsize
\caption{Image super-resolution on FFHQ dataset using ancestral sampling with Network forward evaluations (NFE) = $5, 35,$ and $100$. The best values are \textbf{bolded} and second to best are \underline{underscored}.\label{tab:isr-results}}
\begin{tabular}{|l|ccccc|ccccc|ccccc|}
\Xhline{3\arrayrulewidth}
& \multicolumn{5}{c|}{\textbf{NFE} $=5$} & \multicolumn{5}{c|}{\textbf{NFE} $=35$} &\multicolumn{5}{c|}{\textbf{NFE} $=100$} \\
\textbf{Model} & PSNR & SSIM & LPIPS & NIQE & FID & PSNR & SSIM & LPIPS & NIQE & FID & PSNR & SSIM & LPIPS & NIQE & FID \\
\Xhline{3\arrayrulewidth}
DM-VP & \textbf{29.04} & \textbf{0.7825} & \underline{0.229} & \underline{8.728} & 17.857 & \textbf{27.86} & \textbf{0.7481} & \textbf{0.181} & 7.979 & \underline{17.525} & \textbf{27.23} & \underline{0.7217} & \textbf{0.164} & 7.652 & 17.760 \\
FM & 28.72 & 0.7756 & 0.241 & 8.818 & \underline{17.565} & 27.15 & 0.7295 & \underline{0.183} & 7.839 & 17.745 & 26.48 & 0.6989 & \underline{0.172} & 7.561 & 17.843 \\
\Xhline{2\arrayrulewidth}
IR-SDE & 28.64 & 0.7712 & 0.249 & 8.974 & 18.230 & 26.97 & 0.7179 & 0.184 & \textbf{7.805} & 18.302 & 26.17 & 0.6881 & 0.189 & 7.631 & 18.473 \\
ResShift & 28.84 & 0.7781 & \textbf{0.222} & \textbf{8.679} & \textbf{17.517} & \underline{27.66} & \underline{0.7461} & 0.184 & 7.965 & \textbf{17.422} & \underline{27.23} & \textbf{0.7304} & 0.175 & 7.722 & \textbf{17.549} \\
InDI & 28.56 & 0.7657 & 0.319 & 9.759 & 19.820 & 27.21 & 0.7125 & 0.199 & 7.984 & 18.861 & 26.40 & 0.6801 & 0.195 & 7.643 & 18.893 \\
\Xhline{2\arrayrulewidth}
BBDM & 28.06 & 0.7529 & 0.278 & 9.157 & 19.533 & 27.14 & 0.7282 & 0.252 & 8.550 & 19.435 & 26.83 & 0.7176 & 0.245 & 8.356 & 19.513 \\
DDBM-VE & 28.55 & 0.7683 & 0.237 & 8.791 & 17.830 & 27.11 & 0.7239 & 0.186 & \underline{7.813} & 17.749 & 26.60 & 0.7021 & 0.178 & \underline{7.556} & 18.079 \\
DDBM-VP & 28.75 & 0.7768 & 0.232 & 8.750 & 17.598 & 27.46 & 0.7410 & 0.190 & 7.966 & 17.584 & 26.98 & 0.7217 & 0.176 & 7.663 & \underline{17.581} \\
I$^2$SB & 28.67 & 0.7708 & 0.245 & 8.960 & 17.938 & 27.46 & 0.7333 & 0.184 & 7.873 & 17.786 & 27.03 & 0.7173 & 0.174 & 7.647 & 17.789 \\
GOUB & 28.52 & 0.7680 & 0.270 & 9.197 & 18.688 & 27.24 & 0.7347 & 0.224 & 8.246 & 18.819 & 26.40 & 0.7136 & 0.213 & 7.922 & 19.076 \\
UniDB & \underline{28.91} & \underline{0.7789} & 0.247 & 8.981 & 17.602 & 27.59 & 0.7387 & \textbf{0.181} & 7.883 & 17.589 & 26.28 & 0.7023 & 0.177 & \textbf{7.514} & 18.082 \\
\Xhline{3\arrayrulewidth}
\end{tabular}
\end{threeparttable}
\end{table*}
\begin{table*}[h!]
\centering
\begin{threeparttable}
\scriptsize
\caption{Low-light image enhancement on LOL dataset using ancestral sampling with Network forward evaluations (NFE) = $5, 35,$ and $100$. The best values are \textbf{bolded} and second to best are \underline{underscored}.\label{tab:llie-results}}
\begin{tabular}{|l|cccc|cccc|cccc|}
\Xhline{3\arrayrulewidth}
& \multicolumn{4}{c|}{\textbf{NFE} $=5$} & \multicolumn{4}{c|}{\textbf{NFE} $=35$} &\multicolumn{4}{c|}{\textbf{NFE} $=100$} \\
\textbf{Model} & PSNR & SSIM & LPIPS & NIQE & PSNR & SSIM & LPIPS & NIQE & PSNR & SSIM & LPIPS & NIQE \\
\Xhline{3\arrayrulewidth}
DM-VP & 
23.01 & 0.6855 & \textbf{0.185} & \textbf{5.517} &
23.01 & 0.6846 & \textbf{0.186} & 5.506 &
23.01 & 0.6842 & \textbf{0.186} & 5.514 \\

FM & 
22.78 & 0.6848 & 0.202 & 5.677 &
22.57 & 0.6781 & 0.209 & 5.646 &
22.47 & 0.6756 & 0.215 & 5.668 \\

\Xhline{2\arrayrulewidth}

IR-SDE & 
\textbf{23.43} & \underline{0.6960} & 0.208 & 5.745 &
23.07 & 0.6785 & 0.237 & 5.549 &
23.00 & 0.6736 & 0.248 & 5.536 \\

ResShift & 
23.13 & 0.6926 & 0.194 & 5.756 &
\underline{23.15} & \textbf{0.6928} & 0.195 & 5.764 &
\textbf{23.15} & \textbf{0.6928} & 0.195 & 5.753 \\

InDI & 
21.26 & 0.6409 & 0.278 & \underline{5.530} &
21.48 & 0.6327 & 0.323 & \underline{5.439} &
21.35 & 0.6250 & 0.334 & 5.487 \\

\Xhline{2\arrayrulewidth}

BBDM & 
22.84 & 0.6872 & \underline{0.190} & 5.794 &
22.84 & \underline{0.6869} & \underline{0.191} & 5.788 &
22.84 & \underline{0.6866} & \underline{0.191} & 5.787 \\

DDBM-VE & 
\textbf{23.43} & 0.6933 & 0.201 & 5.555 &
\textbf{23.18} & 0.6795 & 0.224 & \textbf{5.403} &
\underline{23.10} & 0.6752 & 0.234 & \textbf{5.383} \\

DDBM-VP & 
\underline{23.16} & 0.6815 & 0.201 & 5.550 &
22.55 & 0.6587 & 0.231 & 5.443 &
22.38 & 0.6520 & 0.243 & \underline{5.447} \\

I$^2$SB & 
22.01 & 0.6617 & 0.238 & 5.834 &
21.86 & 0.6552 & 0.245 & 5.769 &
21.86 & 0.6547 & 0.248 & 5.751 \\

GOUB & 
23.11 & 0.6919 & 0.207 & 5.599 &
22.57 & 0.6690 & 0.233 & 5.513 &
22.38 & 0.6600 & 0.243 & 5.531 \\

UniDB & 
23.36 & \textbf{0.6989} & 0.197 & 5.625 &
22.80 & 0.6793 & 0.223 & 5.609 &
22.56 & 0.6715 & 0.235 & 5.592 \\

\Xhline{3\arrayrulewidth}
\end{tabular}
\end{threeparttable}
\end{table*}
 \begin{table*}[h!]
\centering
\begin{threeparttable}
\scriptsize
\caption{Image colorization on ImageNet dataset using ancestral sampling with Network forward evaluations (NFE) = $5, 35,$ and $100$. The best values are \textbf{bolded} and second to best are \underline{underscored}. For this task, we also provide FID metric evaluated on $50$k samples.\label{tab:ic-results}}
\begin{tabular}{|l|ccccc|ccccc|ccccc|}
\Xhline{3\arrayrulewidth}
& \multicolumn{5}{c|}{\textbf{NFE} $=5$} & \multicolumn{5}{c|}{\textbf{NFE} $=35$} &\multicolumn{5}{c|}{\textbf{NFE} $=100$} \\
\textbf{Model} & PSNR & SSIM & LPIPS & NIQE & FID & PSNR & SSIM & LPIPS & NIQE & FID & PSNR & SSIM & LPIPS & NIQE & FID \\
\Xhline{3\arrayrulewidth}
DM-VP & 22.09 & 0.6969 & 0.186 & 4.772 & 3.859 & 21.73 & 0.6852 & 0.191 & 4.714 & 3.994 & 21.57 & 0.6797 & 0.194 & 4.690 & 4.085 \\
FM & 22.32 & 0.7033 & 0.183 & 4.795 & \underline{3.452} & 21.64 & 0.6850 & 0.191 & 4.762 & 3.500 & 21.31 & 0.6750 & 0.196 & 4.740 & 3.575 \\
\Xhline{2\arrayrulewidth}
IR-SDE & 22.35 & 0.7035 & 0.186 & 4.803 & 3.597 & 22.02 & 0.6905 & 0.187 & 4.730 & 3.751 & 21.82 & 0.6823 & 0.190 & 4.693 & 3.847 \\
ResShift & 22.39 & 0.7058 & 0.183 & 4.822 & 3.473 & 21.82 & 0.6894 & 0.189 & 4.808 & \underline{3.481} & 21.55 & 0.6809 & 0.194 & 4.801 & \underline{3.487} \\
InDI & 22.39 & 0.7044 & 0.192 & 4.811 & 4.449 & 22.28 & 0.6992 & 0.191 & 4.747 & 4.416 & 22.06 & 0.6914 & 0.190 & 4.715 & 4.444 \\
\Xhline{2\arrayrulewidth}
BBDM & 22.44 & \underline{0.7063} & \textbf{0.179} & 4.823 & \textbf{3.306} & 22.17 & 0.6977 & \underline{0.182} & 4.789 & \textbf{3.323} & 21.98 & 0.6916 & \underline{0.185} & 4.780 & \textbf{3.324} \\
DDBM-VE & \textbf{22.57} & \textbf{0.7064} & \underline{0.180} & \underline{4.729} & 3.842 & \underline{22.36} & \underline{0.7003} & \textbf{0.181} & 4.737 & 3.712 & \underline{22.17} & \underline{0.6940} & \textbf{0.183} & 4.692 & 3.732 \\
DDBM-VP & 22.50 & 0.7052 & 0.182 & \textbf{4.671} & 3.487 & 22.23 & 0.6969 & 0.184 & \textbf{4.664} & 3.536 & 22.02 & 0.6897 & 0.187 & \underline{4.631} & 3.543 \\
I$^2$SB & \underline{22.52} & 0.7058 & 0.183 & 4.838 & 4.402 & \textbf{22.39} & \textbf{0.7009} & \underline{0.182} & 4.846 & 4.259 & \textbf{22.28} & \textbf{0.6972} & \textbf{0.183} & 4.830 & 4.208 \\
GOUB & 22.17 & 0.6993 & 0.188 & 4.770 & 3.629 & 21.59 & 0.6808 & 0.193 & \underline{4.691} & 3.799 & 18.54 & 0.5938 & 0.270 & \textbf{4.598} & 4.256 \\
UniDB & 22.20 & 0.6999 & 0.186 & 4.787 & 3.637 & 21.69 & 0.6835 & 0.190 & 4.727 & 3.806 & 19.32 & 0.6153 & 0.248 & 4.654 & 4.130 \\
\Xhline{3\arrayrulewidth}
\end{tabular}
\end{threeparttable}
\end{table*}

\begin{table*}[h!]
\centering
\begin{threeparttable}
\scriptsize
\caption{Image deraining on Rain1400 dataset using ancestral sampling with Network forward evaluations (NFE) = $5, 35,$ and $100$. The best values are \textbf{bolded} and second to best are \underline{underscored}.\label{tab:id-results}}
\begin{tabular}{|l|cccc|cccc|cccc|}
\Xhline{3\arrayrulewidth}
& \multicolumn{4}{c|}{\textbf{NFE} $=5$} & \multicolumn{4}{c|}{\textbf{NFE} $=35$} &\multicolumn{4}{c|}{\textbf{NFE} $=100$} \\
\textbf{Model} & PSNR & SSIM & LPIPS & NIQE & PSNR & SSIM & LPIPS & NIQE & PSNR & SSIM & LPIPS & NIQE \\
\Xhline{3\arrayrulewidth}
DM-VP & 29.92 & 0.8703 & 0.052 & 4.356 & 29.02 & 0.8528 & 0.055 & 4.243 & 28.56 & 0.8450 & 0.058 & 4.239 \\
FM & 29.98 & 0.8702 & 0.050 & 4.290 & 28.89 & 0.8444 & 0.057 & 4.154 & 28.46 & 0.8352 & 0.062 & 4.153 \\
\Xhline{2\arrayrulewidth}
IR-SDE & 29.54 & 0.8627 & \underline{0.048} & \underline{4.217} & 28.15 & 0.8269 & 0.061 & 4.104 & 27.52 & 0.8115 & 0.070 & 4.112 \\
ResShift & 29.69 & 0.8670 & 0.050 & 4.283 & 29.17 & \underline{0.8562} & 0.052 & 4.217 & 28.99 & \underline{0.8527} & 0.052 & 4.197 \\
InDI & 29.93 & 0.8633 & 0.050 & \textbf{4.194} & 28.17 & 0.8159 & 0.069 & \textbf{4.081} & 27.60 & 0.7995 & 0.079 & \textbf{4.087} \\
\Xhline{2\arrayrulewidth}
BBDM & \textbf{30.37} & \textbf{0.8784} & \underline{0.048} & 4.366 & \textbf{30.10} & \textbf{0.8731} & \textbf{0.049} & 4.349 & \textbf{30.04} & \textbf{0.8723} & \textbf{0.049} & 4.348 \\
DDBM-VE & 29.55 & 0.8595 & 0.050 & 4.293 & 28.65 & 0.8372 & 0.054 & 4.133 & 28.37 & 0.8276 & 0.058 & 4.105 \\
DDBM-VP & 29.82 & 0.8632 & 0.049 & 4.298 & 28.94 & 0.8422 & 0.052 & 4.162 & 28.60 & 0.8315 & 0.056 & 4.109 \\
I$^2$SB & \underline{30.07} & \underline{0.8715} & \textbf{0.047} & 4.264 & \underline{29.29} & 0.8542 & \underline{0.050} & 4.165 & \underline{29.14} & 0.8508 & \underline{0.051} & 4.158 \\
GOUB & 29.79 & 0.8673 & 0.050 & 4.277 & 28.38 & 0.8313 & 0.061 & 4.102 & 27.73 & 0.8140 & 0.072 & \underline{4.097} \\
UniDB & 29.38 & 0.8606 & 0.049 & 4.239 & 28.02 & 0.8237 & 0.060 & \underline{4.092} & 27.29 & 0.8031 & 0.073 & 4.116 \\
\Xhline{3\arrayrulewidth}
\end{tabular}
\end{threeparttable}
\end{table*}

\FloatBarrier

\begin{table*}[h!]
\centering
\begin{threeparttable}
\scriptsize
\caption{Image super resolution on DIV2K dataset using ancestral sampling with Network forward evaluations (NFE) = $5, 35,$ and $100$. The best values are \textbf{bolded} and second to best are \underline{underscored}.\label{tab:div2k-results}}
\begin{tabular}{|l|cccc|cccc|cccc|}
\Xhline{3\arrayrulewidth}
& \multicolumn{4}{c|}{\textbf{NFE} $=5$} & \multicolumn{4}{c|}{\textbf{NFE} $=35$} &\multicolumn{4}{c|}{\textbf{NFE} $=100$} \\
\textbf{Model} & PSNR & SSIM & LPIPS & NIQE & PSNR & SSIM & LPIPS & NIQE & PSNR & SSIM & LPIPS & NIQE \\
\Xhline{3\arrayrulewidth}
DDPM & 
24.22 & 0.6667 & \underline{0.377} & 5.240 & 
24.12 & 0.6626 & \underline{0.381} & 5.134 & 
24.09 & 0.6608 & \underline{0.383} & 5.104 \\

FM & 
24.08 & \textbf{0.6767} & 0.386 & 5.223 &
23.80 & \underline{0.6636} & 0.399 & 4.907 & 
23.67 & 0.6581 & 0.405 & 4.808 \\

\Xhline{3\arrayrulewidth}

IR-SDE & 
23.52 & 0.6464 & 0.398 & 5.044 & 
23.01 & 0.6222 & 0.430 & 4.714 & 
22.80 & 0.6121 & 0.444 & 4.663 \\

ResShift & 
\textbf{24.42} & \underline{0.6731} & \textbf{0.376} & 5.169 & 
\textbf{24.36} & \textbf{0.6710} & \textbf{0.378} & 5.107 & 
\textbf{24.34} & \textbf{0.6703} & \textbf{0.378} & 5.088 \\

InDI & 
23.53 & 0.6491 & 0.404 & \underline{4.983} & 
22.50 & 0.6107 & 0.444 & \underline{4.657} & 
22.14 & 0.5983 & 0.456 & \textbf{4.597} \\

\Xhline{3\arrayrulewidth}

BBDM & 
24.25 & 0.6651 & 0.382 & 5.077 & 
\underline{24.20} & 0.6636 & 0.384 & 5.040 & 
\underline{24.18} & \underline{0.6626} & 0.385 & 5.017 \\

DDBM-VE & 
24.03 & 0.6673 & 0.378 & 5.073 & 
23.95 & 0.6434 & 0.421 & 5.097 & 
23.86 & 0.6317 & 0.443 & 5.146 \\

DDBM-VP & 
\underline{24.38} & 0.6690 & 0.394 & 5.132 & 
24.07 & 0.6407 & 0.419 & 5.030 & 
23.85 & 0.6229 & 0.446 & 5.085 \\

I$^2$SB & 22.97 & 0.6319 & 0.405 & \textbf{4.778} & 
22.71 & 0.6190 & 0.418 & \textbf{4.653} & 
22.65 & 0.6162 & 0.421 & \underline{4.634} \\

GOUB & 
23.68 & 0.6573 & 0.388 & 5.121 & 
23.30 & 0.6367 & 0.412 & 4.738 & 
23.10 & 0.6286 & 0.421 & 4.664 \\

UniDB & 
23.71 & 0.6558 & 0.393 & 5.179 & 
23.24 & 0.6327 & 0.417 & 4.727 & 
23.02 & 0.6243 & 0.427 & 4.655 \\

\Xhline{3\arrayrulewidth}

\end{tabular}
\end{threeparttable}
\end{table*}

\begin{table*}[t]
\centering
\begin{threeparttable}
\scriptsize
\caption{Aggregated results of all five image enhancement tasks from Table \ref{tab:main-results}. We used three types of aggregation which have different sensitivity to outliers. The aggregated results shows that ResShift is the best method overall and diffusion model is second to best.\label{tab:aggregated}}
\begin{tabular}{|l|rrr|rrr|rrr|}
\Xhline{3\arrayrulewidth}
& \multicolumn{3}{c|}{\textbf{Z-score}} & \multicolumn{3}{c|}{\textbf{Min Max Norm}} & \multicolumn{3}{c|}{\textbf{Average Rank}} \\
\textbf{Model} & PSNR & SSIM & LPIPS & PSNR & SSIM & LPIPS & PSNR & SSIM & LPIPS \\
\Xhline{3\arrayrulewidth}

DM-VP & \underline{0.61} & \underline{0.61} & \underline{0.51} & \underline{0.69} & \underline{0.72} & \underline{0.76} & \underline{8.0} & \underline{7.8} & \textbf{8.4} \\
FM & -0.32 & 0.08 & 0.12 & 0.40 & 0.56 & 0.65 & 4.4 & 6.0 & 6.6 \\

\Xhline{2\arrayrulewidth}

IR-SDE & -0.51 & -0.61 & -0.30 & 0.36 & 0.36 & 0.54 & 4.2 & 4.0 & 4.6 \\
ResShift & \textbf{0.70} & \textbf{0.90} & \textbf{0.67} & \textbf{0.72} & \textbf{0.81} & \textbf{0.82} & \textbf{9.0} & \textbf{9.4} & \textbf{8.4} \\
InDI & -0.37 & -0.74 & -0.94 & 0.42 & 0.36 & 0.36 & 5.6 & 4.4 & 3.2 \\

\Xhline{2\arrayrulewidth}

BBDM & 0.13 & 0.34 & -0.15 & 0.54 & 0.64 & 0.58 & 5.8 & 6.8 & 6.6 \\
DDBM-VE & 0.35 & 0.11 & 0.38 & 0.64 & 0.59 & 0.75 & 7.0 & 6.6 & 6.4 \\
DDBM-VP & 0.42 & 0.14 & 0.25 & 0.65 & 0.60 & 0.70 & 6.6 & 6.2 & 5.8 \\
I$^2$SB & -0.06 & -0.02 & 0.40 & 0.50 & 0.55 & \underline{0.76} & 6.6 & 6.0 & 6.4 \\
GOUB & -0.62 & -0.49 & -0.76 & 0.31 & 0.39 & 0.39 & 4.2 & 4.2 & 3.2 \\
UniDB & -0.34 & -0.33 & -0.17 & 0.40 & 0.43 & 0.57 & 4.6 & 4.6 & 6.4 \\

\Xhline{3\arrayrulewidth}
\end{tabular}
\end{threeparttable}
\end{table*}

\begin{table*}[h!]
\centering
\begin{threeparttable}
\scriptsize
\caption{Image super resolution on FFHQ dataset with DIT backbone using ancestral sampling with Network forward evaluations (NFE) = $5, 35,$ and $100$. The best values are \textbf{bolded} and second to best are \underline{underscored}.\label{tab:dit-results}}
\begin{tabular}{|l|ccccc|ccccc|ccccc|}
\Xhline{3\arrayrulewidth}
& \multicolumn{5}{c|}{\textbf{NFE} $=5$} & \multicolumn{5}{c|}{\textbf{NFE} $=35$} &\multicolumn{5}{c|}{\textbf{NFE} $=100$} \\
\textbf{Model} & PSNR & SSIM & LPIPS & NIQE & FID & PSNR & SSIM & LPIPS & NIQE & FID & PSNR & SSIM & LPIPS & NIQE & FID \\
\Xhline{3\arrayrulewidth}
DM-VP & 27.25 & 0.7125 & \underline{0.251} & \underline{4.817} & 22.375 & 26.24 & 0.6761 & 0.265 & 4.379 & 23.074 & 25.80 & 0.6617 & 0.281 & 4.284 & 23.263 \\
FM & 27.27 & 0.7159 & 0.267 & 5.167 & 22.858 & 26.35 & 0.6790 & 0.272 & 4.151 & 23.477 & 25.90 & 0.6540 & 0.289 & 3.781 & 23.810 \\
\Xhline{3\arrayrulewidth}
IR-SDE & \underline{27.94} & \textbf{0.7403} & 0.257 & 5.929 & \underline{21.547} & 26.80 & \textbf{0.7007} & \underline{0.246} & 4.362 & \underline{22.250} & 26.24 & \textbf{0.6782} & \underline{0.262} & 3.964 & \underline{22.423} \\
ResShift & \textbf{27.98} & \underline{0.7366} & \textbf{0.215} & \textbf{4.679} & \textbf{20.494} & \textbf{26.87} & 0.6941 & \textbf{0.218} & \textbf{3.660} & \textbf{21.197} & \underline{26.45} & 0.6742 & \textbf{0.235} & \textbf{3.433} & \textbf{21.549} \\
InDI & 27.59 & 0.7301 & 0.327 & 7.201 & 23.374 & 26.79 & 0.6946 & 0.289 & 5.123 & 23.873 & 26.34 & 0.6746 & 0.292 & 4.567 & 24.251 \\
\Xhline{3\arrayrulewidth}
BBDM & 27.11 & 0.7102 & 0.272 & 5.036 & 23.262 & 26.46 & 0.6878 & 0.275 & 4.495 & 23.696 & 26.16 & 0.6752 & 0.285 & 4.264 & 24.160 \\
DDBM-VE & 27.33 & 0.7181 & 0.294 & 6.055 & 23.144 & 26.65 & 0.6896 & 0.276 & 4.831 & 23.731 & 26.40 & 0.6731 & 0.280 & 4.423 & 23.801 \\
DDBM-VP & 27.20 & 0.7144 & 0.291 & 5.753 & 23.359 & 26.63 & 0.6893 & 0.283 & 4.779 & 24.160 & 26.44 & \underline{0.6771} & 0.287 & 4.421 & 24.248 \\
I$^2$SB & 27.42 & 0.7213 & 0.293 & 6.084 & 23.167 & 26.73 & 0.6904 & 0.290 & 4.727 & 23.728 & \textbf{26.49} & 0.6738 & 0.303 & 4.346 & 24.069 \\
GOUB & 27.71 & 0.7301 & 0.255 & 5.418 & 21.812 & 26.63 & 0.6869 & 0.260 & \underline{3.990} & 22.707 & 25.48 & 0.6503 & 0.297 & \underline{3.491} & 23.261 \\
UniDB & 27.75 & 0.7321 & 0.260 & 5.689 & 22.059 & \underline{26.81} & \underline{0.6949} & 0.260 & 4.263 & 22.800 & 25.87 & 0.6642 & 0.282 & 3.638 & 23.154 \\
\Xhline{3\arrayrulewidth}
\end{tabular}
\end{threeparttable}
\end{table*}

\begin{table*}[h!]
\centering
\begin{threeparttable}
\scriptsize
\caption{Image super resolution on FFHQ dataset with original hyperparameter values with Network forward evaluations (NFE) = $5, 35,$ and $100$. The best values are \textbf{bolded} and second to best are \underline{underscored}. We used the same samplers as the original works.\label{tab:original-results}}
\begin{tabular}{|ll|rrrrr|rrrrr|rrrrr|}
\Xhline{3\arrayrulewidth}
& & \multicolumn{5}{c|}{\textbf{NFE} $=5$} & \multicolumn{5}{c|}{\textbf{NFE} $=35$} &\multicolumn{5}{c|}{\textbf{NFE} $=100$} \\
\textbf{Model} & \textbf{Sampler} & PSNR & SSIM & LPIPS & NIQE & FID & PSNR & SSIM & LPIPS & NIQE & FID & PSNR & SSIM & LPIPS & NIQE & FID \\
\Xhline{3\arrayrulewidth}
DM-VP & Ancestral & 23.22 & 0.645 & 0.312 & 7.65 & 19.06 & 26.13 & 0.712 & 0.248 & 5.99 & 18.31 & 26.02 & 0.701 & 0.218 & 4.95 & 18.24 \\
\Xhline{1\arrayrulewidth}
FM & Euler ODE & 28.47 & 0.769 & \underline{0.233} & 6.09 & \textbf{17.35} & 27.01 & 0.716 & \textbf{0.169} & \textbf{3.96} & \textbf{17.40} & 26.70 & 0.697 & \textbf{0.168} & \textbf{3.63} & 17.57 \\
\Xhline{3\arrayrulewidth}
IR-SDE & Euler ODE & 22.98 & 0.290 & 0.710 & 6.20 & 23.46 & 24.87 & 0.533 & 0.347 & \underline{3.98} & 19.85 & 24.82 & 0.595 & 0.264 & 3.82 & 19.26 \\
& Euler SDE & 9.41 & 0.006 & 1.510 & 15.16 & 34.71 & 17.79 & 0.147 & 0.943 & 4.64 & 23.50 & 22.89 & 0.401 & 0.461 & \underline{3.70} & 20.17 \\
\Xhline{1\arrayrulewidth}
ResShift & Ancestral & 28.84 & 0.778 & \textbf{0.222} & 8.68 & \underline{17.52} & 27.66 & 0.746 & 0.184 & 7.97 & \underline{17.42} & 27.23 & 0.730 & 0.175 & 7.72 & \underline{17.55} \\
\Xhline{1\arrayrulewidth}
InDI & Euler ODE & 28.69 & 0.774 & 0.257 & 6.98 & 17.94 & 27.16 & 0.714 & \underline{0.170} & 4.04 & 17.84 & 26.59 & 0.696 & \underline{0.174} & 3.78 & 17.97 \\
\Xhline{3\arrayrulewidth}
BBDM & Ancestral & 24.60 & 0.589 & 0.277 & \underline{5.68} & 19.90 & \underline{27.85} & \underline{0.757} & 0.229 & 5.82 & 17.57 & 27.68 & 0.749 & 0.213 & 5.34 & \textbf{17.54} \\
\Xhline{1\arrayrulewidth}
DDBM-VE & Langevin-Heun & \underline{29.29} & \underline{0.790} & 0.290 & 7.88 & 18.41 & 21.18 & 0.229 & 0.813 & 4.34 & 25.42 & 26.62 & 0.671 & 0.240 & 4.00 & 19.14 \\
& Heun & 9.55 & 0.057 & 0.704 & \textbf{4.50} & 31.19 & 26.61 & 0.752 & 0.335 & 7.74 & 21.42 & \underline{28.41} & \underline{0.780} & 0.315 & 7.98 & 20.42 \\
\Xhline{1\arrayrulewidth}
DDBM-VP & Langevin-Heun & \textbf{29.46} & \textbf{0.795} & 0.279 & 7.65 & 17.97 & 18.42 & 0.210 & 0.935 & 5.62 & 27.42 & 26.62 & 0.634 & 0.300 & 4.43 & 19.75 \\
& Heun & 12.23 & 0.434 & 0.631 & 9.86 & 32.03 & \textbf{28.05} & \textbf{0.758} & 0.321 & 7.44 & 21.37 & \textbf{29.05} & \textbf{0.786} & 0.304 & 7.82 & 19.85 \\
\Xhline{1\arrayrulewidth}
I$^2$SB & Ancestral & 13.04 & 0.052 & 0.887 & 8.53 & 32.40 & 19.25 & 0.516 & 0.325 & 5.26 & 20.50 & 20.10 & 0.553 & 0.316 & 6.15 & 19.76 \\
\Xhline{1\arrayrulewidth}
GOUB & Mean ODE & 17.92 & 0.170 & 0.729 & 10.70 & 27.27 & 23.38 & 0.674 & 0.392 & 6.90 & 23.66 & 20.83 & 0.662 & 0.439 & 8.06 & 24.73 \\
& Euler ODE & 21.98 & 0.393 & 0.584 & 8.96 & 24.75 & 24.16 & 0.693 & 0.404 & 7.50 & 22.87 & 23.13 & 0.690 & 0.416 & 7.88 & 23.19 \\
\Xhline{1\arrayrulewidth}
UniDB & Mean ODE & 17.81 & 0.168 & 0.736 & 10.28 & 27.71 & 24.20 & 0.684 & 0.375 & 7.49 & 23.54 & 21.82 & 0.672 & 0.410 & 8.02 & 24.18 \\
& Euler ODE & 21.83 & 0.385 & 0.591 & 8.93 & 24.82 & 25.21 & 0.713 & 0.378 & 7.76 & 22.68 & 24.61 & 0.714 & 0.378 & 8.13 & 22.69 \\
\Xhline{3\arrayrulewidth}
\end{tabular}
\end{threeparttable}
\end{table*}

\begin{table*}[h]
\centering
\begin{threeparttable}
\scriptsize
\caption{Different samplers used in image super-resolution for diffusion and Ornstein-Uhlenbeck processes. We measure their effectiveness using PSNR / SSIM / LPIPS metrics for Network Forward Evaluations (NFE) = $5$, $35$, and $100$. Best results across samplers are \textbf{bolded} and second best are \underline{underscored}. Ancestral sampling and Euler ODE achieve the best LPIPS results. Mean-reverting ODE yields the highest PSNR and SSIM but low LPIPS, suggesting possible oversmoothing. Our Exponential Integrator (EI-ODE) relies on numerical methods, which perform worse for flow matching and InDI due to their stiff dynamics. Surprisingly, Heun performs worst among the second-order Runge-Kutta methods.\label{tab:samplers1}}
\setcellgapes{1pt}
\makegapedcells
\begin{tabular}{|c|l|c|c|c|c|c|}
\Xhline{3\arrayrulewidth}
NFE & Sampler & DM-VP & FM & IR-SDE & ResShift & InDI \\
\Xhline{2\arrayrulewidth}
\multirow{9}{*}{5} & Euler ODE & 25.2 / 0.51 / 0.40 & 28.4 / 0.77 / \textbf{0.22} & 26.3 / 0.58 / 0.35 & 28.2 / 0.76 / \textbf{0.20} & 28.5 / \underline{0.76} / \textbf{0.29} \\
& Euler SDE & 17.5 / 0.18 / 1.00 & 8.9 / 0.01 / 0.84 & 18.7 / 0.13 / 0.82 & 18.5 / 0.15 / 1.07 & 7.6 / 0.00 / 1.49 \\
& Ancestral\dag & \underline{29.0} / \underline{0.78} / \textbf{0.23} & \underline{28.7} / \underline{0.78} / 0.24 & \underline{28.6} / \underline{0.77} / \textbf{0.25} & \underline{28.8} / \underline{0.78} / \underline{0.22} & \underline{28.6} / \textbf{0.77} / 0.32 \\
& EI-ODE & 14.9 / 0.11 / 0.97 & 9.5 / 0.01 / 0.84 & 14.5 / 0.13 / 0.75 & 18.1 / 0.18 / 0.94 & 3.0 / -0.39 / 0.82 \\
& Mean-ODE\dag\dag & 21.4 / 0.29 / 0.79 & 8.9 / 0.01 / 0.84 & 21.7 / 0.23 / 0.60 & 20.0 / 0.19 / 1.00 & 7.6 / 0.00 / 1.49 \\
& Langevin-Heun\dag\dag\dag & \textbf{29.5} / \textbf{0.79} / \underline{0.28} & \textbf{29.3} / \textbf{0.79} / 0.29 & \textbf{29.2} / \textbf{0.79} / \underline{0.30} & \textbf{29.5} / \textbf{0.79} / 0.28 & \textbf{28.7} / \textbf{0.77} / 0.34 \\
& 2nd Heun & 15.8 / 0.13 / 0.92 & 13.0 / 0.30 / 0.61 & 9.0 / 0.01 / 0.94 & 12.5 / 0.03 / 0.87 & 12.9 / 0.33 / 0.59 \\
& 2nd Midpoint & 20.7 / 0.34 / 0.66 & 28.1 / 0.76 / \underline{0.23} & 25.9 / 0.62 / \textbf{0.25} & 24.6 / 0.47 / 0.43 & 28.5 / \underline{0.76} / \underline{0.30} \\
& 2nd Ralston & 20.4 / 0.60 / 0.37 & 27.9 / 0.75 / \textbf{0.22} & 22.7 / 0.27 / 0.57 & 22.5 / 0.31 / 0.65 & \underline{28.6} / \textbf{0.77} / 0.31 \\
\Xhline{2\arrayrulewidth}
\multirow{9}{*}{35} & Euler ODE & 27.0 / 0.71 / \textbf{0.16} & 26.7 / 0.71 / \textbf{0.17} & 26.4 / 0.68 / \underline{0.19} & 27.2 / 0.73 / \textbf{0.18} & \underline{26.8} / \underline{0.69} / \textbf{0.19} \\
& Euler SDE & 27.5 / 0.70 / 0.25 & 25.5 / 0.55 / 0.29 & 25.7 / 0.62 / 0.25 & 27.5 / 0.74 / \underline{0.19} & 16.7 / 0.15 / 0.84 \\
& Ancestral & \underline{27.9} / \underline{0.75} / 0.18 & \underline{27.2} / \underline{0.73} / \underline{0.18} & \underline{27.0} / \underline{0.72} / \textbf{0.18} & \underline{27.7} / \underline{0.75} / \textbf{0.18} & \textbf{27.2} / \textbf{0.71} / \underline{0.20} \\
& EI-ODE & 27.3 / 0.73 / 0.25 & 13.1 / 0.17 / 0.75 & 25.6 / 0.70 / 0.21 & 27.3 / \underline{0.75} / 0.22 & 11.5 / 0.44 / 0.64 \\
& Mean-ODE & \textbf{28.2} / \textbf{0.78} / 0.32 & \textbf{28.2} / \textbf{0.78} / 0.32 & \textbf{27.9} / \textbf{0.77} / 0.32 & \textbf{28.2} / \textbf{0.78} / 0.30 & 25.8 / 0.67 / 0.45 \\
& Langevin-Heun & 25.2 / 0.58 / 0.38 & 12.6 / 0.19 / 0.91 & 22.8 / 0.48 / 0.35 & 19.8 / 0.24 / 0.74 & 5.8 / -0.00 / 0.92 \\
& 2nd Heun & 25.6 / 0.63 / 0.28 & 23.2 / 0.58 / 0.27 & 23.8 / 0.57 / 0.26 & 25.1 / 0.44 / 0.62 & 24.8 / 0.65 / 0.23 \\
& 2nd Midpoint & 26.9 / 0.70 / \underline{0.17} & 26.3 / 0.69 / \underline{0.18} & 26.1 / 0.67 / \underline{0.19} & 26.9 / 0.72 / \textbf{0.18} & 26.1 / 0.66 / \underline{0.20} \\
& 2nd Ralston & 26.8 / 0.70 / \underline{0.17} & 26.2 / 0.68 / \underline{0.18} & 26.1 / 0.67 / \underline{0.19} & 26.9 / 0.72 / \textbf{0.18} & 26.4 / 0.67 / \underline{0.20} \\
\Xhline{2\arrayrulewidth}
\multirow{9}{*}{100} & Euler ODE & 26.8 / 0.69 / \underline{0.17} & 26.4 / 0.69 / \textbf{0.17} & 26.1 / 0.67 / \textbf{0.19} & 27.0 / 0.72 / \underline{0.18} & 26.2 / 0.67 / \underline{0.20} \\
& Euler SDE & \underline{27.2} / \underline{0.73} / 0.18 & 25.9 / 0.65 / 0.20 & 25.6 / 0.65 / 0.21 & 27.2 / \underline{0.73} / \underline{0.18} & 21.6 / 0.50 / 0.39 \\
& Ancestral & \underline{27.2} / 0.72 / \textbf{0.16} & \underline{26.5} / \underline{0.70} / \textbf{0.17} & \underline{26.2} / \underline{0.69} / \textbf{0.19} & 27.2 / \underline{0.73} / \textbf{0.17} & \underline{26.4} / \underline{0.68} / \textbf{0.19} \\
& EI-ODE & 26.9 / 0.72 / 0.18 & 17.0 / 0.21 / 0.69 & 25.8 / 0.68 / \textbf{0.19} & 27.0 / 0.72 / \textbf{0.17} & 13.1 / 0.48 / 0.56 \\
& Mean-ODE & \textbf{28.2} / \textbf{0.78} / 0.32 & \textbf{28.2} / \textbf{0.78} / 0.32 & \textbf{27.6} / \textbf{0.77} / 0.33 & \textbf{28.1} / \textbf{0.78} / 0.30 & \textbf{27.3} / \textbf{0.75} / 0.35 \\
& Langevin-Heun & 26.4 / 0.67 / 0.19 & 13.6 / 0.17 / 0.70 & 25.1 / 0.63 / 0.23 & 27.1 / 0.72 / 0.25 & 8.6 / 0.04 / 0.99 \\
& 2nd Heun & 26.3 / 0.67 / 0.19 & 25.3 / 0.65 / 0.21 & 25.3 / 0.65 / 0.21 & \underline{27.3} / \underline{0.73} / 0.24 & 25.4 / 0.65 / 0.21 \\
& 2nd Midpoint & 26.6 / 0.68 / 0.18 & 26.2 / 0.67 / \underline{0.18} & 25.9 / 0.67 / \underline{0.20} & 26.8 / 0.71 / \underline{0.18} & 25.8 / 0.65 / \underline{0.20} \\
& 2nd Ralston & 26.6 / 0.68 / 0.18 & 26.1 / 0.67 / \underline{0.18} & 25.9 / 0.67 / \underline{0.20} & 26.8 / 0.71 / \textbf{0.17} & 25.8 / 0.65 / \underline{0.20} \\
\Xhline{3\arrayrulewidth}
\end{tabular}
\begin{tablenotes}
\item[\dag] Ancestral sampling of the Markov model that discretizes forward SDE (see \cite{ho2020denoising}).
\item[\dag\dag] See \cite{yue2023goub}.
\item[\dag\dag\dag] See \cite{zhou2023denoising}.
\end{tablenotes}
\end{threeparttable}
\end{table*}
\begin{table*}[h]
\centering
\begin{threeparttable}
\scriptsize
\caption{Different samplers used in image super-resolution for diffusion bridges. We measure their effectiveness using PSNR / SSIM / LPIPS metrics for Network Forward Evaluations (NFE) = $5$, $35$, and $100$. Best results across samplers are \textbf{bolded} and second best are \underline{underscored}. Ancestral sampling remains the best choice globally. Note that no deterministic sampler achieves an LPIPS below 0.3, even with 100 NFEs.\label{tab:samplers2}}
\setcellgapes{1pt}
\makegapedcells
\begin{tabular}{|c|l|c|c|c|c|c|c|}
\Xhline{3\arrayrulewidth}
NFE & Sampler & BBDM & DDBM-VE & DDBM-VP & I$^2$SB & GOUB & UniDB \\
\Xhline{2\arrayrulewidth}
\multirow{9}{*}{5} & Euler ODE & 28.5 / \textbf{0.77} / 0.36 & \underline{28.6} / \underline{0.78} / 0.31 & 28.7 / \textbf{0.79} / 0.30 & 28.8 / \underline{0.78} / 0.34 & \underline{28.7} / \underline{0.77} / 0.34 & \underline{29.2} / \textbf{0.79} / 0.32 \\
 & Euler SDE & 27.4 / 0.72 / \textbf{0.25} & 14.0 / 0.05 / 1.22 & 11.6 / 0.03 / 1.35 & 23.2 / 0.31 / 0.74 & 17.6 / 0.10 / 0.89 & 13.2 / 0.09 / 0.86 \\
 & Ancestral\dag & 28.1 / 0.75 / \underline{0.28} & 28.5 / 0.77 / \textbf{0.24} & \underline{28.8} / \underline{0.78} / \textbf{0.23} & 28.7 / 0.77 / \textbf{0.25} & 28.5 / \underline{0.77} / \textbf{0.27} & 28.9 / \underline{0.78} / \textbf{0.25} \\
 & EI-ODE & 3.4 / -0.37 / 0.86 & 2.6 / -0.43 / 0.83 & 2.6 / -0.43 / 0.83 & 2.9 / -0.41 / 0.84 & 12.8 / 0.48 / 0.60 & 12.9 / 0.49 / 0.59 \\
 & Mean-ODE\dag\dag & 28.5 / \textbf{0.77} / 0.34 & 25.8 / 0.74 / 0.34 & 27.1 / 0.77 / 0.32 & \underline{29.0} / \textbf{0.79} / 0.31 & 28.3 / \underline{0.77} / 0.32 & 28.9 / \underline{0.78} / 0.31 \\
 & Langevin-Heun\dag\dag\dag & \textbf{28.7} / \textbf{0.77} / 0.35 & \textbf{29.2} / \textbf{0.79} / \underline{0.30} & \textbf{29.4} / \textbf{0.79} / \underline{0.28} & \textbf{29.2} / \textbf{0.79} / \underline{0.30} & \textbf{29.0} / \textbf{0.78} / \underline{0.31} & \textbf{29.4} / \textbf{0.79} / \underline{0.29} \\
 & 2nd Heun & 11.6 / 0.22 / 0.73 & 16.9 / 0.19 / 0.72 & 12.9 / 0.41 / 0.59 & 9.9 / 0.08 / 0.73 & 9.9 / 0.15 / 0.74 & 11.4 / 0.24 / 0.69 \\
 & 2nd Midpoint & \underline{28.6} / \textbf{0.77} / 0.35 & 27.1 / 0.75 / 0.33 & 21.8 / 0.74 / 0.33 & 28.8 / \underline{0.78} / 0.35 & 27.0 / 0.73 / 0.38 & 27.4 / 0.75 / 0.34 \\
 & 2nd Ralston & 28.4 / \underline{0.76} / 0.36 & 27.9 / 0.75 / 0.33 & 21.5 / 0.72 / 0.36 & 28.4 / 0.77 / 0.38 & 26.9 / 0.72 / 0.45 & 27.3 / 0.73 / 0.42 \\
\Xhline{2\arrayrulewidth}
\multirow{9}{*}{35} & Euler ODE & \textbf{28.5} / \textbf{0.77} / 0.35 & \textbf{27.8} / \textbf{0.77} / \underline{0.32} & \textbf{28.5} / \textbf{0.78} / \underline{0.31} & \textbf{28.9} / \textbf{0.78} / \underline{0.31} & \textbf{28.6} / \textbf{0.77} / 0.32 & \textbf{29.3} / \textbf{0.79} / \underline{0.31} \\
 & Euler SDE & 27.2 / 0.73 / \underline{0.26} & 24.2 / 0.39 / 0.43 & 23.0 / 0.31 / 0.53 & 27.1 / 0.70 / \textbf{0.18} & 26.4 / 0.67 / \textbf{0.20} & 25.5 / 0.56 / \underline{0.31} \\
 & Ancestral & 27.1 / 0.73 / \textbf{0.25} & 27.1 / 0.72 / \textbf{0.19} & 27.5 / 0.74 / \textbf{0.19} & 27.5 / 0.73 / \textbf{0.18} & 27.2 / 0.73 / \underline{0.22} & 27.6 / 0.74 / \textbf{0.18} \\
 & EI-ODE & 10.6 / 0.24 / 0.71 & 12.5 / 0.35 / 0.70 & 13.1 / 0.36 / 0.68 & 4.3 / -0.25 / 0.81 & 15.5 / 0.42 / 0.68 & 11.9 / 0.26 / 0.67 \\
 & Mean-ODE & 28.2 / \underline{0.76} / 0.34 & 24.3 / 0.73 / 0.36 & 26.3 / \underline{0.76} / 0.34 & 25.5 / \underline{0.75} / 0.34 & 27.8 / \underline{0.76} / 0.33 & 28.9 / \underline{0.78} / 0.32 \\
 & Langevin-Heun & 14.7 / 0.08 / 0.92 & 21.7 / 0.27 / 0.46 & 19.5 / 0.19 / 0.56 & 19.6 / 0.17 / 0.93 & 23.4 / 0.45 / 0.31 & 23.6 / 0.46 / 0.36 \\
 & 2nd Heun & 25.7 / 0.70 / 0.39 & \underline{27.6} / \underline{0.76} / \underline{0.32} & 27.9 / \underline{0.76} / \underline{0.31} & 27.3 / \underline{0.75} / 0.32 & 25.9 / 0.71 / 0.36 & 27.5 / 0.75 / 0.32 \\
 & 2nd Midpoint & \textbf{28.5} / \underline{0.76} / 0.34 & 27.5 / \textbf{0.77} / \underline{0.32} & \underline{28.4} / \textbf{0.78} / \underline{0.31} & 28.5 / \textbf{0.78} / \underline{0.31} & \underline{28.5} / \textbf{0.77} / 0.32 & \underline{29.2} / \textbf{0.79} / \underline{0.31} \\
 & 2nd Ralston & \underline{28.4} / \underline{0.76} / 0.35 & \underline{27.6} / \textbf{0.77} / \underline{0.32} & \underline{28.4} / \textbf{0.78} / \underline{0.31} & \underline{28.6} / \textbf{0.78} / \underline{0.31} & \underline{28.5} / \textbf{0.77} / 0.32 & \underline{29.2} / \textbf{0.79} / \underline{0.31} \\
\Xhline{2\arrayrulewidth}
\multirow{9}{*}{100} & Euler ODE & \textbf{28.4} / \textbf{0.76} / 0.34 & \textbf{27.5} / \textbf{0.77} / 0.32 & \textbf{28.4} / \textbf{0.78} / 0.31 & \textbf{28.2} / \textbf{0.78} / 0.31 & \textbf{28.5} / \textbf{0.77} / \underline{0.32} & \textbf{29.2} / \textbf{0.79} / 0.31 \\
 & Euler SDE & 26.9 / 0.72 / \underline{0.25} & 25.6 / 0.56 / 0.28 & 25.3 / 0.51 / 0.32 & 26.9 / 0.71 / \textbf{0.17} & 26.4 / 0.70 / \textbf{0.21} & 26.3 / 0.66 / \underline{0.20} \\
 & Ancestral & 26.8 / 0.72 / \textbf{0.24} & 26.6 / 0.70 / \textbf{0.18} & 27.0 / 0.72 / \textbf{0.18} & 27.0 / 0.72 / \textbf{0.17} & 26.4 / 0.71 / \textbf{0.21} & 26.3 / 0.70 / \textbf{0.18} \\
 & EI-ODE & 21.7 / 0.68 / 0.48 & 14.7 / 0.54 / 0.53 & 16.5 / 0.60 / 0.48 & 12.1 / 0.47 / 0.63 & 16.0 / 0.45 / 0.66 & 13.8 / 0.33 / 0.64 \\
 & Mean-ODE & \underline{28.1} / \textbf{0.76} / 0.34 & 24.1 / \underline{0.73} / 0.37 & 26.1 / \underline{0.75} / 0.34 & 24.1 / 0.74 / 0.36 & 27.6 / \underline{0.76} / 0.34 & \underline{28.9} / \underline{0.78} / 0.32 \\
 & Langevin-Heun & 27.2 / 0.69 / 0.37 & 25.8 / 0.62 / \underline{0.24} & 25.7 / 0.58 / \underline{0.27} & 26.8 / 0.67 / \underline{0.26} & 25.9 / 0.69 / \textbf{0.21} & 26.0 / 0.66 / \underline{0.20} \\
 & 2nd Heun & 27.7 / \underline{0.75} / 0.36 & \underline{27.4} / \textbf{0.77} / 0.33 & \underline{28.3} / \textbf{0.78} / 0.31 & 27.8 / \underline{0.77} / 0.32 & 27.9 / \underline{0.76} / 0.33 & \underline{28.9} / \underline{0.78} / 0.31 \\
 & 2nd Midpoint & \textbf{28.4} / \textbf{0.76} / 0.34 & 27.3 / \textbf{0.77} / 0.32 & \underline{28.3} / \textbf{0.78} / 0.31 & 27.7 / \textbf{0.78} / 0.32 & \textbf{28.5} / \textbf{0.77} / \underline{0.32} & \textbf{29.2} / \textbf{0.79} / 0.31 \\
 & 2nd Ralston & \textbf{28.4} / \textbf{0.76} / 0.34 & 27.3 / \textbf{0.77} / 0.32 & \underline{28.3} / \textbf{0.78} / 0.31 & \underline{27.9} / \textbf{0.78} / 0.32 & \underline{28.4} / \textbf{0.77} / \underline{0.32} & \textbf{29.2} / \textbf{0.79} / 0.31 \\
\Xhline{3\arrayrulewidth}
\end{tabular}
\begin{tablenotes}
\item[\dag] Ancestral sampling of the Markov model that discretizes forward SDE (see \cite{ho2020denoising}).
\item[\dag\dag] See \cite{yue2023goub}.
\item[\dag\dag\dag] See \cite{zhou2023denoising}.
\end{tablenotes}
\end{threeparttable}
\end{table*}
\begin{table*}[h!]
\centering
\begin{threeparttable}
\scriptsize
\caption{Alignment of each method group (unconditional processes, Ornstein–Uhlenbeck processes, and Diffusion Bridges) with different image enhancement tasks. We first compute Z-scores across all methods, followed by Z-scores across tasks. The results are then averaged within each method group. This procedure measures the relative performance of each group on each task.\label{tab:results_grouped}}
\begin{tabular}{|l|rrr|rrr|rrr|rrr|rrr|}
\Xhline{3\arrayrulewidth}
& \multicolumn{3}{c|}{Facial Super-Resolution} &\multicolumn{3}{c|}{Low-light enhancement} & \multicolumn{3}{c|}{Colorization} & \multicolumn{3}{c|}{Deraining} & \multicolumn{3}{c|}{General Super-Resolution} \\
\textbf{Methods} & PSNR & SSIM & LPIPS & PSNR & SSIM & LPIPS & PSNR & SSIM & LPIPS & PSNR & SSIM & LPIPS & PSNR & SSIM & LPIPS \\
\Xhline{2\arrayrulewidth}
Unconditional processes & \textbf{0.59} & \textbf{0.64} & \textbf{0.62} & \textbf{0.28} & \textbf{0.55} & \textbf{0.83} & -1.05 & -0.97 & -0.87 & \textbf{0.27} & \textbf{0.44} & \underline{0.06} & \textbf{0.64} & \textbf{1.07} & \textbf{0.95} \\
Ornstein-Uhlenbeck processes & -0.27 & -0.62 & \underline{0.29} & -0.15 & -0.25 & -0.69 & \underline{0.16} & \underline{0.17} & \underline{-0.39} & -0.52 & -0.54 & -0.76 & \underline{0.47} & \underline{0.5} & \underline{0.59} \\
Diffusion bridges & \underline{-0.06} & \underline{0.1} & -0.35 & \underline{-0.02} & \underline{-0.06} & \underline{0.07} & \textbf{0.27} & \textbf{0.24} & \textbf{0.49} & \underline{0.17} & \underline{0.12} & \textbf{0.36} & -0.45 & -0.6 & -0.61 \\
\Xhline{3\arrayrulewidth}
\end{tabular}
\end{threeparttable}
\end{table*}

\FloatBarrier
\begin{figure*}[h]
    \centering
    \includegraphics[width=0.99\textwidth]{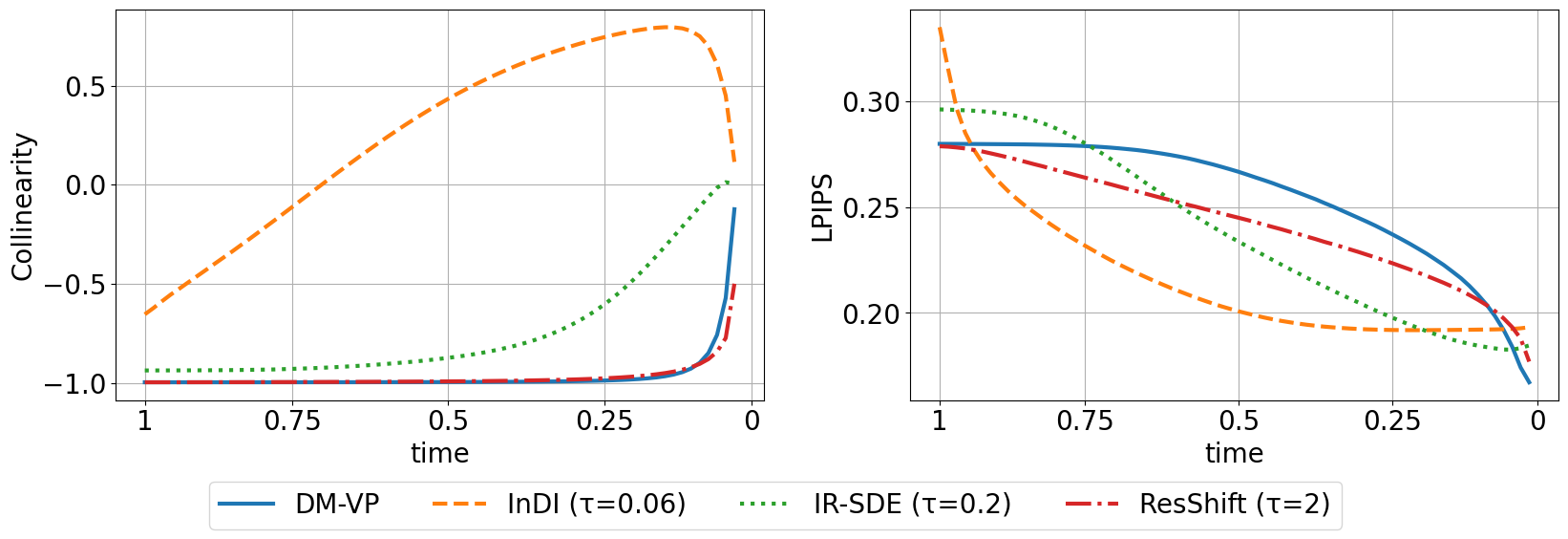}
    \caption{
    % Comparison of diffusion and three OU processes: InDI, IR-SDE, and ResShift with progressively higher temperature parameter $\tau$. Left: collinearity of normalized vectors: $\xt - \y$ and $\xs - \xt$. InDI, with lowest $\tau$, is trained to extrapolate $\xt - \y$ vector, especially at the end of trajectory, where the transition variance was very small during training. Right: LPIPS of predicted $\hat{\mathbf{x}}_{0|t}$ for different time $t$. It can be noticed that when the InDI extrapolates it does not improve over time. Later steps are redundant or even harmful to the quality of the final sample.
   Comparison of diffusion and three OU processes: InDI, IR-SDE, and ResShift with progressively higher temperature parameter $\tau$, to study how temperature influences the collinearity of $\xt, \y, \hat{\mathbf{x}}_{0|t}$, and how this collinearity, in turn, affects the LPIPS metric. Left: collinearity of normalized vectors $\xt - \y$ and $\hat{\mathbf{x}}_{0|t} - \xt$. InDI, with the lowest $\tau$, is trained to extrapolate the $\xt - \y$ vector, especially at the end of the trajectory, where the transition variance was very small during training. Right: LPIPS of predicted $\hat{\mathbf{x}}_{0|t}$ for different $t$. It can be noticed that when InDI has high collinearity, it does not improve over time. Later steps are redundant or even harmful to the quality of the final sample. Experiments were conducted on image super-resolution.
    }
    \label{fig:colinear}
\end{figure*}
\begin{table*}[h!]
\centering
\begin{threeparttable}
\scriptsize
\caption{Different discretization strategies (see Figure \ref{fig:discretization}) for image super-resolution. The results show that model quality does not depend on the chosen method. We report PSNR / SSIM / LPIPS. The best values are \textbf{bolded} and second to best are \underline{underscored}. \label{tab:discretizations}}
\setcellgapes{1pt}
\makegapedcells
\begin{tabular}{|c|l|c|c|c|}
\Xhline{3\arrayrulewidth}
NFE & Method & Linear & Karras\dag & DDBM\dag\dag \\
\Xhline{2\arrayrulewidth}
\multirow{10}{*}{5} & DM-VP & \underline{29.0} / \underline{0.78} / \underline{0.23} & 28.9 / 0.77 / \textbf{0.20} & \textbf{29.4} / \textbf{0.79} / 0.26 \\
 & FM & 28.7 / \underline{0.78} / \textbf{0.24} & \underline{28.9} / \underline{0.78} / \textbf{0.24} & \textbf{29.1} / \textbf{0.79} / \underline{0.27} \\
 & ResShift & \underline{28.8} / \textbf{0.78} / \underline{0.22} & \underline{28.8} / \underline{0.77} / \textbf{0.21} & \textbf{29.0} / \textbf{0.78} / 0.23 \\
 & IR-SDE & \underline{28.6} / \underline{0.77} / \textbf{0.25} & \textbf{28.9} / \textbf{0.78} / \underline{0.27} & \textbf{28.9} / \textbf{0.78} / 0.28 \\
 & InDI & \underline{28.6} / \textbf{0.77} / \underline{0.32} & \textbf{28.7} / \textbf{0.77} / 0.33 & 28.5 / \textbf{0.77} / \textbf{0.31} \\
 & BBDM & \underline{28.1} / \underline{0.75} / \textbf{0.28} & \underline{28.1} / \underline{0.75} / \textbf{0.28} & \textbf{28.4} / \textbf{0.76} / \underline{0.30} \\
 & DDBM-VE & \underline{28.5} / \underline{0.77} / \textbf{0.24} & \textbf{28.8} / \underline{0.77} / \underline{0.25} & \textbf{28.8} / \textbf{0.78} / \underline{0.25} \\
 & DDBM-VP & 28.8 / \underline{0.78} / \textbf{0.23} & \underline{29.0} / \underline{0.78} / \underline{0.24} & \textbf{29.1} / \textbf{0.79} / 0.25 \\
 & I$^2$SB & 28.7 / \underline{0.77} / \textbf{0.25} & \textbf{28.9} / \textbf{0.78} / \underline{0.26} & \underline{28.8} / \textbf{0.78} / \textbf{0.25} \\
 & GOUB & 28.5 / \underline{0.77} / \textbf{0.27} & \underline{28.7} / \underline{0.77} / \underline{0.28} & \textbf{28.9} / \textbf{0.78} / 0.30 \\
 & UniDB & 28.9 / \underline{0.78} / \textbf{0.25} & \underline{29.1} / \underline{0.78} / \underline{0.26} & \textbf{29.3} / \textbf{0.79} / 0.27 \\
\Xhline{2\arrayrulewidth}
\multirow{10}{*}{35} & DM-VP & \textbf{27.9} / \textbf{0.75} / \underline{0.18} & 27.3 / 0.71 / \textbf{0.17} & \underline{27.8} / \underline{0.73} / \textbf{0.17} \\
 & FM & \underline{27.2} / \underline{0.73} / \underline{0.18} & 27.1 / 0.71 / \textbf{0.17} & \textbf{27.7} / \textbf{0.74} / \textbf{0.17} \\
 & IR-SDE & 27.0 / \underline{0.72} / \textbf{0.18} & \underline{27.2} / \underline{0.72} / \textbf{0.18} & \textbf{27.9} / \textbf{0.75} / \underline{0.20} \\
 & ResShift & \underline{27.7} / \textbf{0.75} / \underline{0.18} & 27.5 / 0.73 / \textbf{0.17} & \textbf{27.8} / \underline{0.74} / \textbf{0.17} \\
 & InDI & \underline{27.2} / 0.71 / \textbf{0.20} & \textbf{27.8} / \textbf{0.74} / 0.25 & 27.1 / \underline{0.73} / \underline{0.23} \\
 & BBDM & \textbf{27.1} / \textbf{0.73} / \underline{0.25} & \underline{26.9} / \underline{0.72} / \textbf{0.24} & \textbf{27.1} / \underline{0.72} / \underline{0.25} \\
 & DDBM-VE & 27.1 / \underline{0.72} / \underline{0.19} & \underline{27.3} / \underline{0.72} / \textbf{0.17} & \textbf{27.5} / \textbf{0.73} / \textbf{0.17} \\
 & DDBM-VP & \underline{27.5} / \textbf{0.74} / 0.19 & 27.4 / \underline{0.72} / \textbf{0.16} & \textbf{27.8} / \textbf{0.74} / \underline{0.17} \\
 & I$^2$SB & 27.5 / \underline{0.73} / \textbf{0.18} & \textbf{27.7} / \underline{0.73} / \textbf{0.18} & \underline{27.6} / \textbf{0.74} / \textbf{0.18} \\
 & GOUB & 27.2 / 0.73 / \textbf{0.22} & \underline{27.4} / \underline{0.74} / \underline{0.23} & \textbf{27.8} / \textbf{0.75} / 0.24 \\
 & UniDB & 27.6 / \underline{0.74} / \textbf{0.18} & \underline{27.9} / \underline{0.74} / \textbf{0.18} & \textbf{28.4} / \textbf{0.76} / \underline{0.20} \\
\Xhline{2\arrayrulewidth}
\multirow{10}{*}{100} & DM-VP & \textbf{27.2} / \textbf{0.72} / \textbf{0.16} & 26.7 / 0.68 / 0.19 & \underline{27.0} / \underline{0.69} / \underline{0.18} \\
 & FM & \underline{26.5} / \textbf{0.70} / \textbf{0.17} & 26.2 / \underline{0.68} / \underline{0.18} & \textbf{26.7} / \textbf{0.70} / \textbf{0.17} \\
 & IR-SDE & \underline{26.2} / \underline{0.69} / \underline{0.19} & \underline{26.2} / \underline{0.69} / \underline{0.19} & \textbf{26.7} / \textbf{0.71} / \textbf{0.18} \\
 & ResShift & \textbf{27.2} / \textbf{0.73} / \textbf{0.17} & \underline{27.1} / \underline{0.72} / \textbf{0.17} & \textbf{27.2} / \underline{0.72} / \textbf{0.17} \\
 & InDI & \underline{26.4} / 0.68 / \textbf{0.19} & \textbf{26.9} / \textbf{0.71} / 0.21 & 26.1 / \underline{0.69} / \underline{0.20} \\
 & BBDM & \textbf{26.8} / \textbf{0.72} / \textbf{0.24} & 26.6 / \underline{0.71} / \textbf{0.24} & \underline{26.7} / \underline{0.71} / \textbf{0.24} \\
 & DDBM-VE & 26.6 / \textbf{0.70} / \underline{0.18} & \underline{26.7} / \underline{0.69} / \textbf{0.17} & \textbf{26.8} / \textbf{0.70} / \textbf{0.17} \\
 & DDBM-VP & \textbf{27.0} / \textbf{0.72} / 0.18 & \underline{26.8} / 0.70 / \underline{0.17} & \textbf{27.0} / \underline{0.71} / \textbf{0.16} \\
 & I$^2$SB & \underline{27.0} / \textbf{0.72} / \textbf{0.17} & \textbf{27.2} / \underline{0.71} / \textbf{0.17} & \underline{27.0} / \underline{0.71} / \textbf{0.17} \\
 & GOUB & \underline{26.4} / \underline{0.71} / \textbf{0.21} & \textbf{26.7} / \textbf{0.72} / \underline{0.22} & 16.0 / 0.41 / 0.53 \\
 & UniDB & \underline{26.3} / \underline{0.70} / \underline{0.18} & \textbf{27.2} / \textbf{0.71} / \textbf{0.17} & 13.3 / 0.23 / 0.67 \\
\Xhline{3\arrayrulewidth}
\end{tabular}
\begin{tablenotes}
\item $^{\dag}$ See \cite{karras2022elucidating} Appendix D.1.
\item $^{\dag\dag}$ See \cite{zhou2023denoising}. Formulation is taken from the official Github repository.
\end{tablenotes}
\end{threeparttable}
\end{table*}

\begin{figure*}[t]
    \centering
    \includegraphics[width=0.85\textwidth]{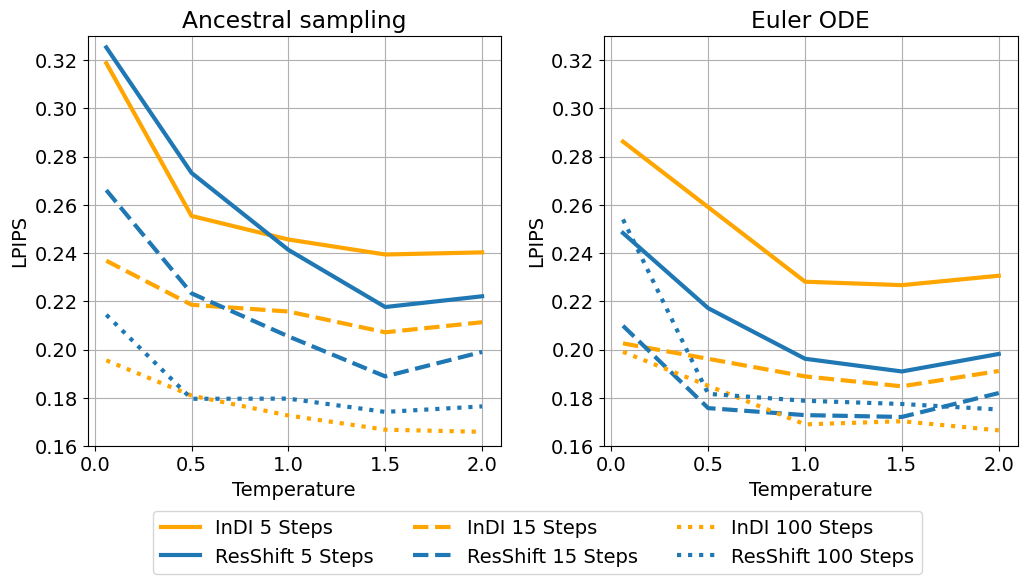}
    \caption{Image quality for the InDI and ResShift models across different temperature settings. Image quality generally improves as the temperature increases. With the default temperature values, ResShift produces noticeably higher-quality results than InDI. The results are grouped by the number of diffusion steps used during inference. On the left, we used ancestral sampling, which is the original sampler in ResShift, and on the right, we used Euler ODE, the original sampler from InDI.}
    \label{fig:indi_temp}
\end{figure*}

\begin{figure*}[t]
    \centering
    \begin{minipage}{0.16\textwidth}
        \centering
        \includegraphics[width=\linewidth]{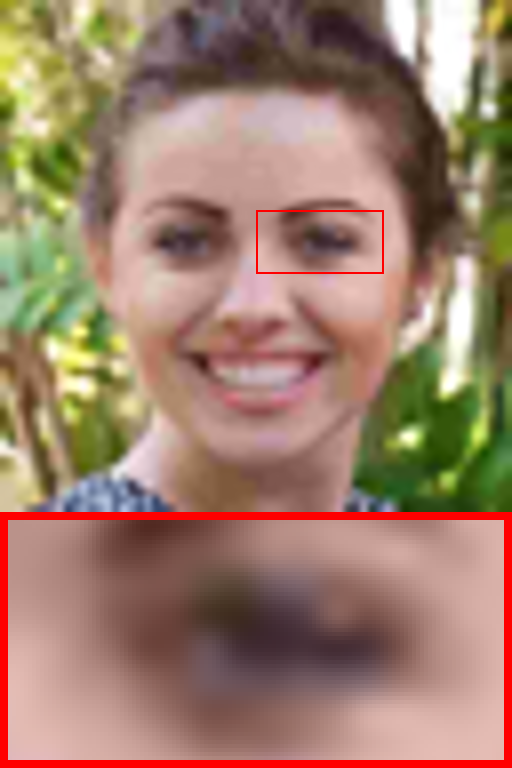}
        \caption*{Input}
    \end{minipage}
    \begin{minipage}{0.16\textwidth}
        \centering
        \includegraphics[width=\linewidth]{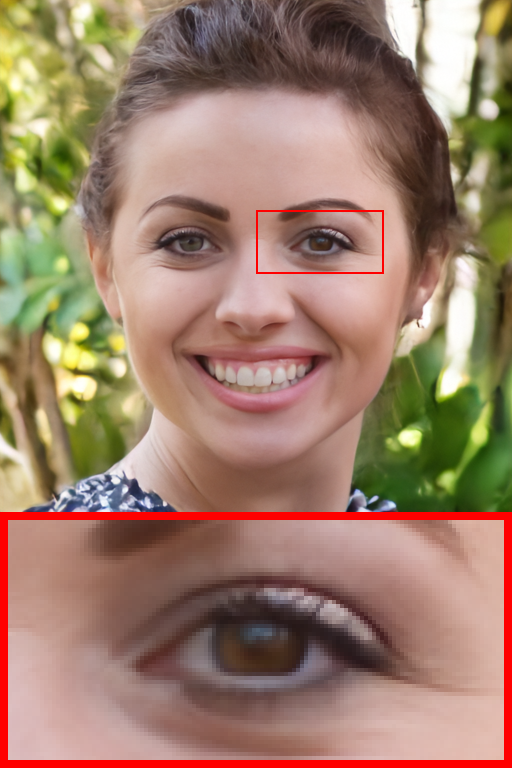}
        \caption*{DM-VP}
    \end{minipage}
    \begin{minipage}{0.16\textwidth}
        \centering
        \includegraphics[width=\linewidth]{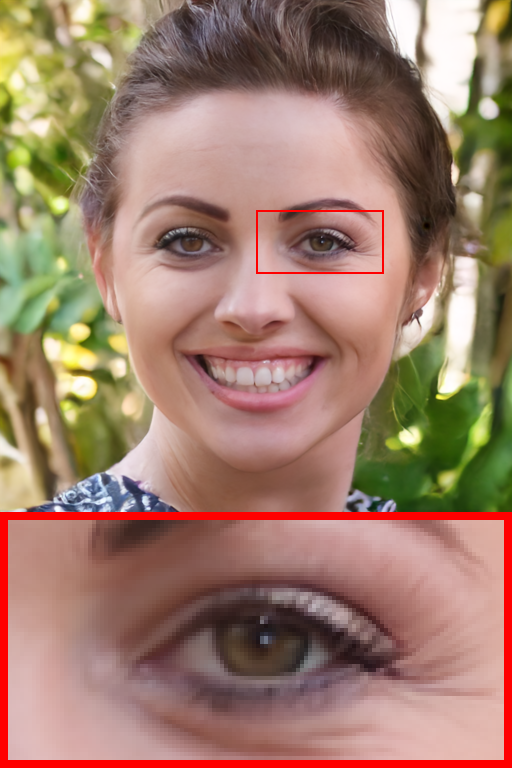}
        \caption*{FM}
    \end{minipage}
    \begin{minipage}{0.16\textwidth}
        \centering
        \includegraphics[width=\linewidth]{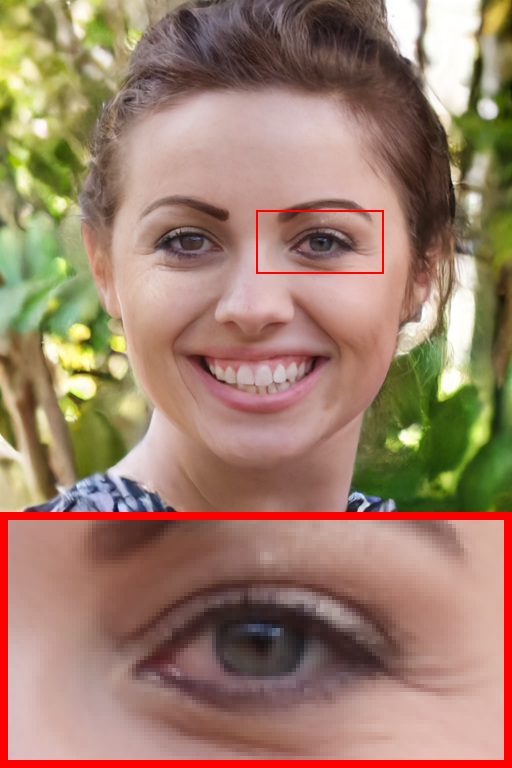}
        \caption*{IR-SDE}
    \end{minipage}
    \begin{minipage}{0.16\textwidth}
        \centering
        \includegraphics[width=\linewidth]{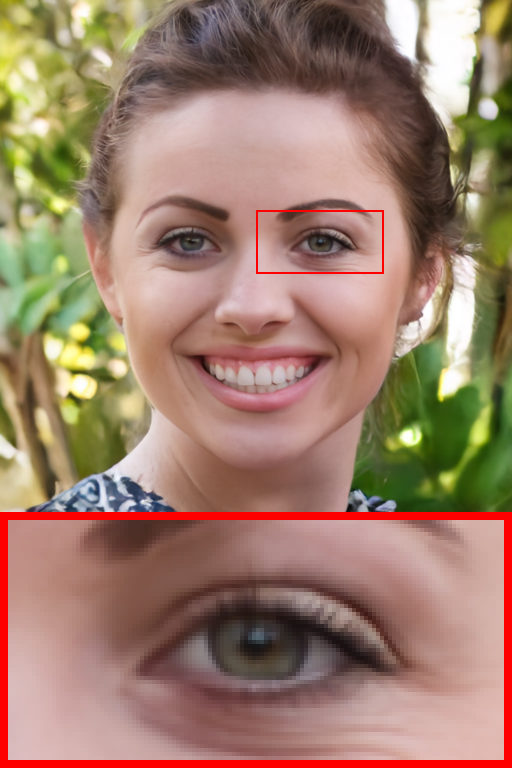}
        \caption*{ResShift}
    \end{minipage}
    \begin{minipage}{0.16\textwidth}
        \centering
        \includegraphics[width=\linewidth]{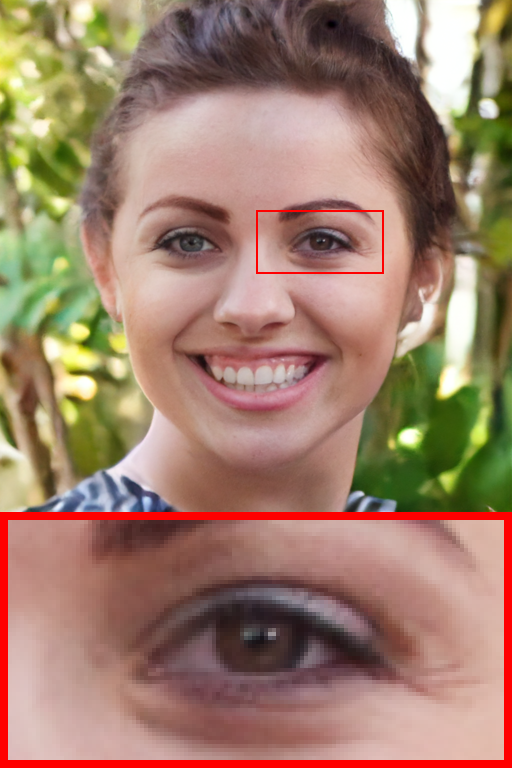}
        \caption*{InDI}
    \end{minipage}
    
    \vspace{0.1em}

    \begin{minipage}{0.16\textwidth}
        \centering
        \includegraphics[width=\linewidth]{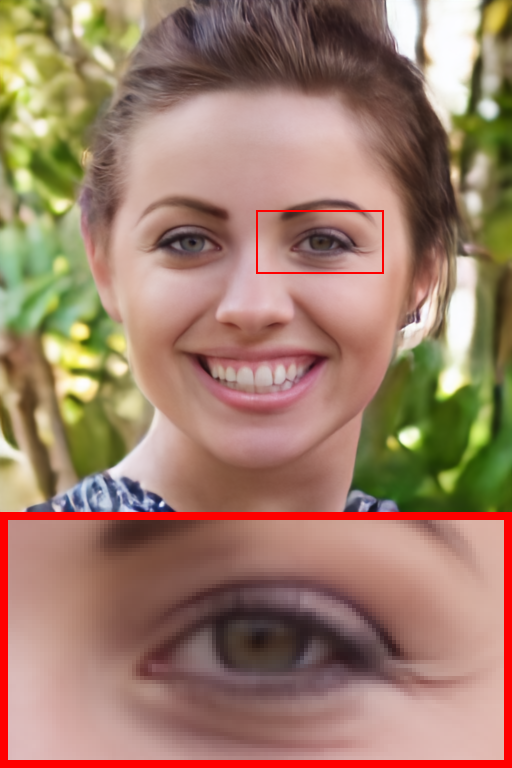}
        \caption*{BBDM}
    \end{minipage}
    \begin{minipage}{0.16\textwidth}
        \centering
        \includegraphics[width=\linewidth]{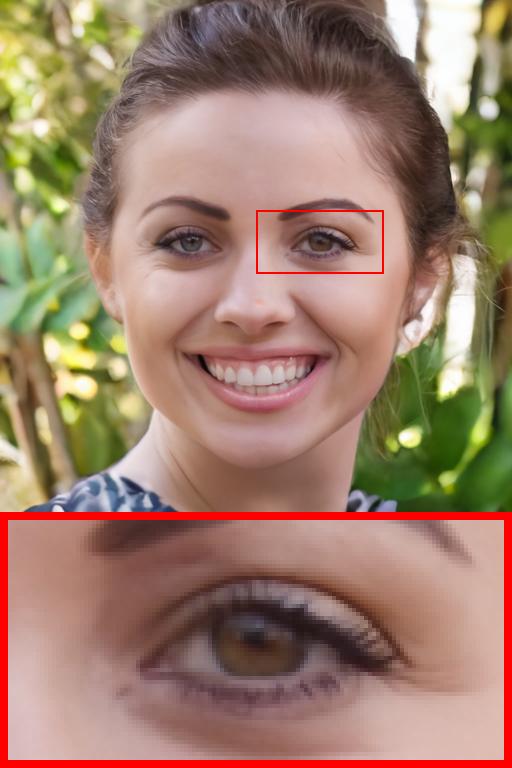}
        \caption*{I$^2$SB}
    \end{minipage}
    \begin{minipage}{0.16\textwidth}
        \centering
        \includegraphics[width=\linewidth]{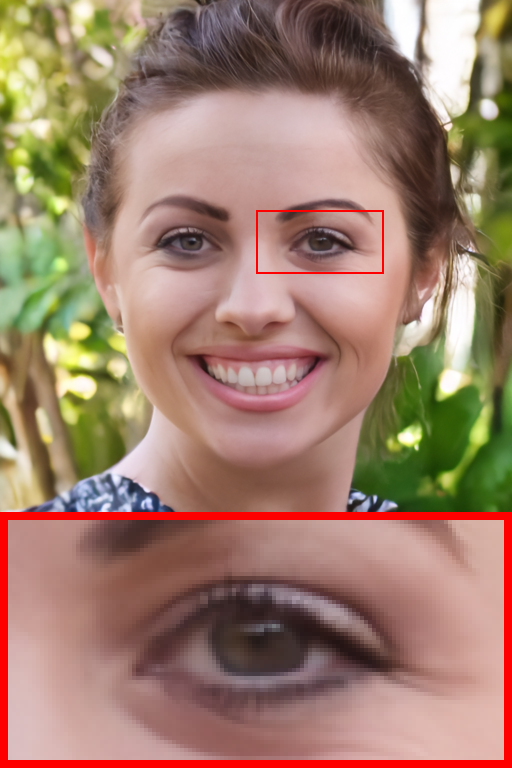}
        \caption*{DDBM-VP}
    \end{minipage}
    \begin{minipage}{0.16\textwidth}
        \centering
        \includegraphics[width=\linewidth]{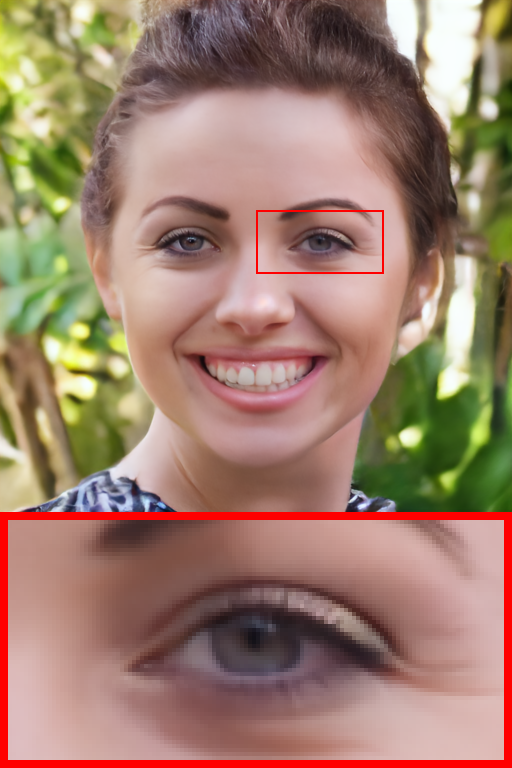}
        \caption*{GOUB}
    \end{minipage}
    \begin{minipage}{0.16\textwidth}
        \centering
        \includegraphics[width=\linewidth]{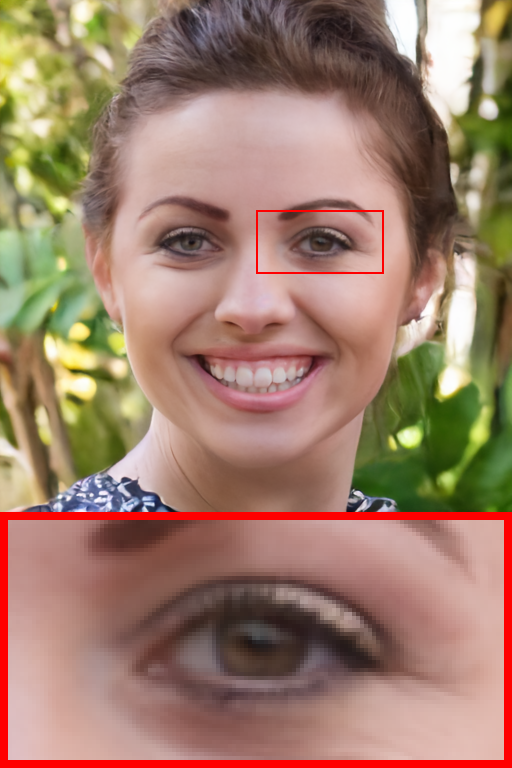}
        \caption*{UniDB}
    \end{minipage}
    \begin{minipage}{0.16\textwidth}
        \centering
        \includegraphics[width=\linewidth]{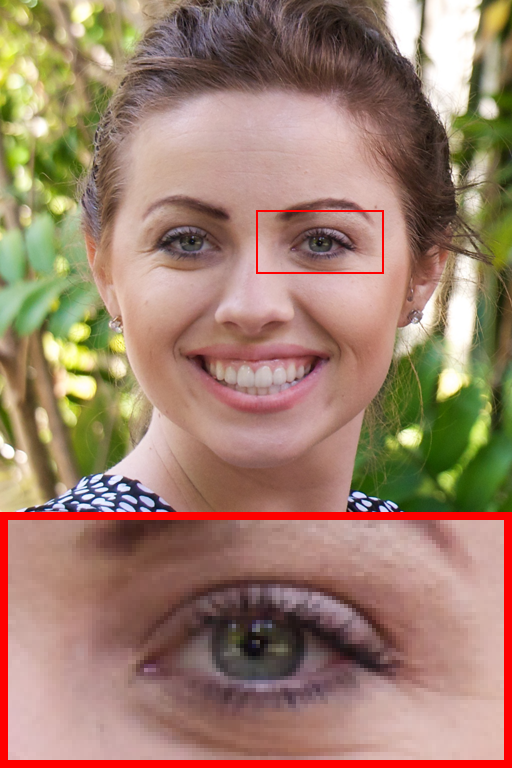}
        \caption*{Ground truth}
    \end{minipage}
    
    \caption{Visual comparison of image super resolution results performed on FFHQ dataset. Results were generated using ancestral sampling with $35$ steps. All methods achieve similar visual quality, except BBDM and GOUB, which produce slightly blurred outputs.}
    \label{fig:comparison-isr-woman}
\end{figure*}
\begin{figure*}[t]
    \centering
    \begin{minipage}{0.23\textwidth}
        \centering
        \includegraphics[width=\linewidth]{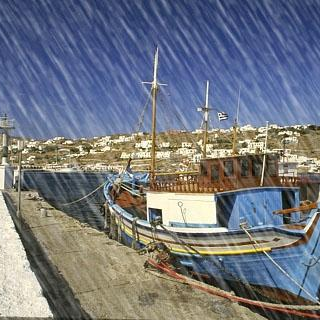}
        \caption*{Input}
    \end{minipage}
    \begin{minipage}{0.23\textwidth}
        \centering
        \includegraphics[width=\linewidth]{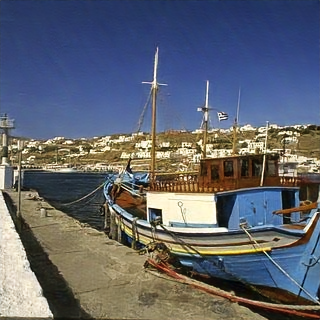}
        \caption*{DM-VP}
    \end{minipage}
    \begin{minipage}{0.23\textwidth}
        \centering
        \includegraphics[width=\linewidth]{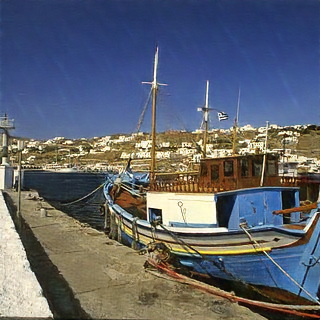}
        \caption*{FM}
    \end{minipage}
    \begin{minipage}{0.23\textwidth}
        \centering
        \includegraphics[width=\linewidth]{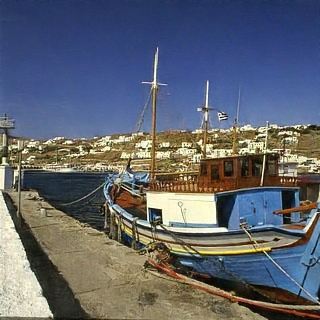}
        \caption*{IR-SDE}
    \end{minipage}
    
    \begin{minipage}{0.23\textwidth}
        \centering
        \includegraphics[width=\linewidth]{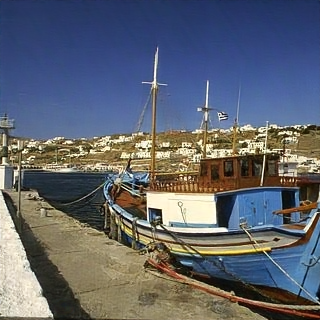}
        \caption*{ResShift}
    \end{minipage}
    \begin{minipage}{0.23\textwidth}
        \centering
        \includegraphics[width=\linewidth]{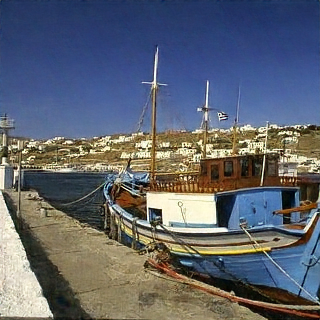}
        \caption*{InDI}
    \end{minipage}
    \begin{minipage}{0.23\textwidth}
        \centering
        \includegraphics[width=\linewidth]{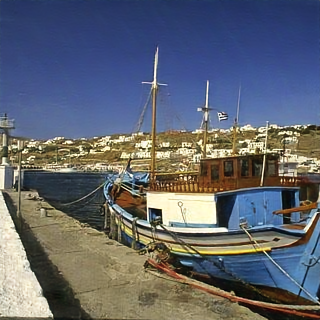}
        \caption*{BBDM}
    \end{minipage}
    \begin{minipage}{0.23\textwidth}
        \centering
        \includegraphics[width=\linewidth]{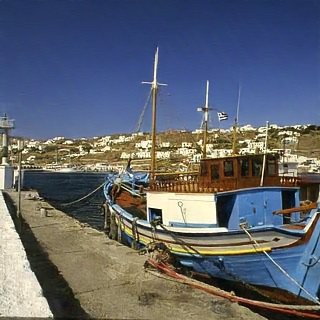}
        \caption*{I$^2$SB}
    \end{minipage}
    
    \begin{minipage}{0.23\textwidth}
        \centering
        \includegraphics[width=\linewidth]{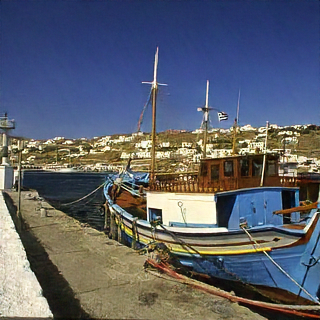}
        \caption*{DDBM-VE}
    \end{minipage}
    \begin{minipage}{0.23\textwidth}
        \centering
        \includegraphics[width=\linewidth]{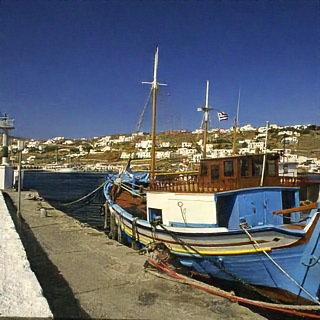}
        \caption*{DDBM-VP}
    \end{minipage}
    \begin{minipage}{0.23\textwidth}
        \centering
        \includegraphics[width=\linewidth]{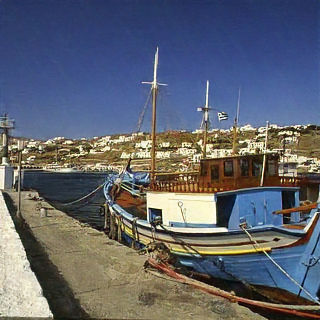}
        \caption*{GOUB}
    \end{minipage}
    \begin{minipage}{0.23\textwidth}
        \centering
        \includegraphics[width=\linewidth]{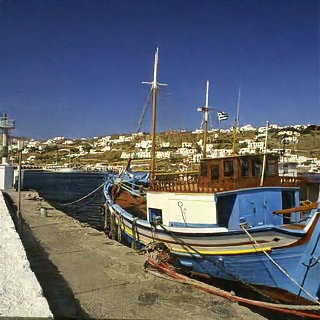}
        \caption*{UniDB}
    \end{minipage}
    
    \begin{minipage}{0.23\textwidth}
        \centering
        \includegraphics[width=\linewidth]{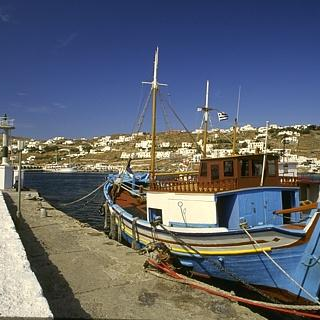}
        \caption*{Ground truth}
    \end{minipage}
    
    \caption{Visual comparison of image deraining on the Rain1400 dataset using ancestral sampling with 35 steps. All outputs appear similar, but diffusion and flow matching tend to leave slightly more visible traces of raindrops.}
    \label{fig:comparison-id-harbour}
\end{figure*}
\begin{figure*}[t]
    \centering
    \begin{minipage}{0.23\textwidth}
        \centering
        \includegraphics[width=\linewidth]{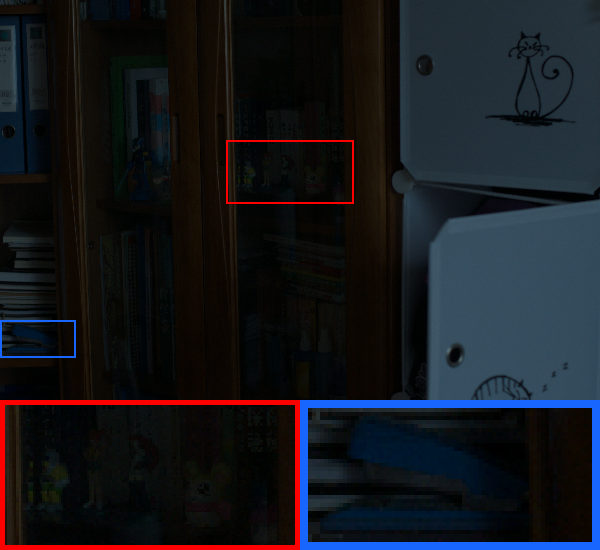}
        \caption*{Input}
    \end{minipage}
    \begin{minipage}{0.23\textwidth}
        \centering
        \includegraphics[width=\linewidth]{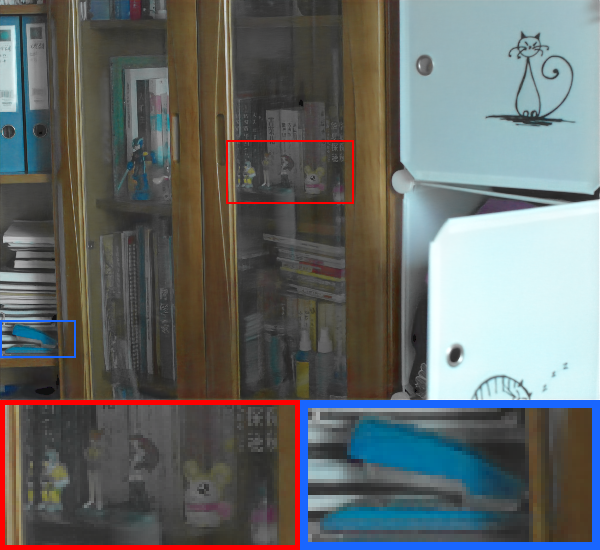}
        \caption*{DM-VP}
    \end{minipage}
    \begin{minipage}{0.23\textwidth}
        \centering
        \includegraphics[width=\linewidth]{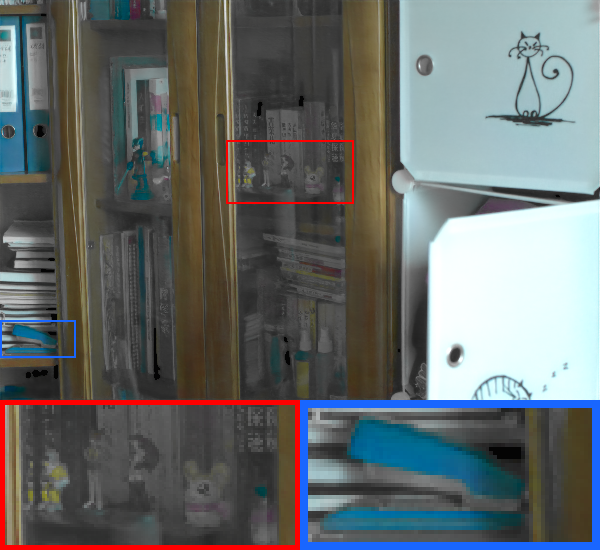}
        \caption*{FM}
    \end{minipage}
    \begin{minipage}{0.23\textwidth}
        \centering
        \includegraphics[width=\linewidth]{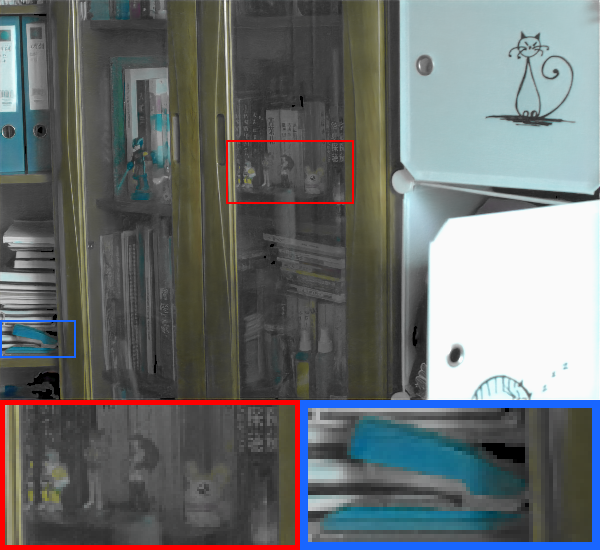}
        \caption*{IR-SDE}
    \end{minipage}
    
    \begin{minipage}{0.23\textwidth}
        \centering
        \includegraphics[width=\linewidth]{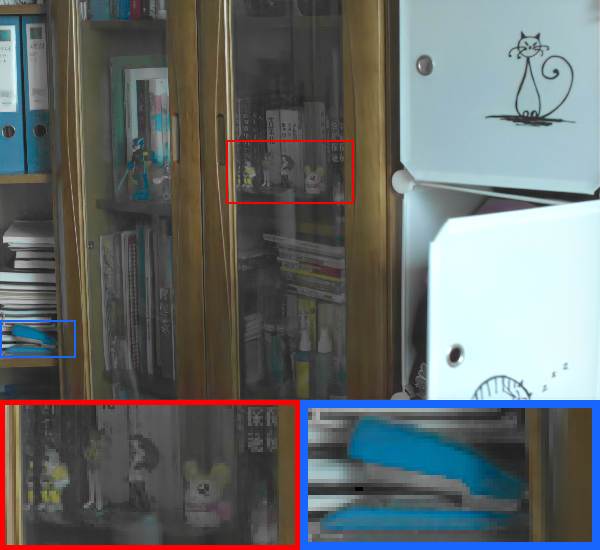}
        \caption*{ResShift}
    \end{minipage}
    \begin{minipage}{0.23\textwidth}
        \centering
        \includegraphics[width=\linewidth]{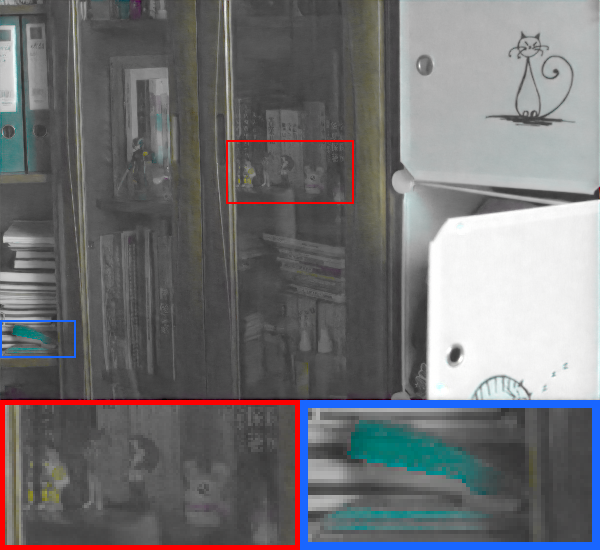}
        \caption*{InDI}
    \end{minipage}
    \begin{minipage}{0.23\textwidth}
        \centering
        \includegraphics[width=\linewidth]{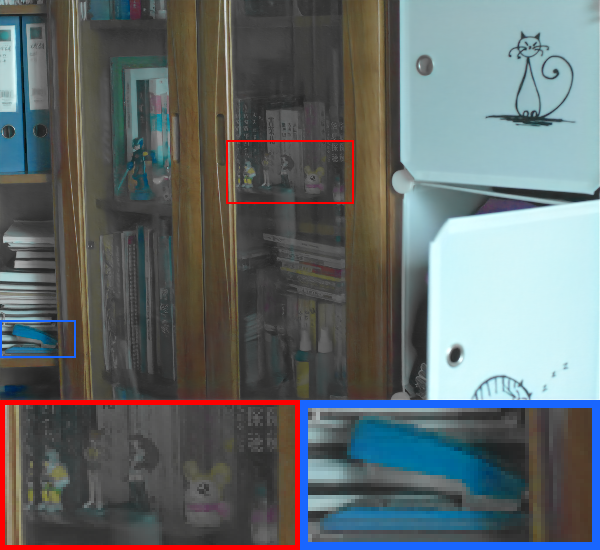}
        \caption*{BBDM}
    \end{minipage}
    \begin{minipage}{0.23\textwidth}
        \centering
        \includegraphics[width=\linewidth]{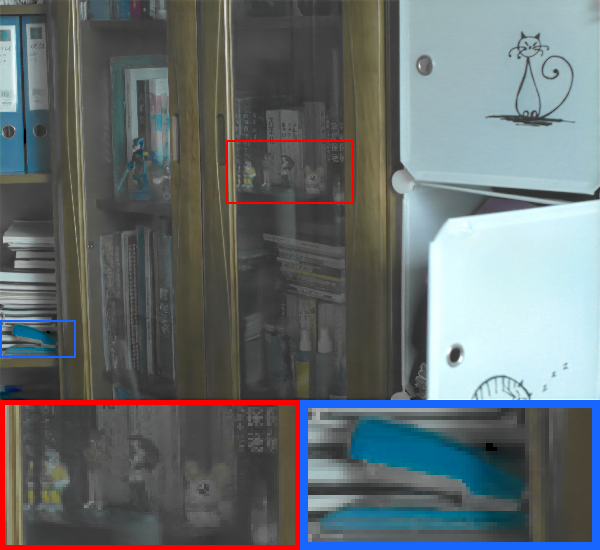}
        \caption*{I$^2$SB}
    \end{minipage}
    
    \begin{minipage}{0.23\textwidth}
        \centering
        \includegraphics[width=\linewidth]{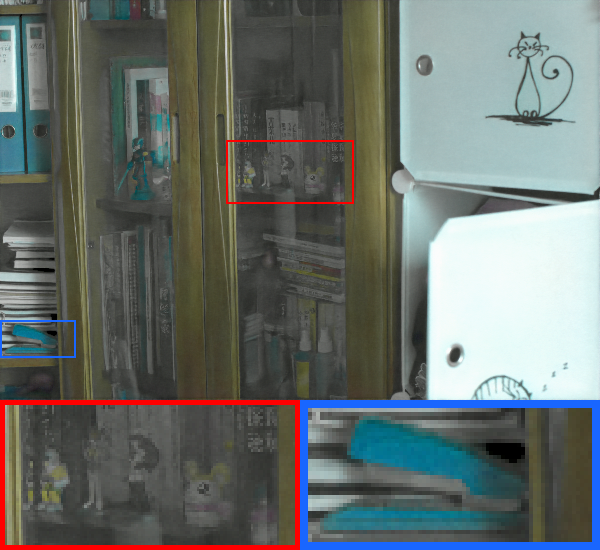}
        \caption*{DDBM-VE}
    \end{minipage}
    \begin{minipage}{0.23\textwidth}
        \centering
        \includegraphics[width=\linewidth]{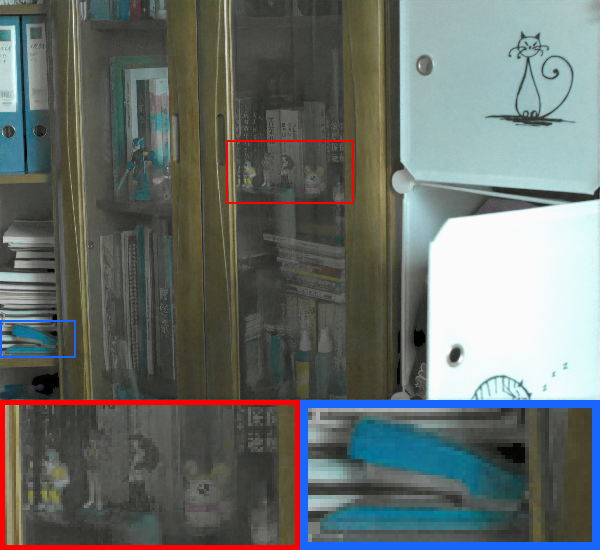}
        \caption*{DDBM-VP}
    \end{minipage}
    \begin{minipage}{0.23\textwidth}
        \centering
        \includegraphics[width=\linewidth]{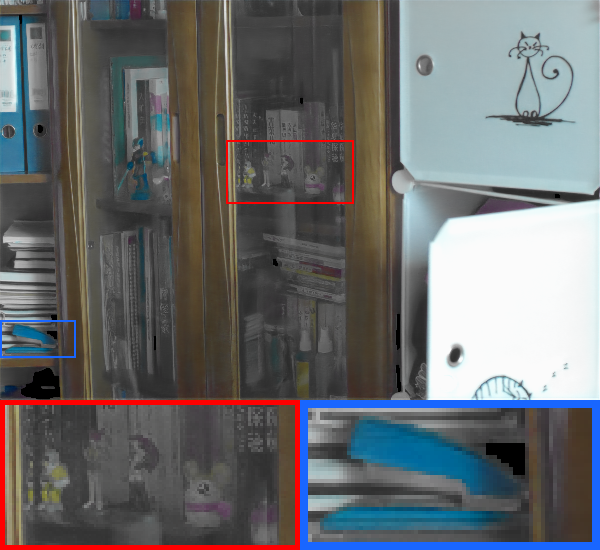}
        \caption*{GOUB}
    \end{minipage}
    \begin{minipage}{0.23\textwidth}
        \centering
        \includegraphics[width=\linewidth]{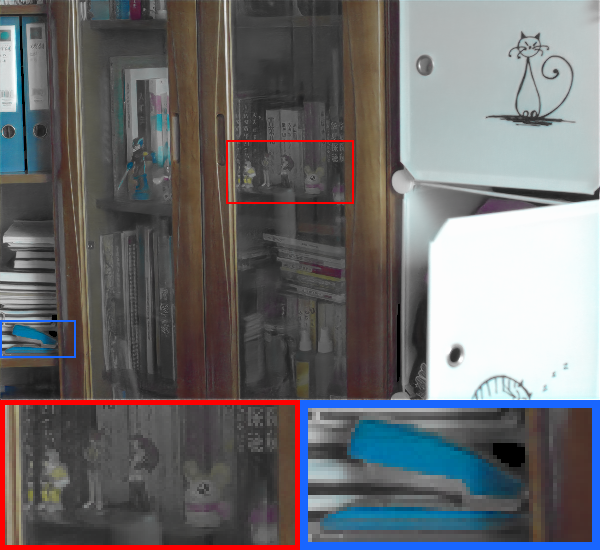}
        \caption*{UniDB}
    \end{minipage}
    
    \begin{minipage}{0.23\textwidth}
        \centering
        \includegraphics[width=\linewidth]{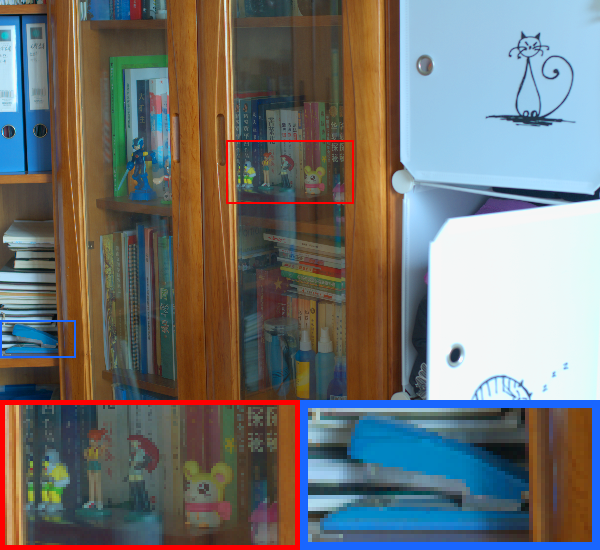}
        \caption*{Ground truth}
    \end{minipage}
    
    \caption{Visual comparison of low-light image enhancement on the LOL dataset using ancestral sampling with 35 steps. Most methods produce images of similar quality, while InDI performs noticeably worse. All methods still struggle to fully recover the ground truth.}
    \label{fig:comparison-llie-bookshelf}
\end{figure*}
\begin{figure*}[t]
    \centering
    \begin{minipage}{0.14\textwidth}
        \centering
        \caption*{Input}
    \end{minipage}
    \begin{minipage}{0.85\textwidth}
        \centering
        \includegraphics[width=\linewidth]{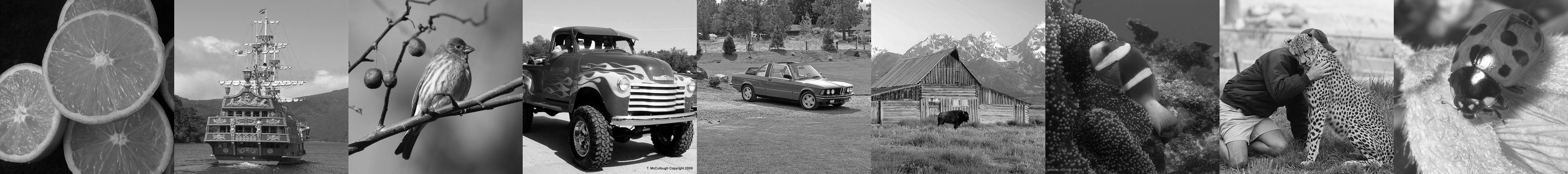}
    \end{minipage}
    
    \begin{minipage}{0.14\textwidth}
        \centering
        \caption*{DM-VP}
    \end{minipage}
    \begin{minipage}{0.85\textwidth}
        \centering
        \includegraphics[width=\linewidth]{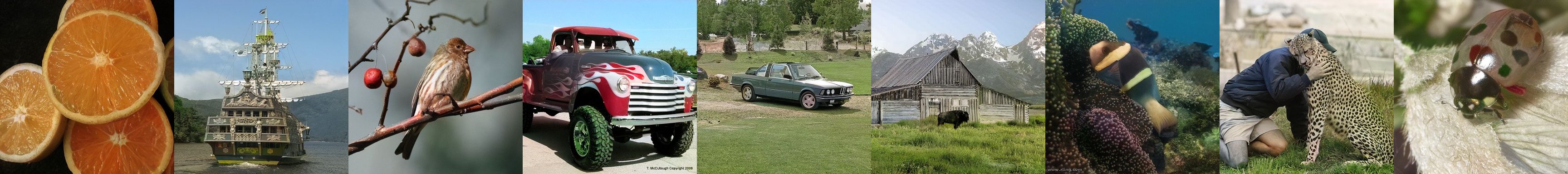}
    \end{minipage}
    
    \begin{minipage}{0.14\textwidth}
        \centering
        \caption*{FM}
    \end{minipage}
    \begin{minipage}{0.85\textwidth}
        \centering
        \includegraphics[width=\linewidth]{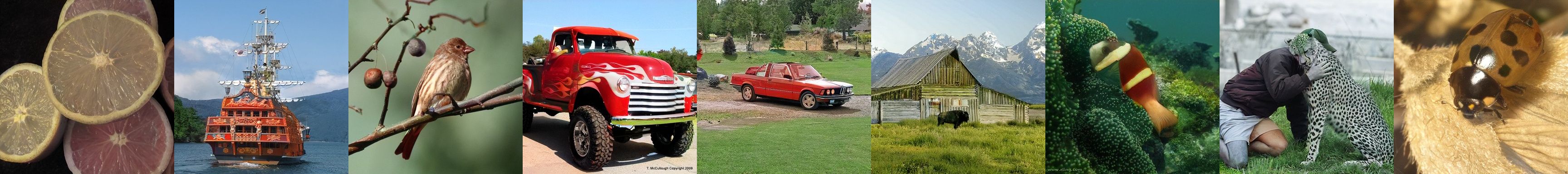}
    \end{minipage}
    
    \begin{minipage}{0.14\textwidth}
        \centering
        \caption*{IR-SDE}
    \end{minipage}
    \begin{minipage}{0.85\textwidth}
        \centering
        \includegraphics[width=\linewidth]{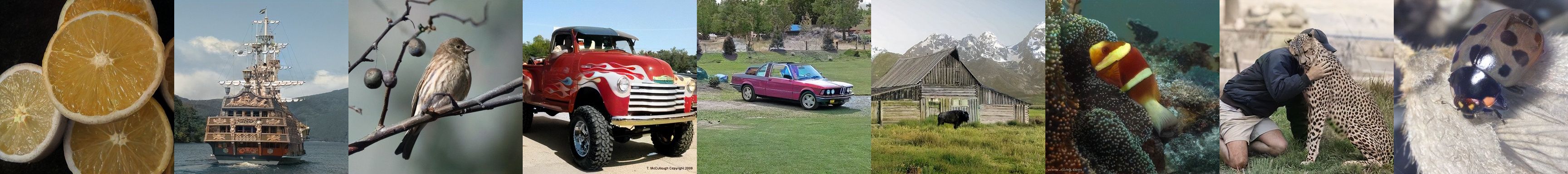}
    \end{minipage}
    
    \begin{minipage}{0.14\textwidth}
        \centering
        \caption*{ResShift}
    \end{minipage}
    \begin{minipage}{0.85\textwidth}
        \centering
        \includegraphics[width=\linewidth]{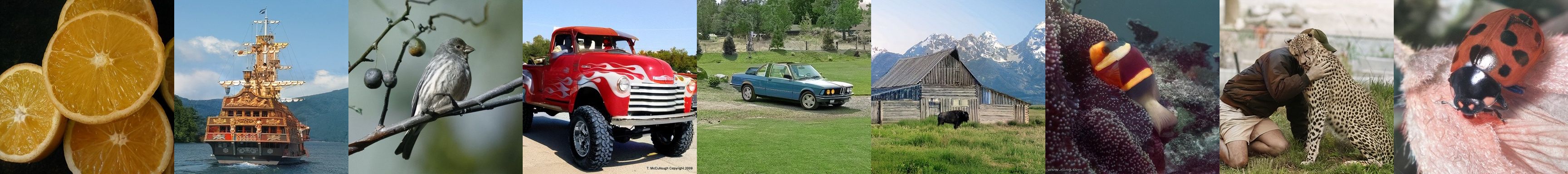}
    \end{minipage}
    
    \begin{minipage}{0.14\textwidth}
        \centering
        \caption*{InDI}
    \end{minipage}
    \begin{minipage}{0.85\textwidth}
        \centering
        \includegraphics[width=\linewidth]{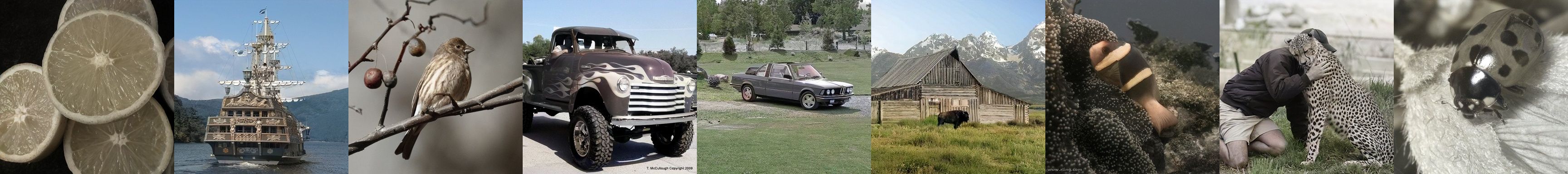}
    \end{minipage}
    
    \begin{minipage}{0.14\textwidth}
        \centering
        \caption*{BBDM}
    \end{minipage}
    \begin{minipage}{0.85\textwidth}
        \centering
        \includegraphics[width=\linewidth]{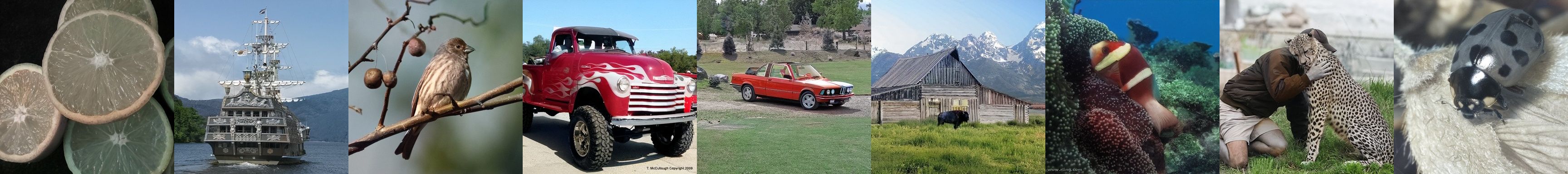}
    \end{minipage}
    
    \begin{minipage}{0.14\textwidth}
        \centering
        \caption*{I$^2$SB}
    \end{minipage}
    \begin{minipage}{0.85\textwidth}
        \centering
        \includegraphics[width=\linewidth]{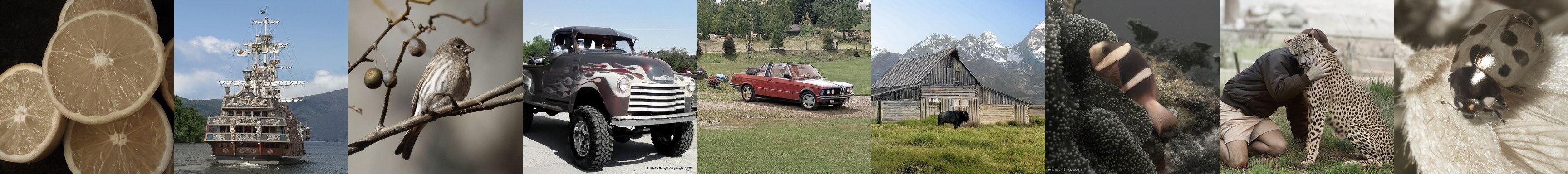}
    \end{minipage}
    
    \begin{minipage}{0.14\textwidth}
        \centering
        \caption*{DDBM-VE}
    \end{minipage}
    \begin{minipage}{0.85\textwidth}
        \centering
        \includegraphics[width=\linewidth]{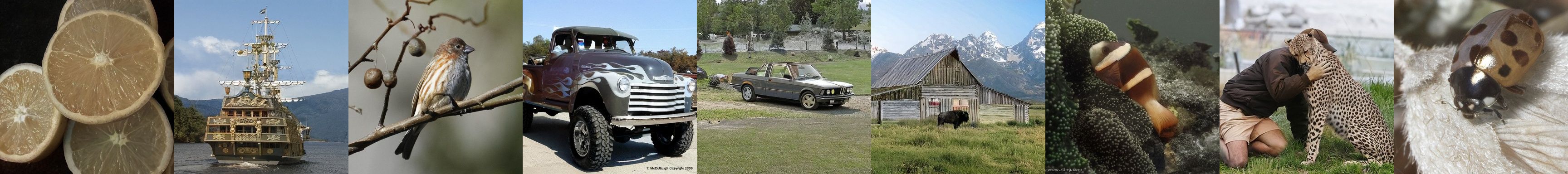}
    \end{minipage}
    
    \begin{minipage}{0.14\textwidth}
        \centering
        \caption*{DDBM-VP}
    \end{minipage}
    \begin{minipage}{0.85\textwidth}
        \centering
        \includegraphics[width=\linewidth]{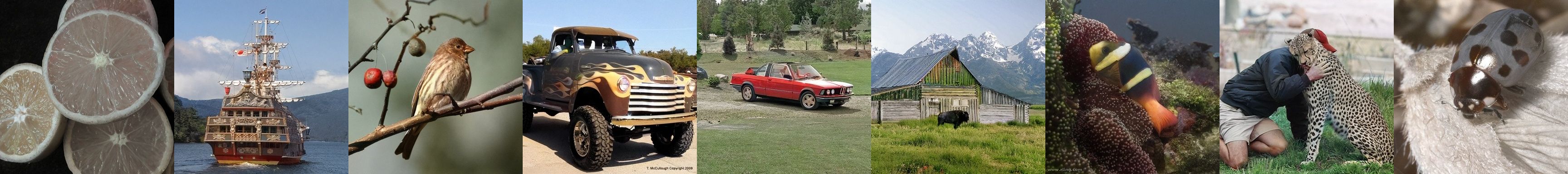}
    \end{minipage}
    
    \begin{minipage}{0.14\textwidth}
        \centering
        \caption*{GOUB}
    \end{minipage}
    \begin{minipage}{0.85\textwidth}
        \centering
        \includegraphics[width=\linewidth]{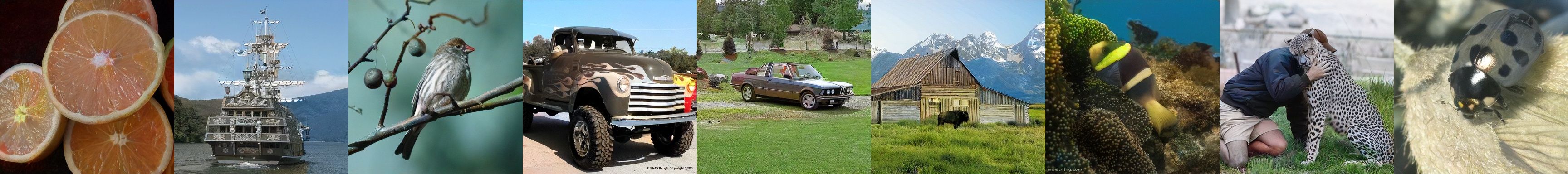}
    \end{minipage}
    
    \begin{minipage}{0.14\textwidth}
        \centering
        \caption*{UniDB}
    \end{minipage}
    \begin{minipage}{0.85\textwidth}
        \centering
        \includegraphics[width=\linewidth]{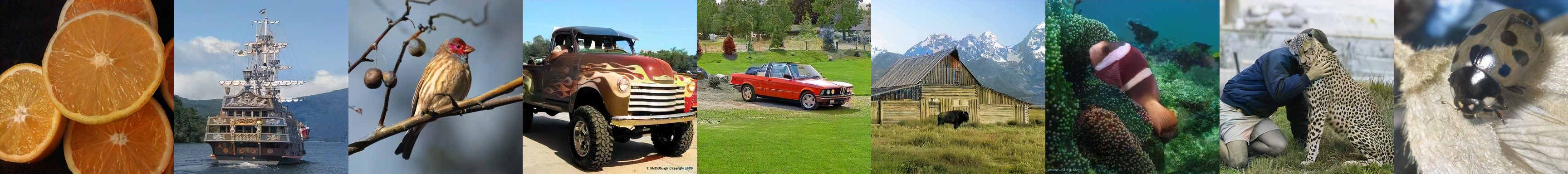}
    \end{minipage}
    
    \begin{minipage}{0.14\textwidth}
        \centering
        \caption*{Ground truth}
    \end{minipage}
    \begin{minipage}{0.85\textwidth}
        \centering
        \includegraphics[width=\linewidth]{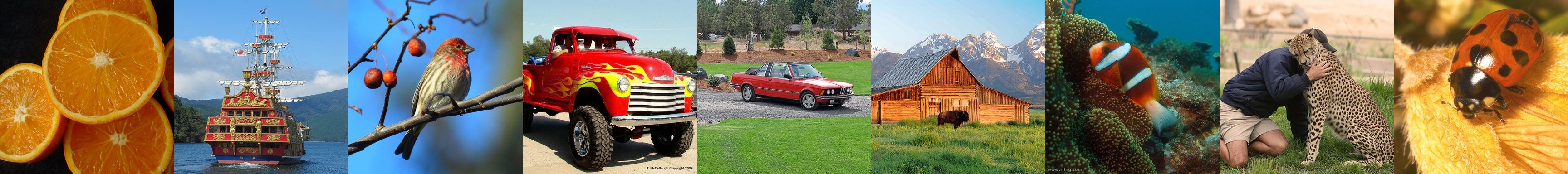}
    \end{minipage}
    
    \caption{Visual comparison of image colorization on ImageNet. Results were generated using ancestral sampling with 35 steps. In most cases, methods recover the correct canonical colors (e.g., green grass, blue sky, orange fruit). However there are some exceptions, such as DDPM producing green tires and most methods struggling with colors of the ladybug. Flow Matching and ResShift produces outputs with higher saturation.}
    \label{fig:comparison-ic-palette}
\end{figure*}

\end{document}